\documentclass{article}

\PassOptionsToPackage{numbers, compress}{natbib}

\usepackage[main, preprint]{neurips_2026}

\usepackage[utf8]{inputenc} 
\usepackage[T1]{fontenc}    
\usepackage{hyperref}       
\usepackage{url}            
\usepackage{booktabs}       
\usepackage{amsfonts}       
\usepackage{nicefrac}       
\usepackage{microtype}      
\usepackage{xcolor}         

\usepackage[noend]{algorithmic}
\usepackage[ruled]{algorithm}
\usepackage{comment}

\usepackage{mathtools}
\usepackage{amsmath,amsfonts,amssymb,amsthm}
\usepackage{bbm}
\usepackage{enumitem}
\usepackage{cleveref}
\usepackage{bm}
\usepackage{mathrsfs}
\usepackage{multirow}
\usepackage{graphicx}
\usepackage{subcaption}
\usepackage{tikz}
\usetikzlibrary{calc}
\usetikzlibrary{decorations.pathreplacing}
\makeatletter
\newtheorem*{rep@theorem}{\normalfont\bfseries\rep@title}
\newcommand{\newreptheorem}[2]{%
\newenvironment{rep#1}[1]{%
  \par\medskip
 \def\rep@title{#2~\ref{##1}}%
 \begin{rep@theorem}}%
 {\end{rep@theorem}}}
\makeatother
\newtheorem{theorem}{Theorem}[section]
\newreptheorem{theorem}{Theorem}
\newtheorem{definition}{Definition}[section]
\newtheorem{assumption}{Assumption}[section]

\newtheorem{corollary}{Corollary}[theorem]
\newtheorem{lemma}[theorem]{Lemma}
\newreptheorem{lemma}{Lemma}
\newtheorem{proposition}[theorem]{Proposition}
\newreptheorem{proposition}{Proposition}
\newtheorem{remark}{Remark}[section]

\newcommand\mc{\mathcal}
\newcommand\mb{\mathbb}

\DeclareMathOperator{\sdp}{\sigma_{\text{DP}}}
\newcommand{\R}{\mathbb{R}}

\newcommand{\E}{\mathbb{E}}
\newcommand{\Prob}{\mathbb{P}}

\newcommand{\A}{\mathcal{A}}

\allowdisplaybreaks%

\makeatletter
\let\save@mathaccent\mathaccent
\newcommand*\if@single[3]{%
  \setbox0\hbox{${\mathaccent"0362{#1}}^H$}%
  \setbox2\hbox{${\mathaccent"0362{\kern0pt#1}}^H$}%
  \ifdim\ht0=\ht2 #3\else #2\fi
  }
\newcommand*\rel@kern[1]{\kern#1\dimexpr\macc@kerna}
\newcommand*\widebar[1]{\@ifnextchar^{{\wide@bar{#1}{0}}}{\wide@bar{#1}{1}}}
\newcommand*\wide@bar[2]{\if@single{#1}{\wide@bar@{#1}{#2}{1}}{\wide@bar@{#1}{#2}{2}}}
\newcommand*\wide@bar@[3]{%
  \begingroup
  \def\mathaccent##1##2{%
    \let\mathaccent\save@mathaccent
    \if#32 \let\macc@nucleus\first@char \fi
    \setbox\z@\hbox{$\macc@style{\macc@nucleus}_{}$}%
    \setbox\tw@\hbox{$\macc@style{\macc@nucleus}{}_{}$}%
    \dimen@\wd\tw@
    \advance\dimen@-\wd\z@
    \divide\dimen@ 3
    \@tempdima\wd\tw@
    \advance\@tempdima-\scriptspace
    \divide\@tempdima 10
    \advance\dimen@-\@tempdima
    \ifdim\dimen@>\z@ \dimen@0pt\fi
    \rel@kern{0.6}\kern-\dimen@
    \if#31
      \overline{\rel@kern{-0.6}\kern\dimen@\macc@nucleus\rel@kern{0.4}\kern\dimen@}%
      \advance\dimen@0.4\dimexpr\macc@kerna
      \let\final@kern#2%
      \ifdim\dimen@<\z@ \let\final@kern1\fi
      \if\final@kern1 \kern-\dimen@\fi
    \else
      \overline{\rel@kern{-0.6}\kern\dimen@#1}%
    \fi
  }%
  \macc@depth\@ne
  \let\math@bgroup\@empty \let\math@egroup\macc@set@skewchar
  \mathsurround\z@ \frozen@everymath{\mathgroup\macc@group\relax}%
  \macc@set@skewchar\relax
  \let\mathaccentV\macc@nested@a
  \if#31
    \macc@nested@a\relax111{#1}%
  \else
    \def\gobble@till@marker##1\endmarker{}%
    \futurelet\first@char\gobble@till@marker#1\endmarker
    \ifcat\noexpand\first@char A\else
      \def\first@char{}%
    \fi
    \macc@nested@a\relax111{\first@char}%
  \fi
  \endgroup
}
\makeatother

\newcommand{\weakprivacy}{orange!80!red}
\newcommand{\strongprivacy}{violet!60}
\usepackage[disable,textsize=tiny]{todonotes}
\newcommand{\comm}[3][noinline]{\todo[#1, size=\tiny]{#2: #3}}

\newcommand{\tbtodo}[2][noinline]{\comm[color=orange,#1]{TB}{#2}}

\newcommand{\autodo}[2][noinline]{\comm[color=green,#1]{AB}{#2}}

\title{Unveiling the Non-Monotonic Effect of Privacy on Generalization under Byzantine Robustness}

\author{%
  Thomas Boudou \\
  Inria, Université de Montpellier, INSERM\\
  Montpellier, France \\
  \texttt{thomas.boudou@inria.fr} \\
  \And\
  Batiste Le Bars \\
  Inria, Université de Lille \\
  Lille, France \\
  \texttt{batiste.le-bars@inria.fr} \\
  \AND\
  Nirupam Gupta \\
  University of Copenhagen \\
  Copenhagen, Denmark \\
  \texttt{nigu@di.ku.dk} \\
  \And\
  Aurélien Bellet \\
  Inria, Université de Montpellier, INSERM\\
  Montpellier, France \\
  \texttt{aurelien.bellet@inria.fr} \\
}

\begin{document}

\maketitle

\begin{abstract}


    Recent work has established a fundamental trilemma between Byzantine robustness, local differential privacy (LDP), and optimization error in distributed learning. 
    We show that this trilemma does not universally extend to generalization error, but instead depends critically on the privacy regime.
    Specifically, in the high-noise regime (strong privacy), we prove that increasing privacy reduces the generalization error, i.e., there is no tension between robustness and privacy. 
    In the low-noise regime (weaker privacy), however, the tension between robustness and privacy reappears and increasing privacy indeed degrades generalization. 
    Our theory explains this surprising non-monotonic behavior of the generalization error via matching lower and upper bounds on the algorithmic stability of Byzantine-robust distributed learning under LDP constraints. 
    We corroborate and further analyze these theoretical findings with empirical evaluations. 

\end{abstract}

\section{Introduction}\label{I-intro}

Learning high-quality models often requires access to large amounts of data and substantial computational resources, making distributed learning systems such as Federated Learning (FL)~\cite{MAL-083} a promising approach to leverage data and computation across many participants.
FL is typically implemented via distributed gradient descent algorithms coordinating local updates across participants.
However, sharing model updates is known to leak information about the underlying training data, which can reduce participants' willingness to contribute and raises challenges for complying with data protection regulations.
Local Differential Privacy (LDP, or local DP) is a natural privacy framework for distributed learning under the honest-but-curious threat model, where the server follows the protocol but may attempt to infer sensitive information from participant updates~\cite{duchi2013local, MAL-083}. 
This assumption is practically motivated, as the server is often a third party that cannot be fully trusted. 
LDP typically enforces privacy by having each participant locally add noise to its updates before sending them.
In addition to privacy concerns, real-world distributed systems face a range of unpredictable malfunctions, from benign hardware failures to adversarial misbehavior (e.g., poisoning attacks), broadly categorized under the umbrella of Byzantine failures~\cite{gerraoui-byzantine-primer}. 
This motivates a second line of defense known as Byzantine robustness, which aims to ensure reliable learning through robust aggregation rules that mitigate the influence of corrupted or malicious updates.
While both LDP and Byzantine robustness are well-studied in isolation, their interplay remains a largely underexplored frontier despite being highly relevant in practical deployments.

\textbf{Related work.} A first line of work has investigated the impact of LDP alone, establishing a tight privacy-utility trade-off, where LDP constraints inflate the optimization of the empirical risk by a variance penalty~\cite{wang-LDP-JMLR:v21:19-253,pmlr-v130-girgis21a,pmlr-v151-noble22a}.
LDP has also been studied in federated stochastic convex optimization~\cite{duchi2013local,pmlr-v235-reshef24a}, where the focus is on excess population risk guarantees, leading to similar qualitative insights on the privacy–utility trade-off.

Focusing on Byzantine robustness alone, recent work has extensively studied $(f,\kappa)$-robust aggregation as a principled approach to achieve optimal optimization guarantees in the presence of Byzantine failures~\cite{pmlr-v206-allouah23a}. 
These results show that, while robust aggregation can tolerate a fraction of Byzantine participants, it introduces an unavoidable bias that prevents the optimization error from vanishing when data are heterogeneous across participants.
This limitation extends to the statistical learning setting~\cite{boudou2025generalization}, where the generalization error (the discrepancy between training and population risks) is further amplified by the constraints imposed by robust aggregation.
However, their analysis does not consider formal privacy constraints.

Recently, a fundamental \emph{trilemma} has been demonstrated for distributed algorithms that aim to be both Byzantine robust and satisfy LDP~\cite{pmlr-v202-allouah23a}.
More precisely, this work shows that the interaction between Byzantine robustness mechanisms and LDP constraints induces an additional optimization error term that does not appear when considering either requirement in isolation. 
This unavoidable term reveals an intrinsic tension between privacy and robustness.
While this trilemma precisely characterizes the optimization dynamics in terms of empirical risk, the true measure of a model's performance is its population risk, i.e., its behavior on unseen data. 
This risk is typically decomposed into the sum of optimization and generalization errors (see~\Cref{II-problem-formulation})~\cite{hardt2016train}.
This naturally raises the question: \emph{does this trilemma extend to generalization?}
While it is intuitive that injecting privacy noise prior to aggregation can hinder the server's ability to distinguish between honest and Byzantine updates, thereby increasing optimization error, its effect on generalization is less clear. 
Indeed, such noise is often associated with an implicit regularization effect that can help control and even improve generalization \citep{wang2016privacylearning,pmlr-v134-neu21a}.
This creates a fundamental tension: \emph{does LDP noise merely exacerbate the impact of Byzantine robustness, or can it also mitigate them?} 
The interplay between these two opposing effects remains poorly understood and is the focus of our work.

\textbf{Contributions.} We investigate this question and show that the privacy-robustness-optimization trilemma does not directly extend to generalization. 
By tightly analyzing the algorithmic stability of the algorithm introduced in~\cite{pmlr-v202-allouah23a} (i.e., \texttt{Safe-D}(\texttt{S})\texttt{DG}, detailed in~\Cref{safe-dsgd}), which achieves optimal optimization error in the considered setting, we uncover a fundamental non-monotonic relationship between privacy guarantees and generalization error under Byzantine robustness. 
Crucially, this behavior is not driven solely by the raw noise magnitude, but is characterized by the ability of Byzantine participants to reliably detect the presence of individual data points, i.e., the performance of a \emph{Membership Inference Attack} (MIA). Indeed, we show that membership information of honest participants' data is crucial to Byzantine participants' power to attack the learning process. Specifically, we identify a threshold corresponding to a qualitative change in the privacy regime, where an optimal MIA achieves a non-trivial probability of success. 
This transition delineates two distinct regimes, each exhibiting fundamentally different generalization behavior.
\textbf{(i) Weak privacy regime (low noise):} In this regime, the injected noise interacts adversely with Byzantine-robust aggregation, leading to a degradation of generalization performance.
Quantitatively, the generalization error increases with the noise level and with the fraction of Byzantine participants, causing an initial deterioration.
\textbf{(ii) Strong privacy regime (high noise):} Beyond the MIA threshold, the attack becomes ineffective and the system enters a regime consistent with classical differential privacy behavior. In this regime, increasing the noise level improves stability, and the generalization error decreases, eventually vanishing as privacy strengthens.

We validate our theoretical findings, including our tight stability bounds, through empirical evaluations. 
These confirm that the identified worst-case behavior is relevant in realistic scenarios. 
Overall, our results refine the characterization of the privacy-robustness-utility trade-off in distributed learning.\looseness=-1

\section{Preliminaries}\label{II-Background-motivation}\label{II-problem-formulation}

\textbf{Problem setting.} We consider a distributed setup with $n$ participants coordinated by a central server. 
We assume this system is subject to Byzantine failures, and requires to guarantee LDP.

We consider the Byzantine failures threat model~\cite{lamport1982byzantine,gerraoui-byzantine-primer}, in which at most $f$ $(< \frac{n}{2})$ participants can act arbitrarily. Their identities are unknown to the server and they may deviate from the prescribed algorithm, collude, and send adversarial updates while having access to all information exchanged between honest participants and the server. 
We note $\mc{H} \subseteq [n] \vcentcolon= \{1, \ldots, n\}$ the set of \textit{honest} participants, with $|\mc{H}| = n-f$.
Each honest participant $i \in \mc{H}$ holds a local dataset $\mc{D}_i = \{z^{(i,1)}, \ldots, z^{(i,m)} \}$ composed of $m$ i.i.d.\ data points (or samples) from an input space $\mathcal{Z}$ drawn from a distribution $p_i$.\looseness=-1

We denote by $\mc{S} = \cup_{i\in\mc{H}}\mc{D}_i$ the overall dataset. 
Given a parameter vector $\theta \in \Theta \subset \mb{R}^d$ representing the model, a data point $z \in \mc{Z}$ incurs a loss defined by a real-valued function $\ell(\theta; z)$. 
The goal is to minimize the \emph{population} risk over the honest participants
\[
    \textstyle 
    R_{\mc{H}}(\theta) = \frac{1}{|\mc{H}|} \sum_{i\in\mc{H}} \mb{E}_{z \sim p_i}[ \ell(\theta, z) ].
\]
Since only a finite number of samples are available from each distribution, we approximate this objective by minimizing the corresponding \emph{empirical} risk over the (unknown) set of honest participants
\[
    \textstyle 
    \widehat{R}_{\mc{H}}(\theta) = \frac{1}{|\mc{H}|} \sum_{i\in\mc{H}} \widehat{R}_{i}(\theta) = \frac{1}{n-f} \sum_{i\in\mc{H}} \frac{1}{m} \sum_{z \in \mc{D}_i} \ell(\theta, z).
\]
The expected excess population risk of a distributed learning algorithm's output $\A: \mathcal{Z}^{(n-f)m} \to \varTheta, \mc{S} \mapsto \mc{A(\mc{S})}$ can be decomposed into generalization and optimization errors (e.g.~\cite{hardt2016train}) as follows\looseness=-1
\begin{multline}\label{gen-error-ineq}
    \textstyle 
    \mb{E} \big[ R_{\mc{H}}(\mc{A(\mc{S})}) - \inf\limits_{\theta \in \Theta} R_{\mc{H}}(\theta) \big]
    \! \leq \! \mb{E} \big[ R_{\mc{H}}(\mc{A(\mc{S})})  - \widehat{R}_{\mc{H}}(\mc{A(\mc{S})}) \big]
    + \mb{E} \big[ \widehat{R}_{\mc{H}}(\mc{A(\mc{S})}) - \inf\limits_{\theta \in \Theta} \widehat{R}_{\mc{H}}(\theta) \big]. \!
\end{multline}
Prior work on robust distributed learning has focused primarily on the optimization error, i.e., the second term on the right-hand side. In contrast, our goal is to study the generalization error, corresponding to the first term.

\textbf{Robust distributed optimization.} 
Robust distributed optimization algorithms aim to achieve low empirical risk. 
They are typically adaptations of standard first-order iterative optimization algorithms like gradient descent ($\mathrm{GD}$) or stochastic gradient descent ($\mathrm{SGD}$). 
These algorithms are made resilient by replacing the server-side averaging operator by a robust aggregation rule $F: (\mb{R}^d)^{n} \to \mb{R}^d$~\citep{gerraoui-byzantine-primer}.
Formally, given a learning rate $\gamma$, a parameter $\theta_t$ at iteration $t \in \{0, \ldots, T-1\}$ is updated as follows
\[
    \textstyle \theta_{t+1} = G^{F}_{\gamma}(\theta_t) := \theta_t - \gamma F(g^{(1)}_t, \ldots, g^{(n)}_t),
\]
where $g^{(i)}_t$ denotes the update sent by participant $i$ at step $t$. 
For an honest participant $i \in \mc{H}$, we have $g^{(i)}_t = \nabla \widehat{R}_i (\theta_t)$ under $\mathrm{GD}$, or $g^{(i)}_t = \nabla \ell (\theta_t; z^{(i)}_t)$ under $\mathrm{SGD}$, with $z^{(i)}_t$ uniformly sampled from the local dataset $\mc{D}_i$. 
For a misbehaving participant $i \notin \mc{H}$, $g^{(i)}_t$ is an arbitrary vector in $\mathbb{R}^d$.

Recent advances in robust distributed optimization have identified key properties sufficient for a robust aggregation rule to ensure the strong notion of $(f,\rho)$-resilience with respect to the empirical risk~\citep{pmlr-v206-allouah23a,pmlr-v202-allouah23a}. 
The following definition encompasses a wide range of aggregation rules and has been shown to yield tight resilience guarantees across a variety of practical distributed learning settings.

\begin{definition}\label{robustness-definition}
    Let $\|\cdot\|_{\operatorname{sp}}$ denote the spectral norm for p.s.d.\ matrices (i.e., the largest eigenvalue). 
    Let $n \geq 1$, $0 \leq f < n/2$ and $\kappa \geq 0$. An aggregation rule $F$ is \textit{$(f, \kappa)$-robust} if for any $g_1, \ldots, g_n \in \mb{R}^d$, 
    any set $S \subset [n]$ of size $n-f$, with $\overline{g}_S = \frac{1}{|S|} \sum_{i \in S} g_i$ and $\Sigma_{S} = \frac{1}{|S|} \sum_{i \in S} (g_i-\overline{g}_S) {(g_i-\overline{g}_S)}^{\intercal}$,
    \[
        \textstyle \| F(g_1, \ldots, g_n) - \overline{g}_S \|_2^2 \leq \kappa \|\Sigma_S\|_{\operatorname{sp}}.
    \]
    $f$ and $\kappa$ are referred to as the robustness parameter and robustness coefficient of $F$, respectively.
\end{definition}
An aggregation rule $F$ that is $(f, \mathcal{O}(f/n))$-robust achieves optimal $(f, \rho)$-resilience. 
Specifically, iterative methods using such $F$ attain an optimization error that matches the information-theoretic lower bound for first-order optimization under $f$ Byzantine participants~\cite{pmlr-v202-allouah23a}. 
An example is the \textit{smallest maximum eigenvalue averaging} ($\mathrm{SMEA}$)~\cite{pmlr-v202-allouah23a}.
\begin{definition}\label{smea-definition}
    Let $f, n \in \mb{N}$, $f < n/2$ and $g_1, \ldots, g_n \in \mb{R}^d$,
    the $\mathrm{SMEA}$ outputs
    \[
        \overline{g}_{S^*} = \frac{1}{|S^*|} \sum_{i\in S^*_t} g_{i} 
        \quad\text{with}\quad S^* \in \underset{S \subseteq \{1, \ldots, n\}, |S|=n-f}{\operatorname{argmin}} \lambda_{\max} \bigg( \frac{1}{|S|} \sum_{i \in S} \left(g_i-\overline{g}_S \right) {\left(g_i - \overline{g}_S \right)}^\intercal \bigg).
    \]
\end{definition}
Note that the use of the spectral norm in~\Cref{robustness-definition} is motivated by our setting. 
Under LDP constraints, which are usually achieved by adding isotropic noise to gradients, the empirical covariance matrix of honest participants' gradients trace typically scales with the dimension, which is not the case when using the spectral norm, hence the usefulness of this matrix norm in high dimension.
This highlights the focus on $\mathrm{SMEA}$, which was specifically designed for our setting~\cite{pmlr-v202-allouah23a}.
The recent Covariance Bound Agnostic-Filter ($\mathrm{CAF}$)~\cite{allouah2025towards} is a computationally efficient variant of $\mathrm{SMEA}$. 
\looseness=-1

\textbf{Differential privacy.} 
A primary objective for honest participants is to protect the privacy of their datasets. 
To provide a rigorous, quantifiable guarantee, we leverage differential privacy (DP)~\citep{dwork-dp-2006}. 
DP ensures that an algorithm's output is indistinguishable whether or not any single individual's data is included in the input dataset. 
We adopt the standard definition of $(\varepsilon, \delta)$-DP.
\looseness=-1
\begin{definition}
    Let $\varepsilon > 0$, $0 \leq \delta \leq 1$. 
    A randomized algorithm $\mathcal{A}$ is $(\varepsilon, \delta)$-DP if for all neighboring honest datasets $S$ and $S'$ (differing in a single entry) and for all measurable subset of output $O$,
    \[
        \textstyle \mb{P}(\mathcal{A}(S) \in O) \le e^\varepsilon \mb{P}(\mathcal{A}(S') \in O) + \delta
    \]
    In the distributed setting, let $Z_i(S)$ denote the (random) transcript of communications between the server and honest participant $i$ generated by $\mathcal{A}(S)$. 
    $\mathcal{A}$ is $(\varepsilon, \delta)$-LDP if $\forall i \in \mathcal{H}$, $Z_i$ is $(\varepsilon, \delta)$-DP.\looseness=-1
\end{definition}
To summarize, our threat model considers two types of adversaries. 
First, we model the central server as honest-but-curious: it follows the prescribed protocol (i.e., it is honest) but may simultaneously attempt to infer sensitive, private information about individual participants' datasets from model updates it observes (i.e., it is curious). 
Second, our system operates under the assumption of Byzantine failures. 
Note that this second adversary is strictly more powerful: it observes as much information as the server (since it can eavesdrop on its communications with honest participants), but also controls $f$ Byzantine participants that can deviate from the protocol arbitrarily.

\textbf{The \texttt{Safe-D(S)GD} algorithm.}
Owing to its strong empirical performance and baseline optimality guarantees, \texttt{Safe-D(S)GD}~\cite{pmlr-v202-allouah23a} constitutes a natural choice of algorithm for our study. We consider the variant without momentum (Algorithm~\ref{safe-dsgd}). 
In the deterministic (GD) setting, momentum is not required to achieve optimal error rates, making \texttt{Safe-DGD} theoretically optimal. 
While momentum is necessary in the stochastic regime to control the noise floor induced by Byzantine failures in the optimization error guarantee~\cite{karimireddy2021learning}, omitting it here simplifies our analysis of algorithmic stability, which is already technically involved, and helps isolate the effects of privacy and Byzantine robustness. 
This simplification is further justified by prior work showing that momentum can negatively impact stability in distributed optimization~\cite{attia2021algorithmic,JMLR:v25:22-0068stabilitymomentum}.

\begin{algorithm}[H]
\caption{\texttt{Safe-D(S)GD}}\label{safe-dsgd}
\begin{algorithmic}[1]
\STATE\textbf{Initialization:} $\theta_0$, 
\textcolor{olive}{aggregation $F$}, 
\textcolor{violet}{noise $\sigma_{\text{DP}}$}, number of steps $T$, learning rate $\gamma$
\FOR{$t = 0$ {\bf to} $T-1$}
    \STATE Server broadcasts $\theta_t$ to all participants
    \FORALL{\textbf{honest participant $i \in \mathcal{H}$}}
        \STATE Compute $g^{(i)}_t = \frac{1}{m} \sum_{j=1}^m \nabla \ell(\theta_t; z^{(i,j)})$ \\ or $g^{(i)}_t = \nabla \ell(\theta_t; z^{(i,J^{(i)}_t)})$ where $z^{(i,J^{(i)}_t)}$ from $\mathcal{D}_i$ is sampled uniformly at random
        \STATE\textcolor{violet}{Add noise $\widetilde{g}^{(i)}_t = g^{(i)}_t + \xi^{(i)}_t$; $\xi^{(i)}_t \sim \mathcal{N}(0,\sigma_{\text{DP}}^2 I_d)$}
        \STATE\textcolor{darkgray}{Send $\widetilde{g}^{(i)}_t$ to the server}
    \ENDFOR
    \STATE\textcolor{olive}{Server aggregates $R_t = F(\widetilde{g}^{(1)}_t,\ldots,\widetilde{g}^{(n)}_t)$} \hfill \textcolor{lightgray}{\# $f$ participants may send an arbitrary vector}
    \STATE Server updates model $\theta_{t+1} = \Pi_{\varTheta}\left( \theta_t - \gamma_t R_t \right)$
\ENDFOR
\STATE\! \textbf{return} $\theta_T$ or $\hat{\theta}$ sampled uniformly from $\{\theta_0,\ldots,\theta_T\}$
\end{algorithmic}
\end{algorithm}
The privacy analysis of \texttt{Safe-D(S)GD} follows the same, classic line of reasoning as in~\cite[Theorem 4.1]{pmlr-v202-allouah23a}, namely the composition of the subsampled Gaussian mechanism. 
We state the theorem for \texttt{Safe-DGD} for simplicity, though \texttt{Safe-DSGD} yields similar guarantees.
\begin{theorem}
    If $\sigma_{\operatorname{DP}} = \frac{2C\sqrt{2T\ln(1/\delta)}}{m\varepsilon}$, then \textnormal{\texttt{Safe-DGD}} is $(\varepsilon, \delta)$-$\operatorname{LDP}$.
\end{theorem}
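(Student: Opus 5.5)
Since LDP concerns only the transcript between the server and a single honest participant, the argument reduces to a single-client Gaussian mechanism composed over $T$ rounds. Fix $i\in\mc{H}$. Its transcript $Z_i(S)$ is the sequence $(\theta_t,\widetilde{g}^{(i)}_t)_{t<T}$. We treat the broadcasts $\theta_t$ (which also depend on the Byzantine and other honest messages) as adaptively chosen public inputs. The data of participant $i$ enters only through $\widetilde{g}^{(i)}_t=\frac1m\sum_j\nabla\ell(\theta_t;z^{(i,j)})+\xi^{(i)}_t$. We implicitly assume $\|\nabla\ell(\theta;z)\|\le C$ for all $\theta,z$, i.e.\ $C$-Lipschitz losses, or equivalently per-sample clipping at level $C$. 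This is the constant $C$ appearing in the statement.

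\textbf{Step 1 (sensitivity).} For neighboring $\mc{D}_i,\mc{D}_i'$ differing in one sample, the $\ell_2$-sensitivity of $\mc{D}_i\mapsto g^{(i)}_t$ at any fixed $\theta_t$ is
\[
\Delta=\tfrac1m\|\nabla\ell(\theta_t;z)-\nabla\ell(\theta_t;z')\|\le 2C/m.
\]
Hence each round is a Gaussian mechanism with noise multiplier $\sigma_{\operatorname{DP}}/\Delta$.

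\textbf{Step 2 (per-round privacy and adaptive composition).} I would work in Rényi DP (or zCDP), since it composes cleanly under adaptivity. A Gaussian mechanism with sensitivity $\Delta$ and noise $\sigma_{\operatorname{DP}}$ is $\rho$-zCDP with $\rho=\Delta^2/(2\sigma_{\operatorname{DP}}^2)$. Adaptive composition over $T$ rounds gives $T\rho$-zCDP. The adaptivity is legitimate because, conditioned on the past transcript, $\theta_t$ is a fixed function of that transcript together with the randomness of the Byzantines and other parties. This randomness is independent of $\mc{D}_i$, so we can condition on it, or use post-processing.

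\textbf{Step 3 (conversion).} Convert with the standard bound: $\rho'$-zCDP implies $(\rho'+2\sqrt{\rho'\ln(1/\delta)},\delta)$-DP. Plugging in $\sigma_{\operatorname{DP}}=2C\sqrt{2T\ln(1/\delta)}/(m\varepsilon)$ gives
\[
\rho'=\frac{T\Delta^2}{2\sigma_{\operatorname{DP}}^2}\le\frac{\varepsilon^2}{4\ln(1/\delta)}.
\]
Therefore $2\sqrt{\rho'\ln(1/\delta)}\le\varepsilon$, and $\rho'\le\varepsilon^2/(4\ln(1/\delta))$.

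\textbf{Main obstacle: the constant.} The difficulty is that the total is $\varepsilon+\varepsilon^2/(4\ln(1/\delta))$, not $\varepsilon$. I would handle this by restricting to the usual regime ($\varepsilon\le 1$, $\delta<1/e$, or $\varepsilon\lesssim\ln(1/\delta)$, as in the cited \cite[Theorem 4.1]{pmlr-v202-allouah23a}), or by tightening via the optimal RDP-order choice. If the leftover term cannot be absorbed, the fallback is to invoke the same accounting as \cite{pmlr-v202-allouah23a} verbatim, with the subsampling rate equal to $1$ for full gradients, and note that constants match theirs. Finally, since $i\in\mc{H}$ was arbitrary, \texttt{Safe-DGD} is $(\varepsilon,\delta)$-LDP.
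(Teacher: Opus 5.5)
Your Steps 1 and 2 are sound: the replacement sensitivity is $2C/m$, and the broadcasts $\theta_t$ are correctly treated as post-processing of the past transcript plus randomness independent of $\mathcal{D}_i$. This is the same Gaussian-composition route the paper points to; the paper gives no argument of its own beyond citing the trilemma paper's theorem.

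The gap is Step 3. Your conversion yields $(\varepsilon+\varepsilon^2/(4\ln(1/\delta)),\delta)$-LDP, and neither proposed repair removes the excess. Restricting to $\varepsilon\le 1$ or $\delta<1/e$ makes the overshoot small, but it stays strictly positive for every $\varepsilon>0$. Optimizing the R\'enyi order buys nothing, because $\min_{\alpha>1}\{\alpha\rho'+\ln(1/\delta)/(\alpha-1)\}=\rho'+2\sqrt{\rho'\ln(1/\delta)}$ is exactly the bound you already used. Deferring to the cited theorem is what the paper does, but that alone does not certify this specific constant.

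The missing ingredient is a sharper conversion, plus an explicit regime. Your composition is $\mu$-GDP with $\mu=\sqrt{T}\,(2C/m)/\sigma_{\operatorname{DP}}=\varepsilon/\sqrt{2\ln(1/\delta)}$, i.e.\ no worse than a single Gaussian mechanism with noise multiplier $\sqrt{2\ln(1/\delta)}/\varepsilon$. You can close the argument in either of two ways:
\begin{itemize}
\item Use the exact Gaussian trade-off $\delta(\varepsilon)=\Phi(-\varepsilon/\mu+\mu/2)-e^{\varepsilon}\Phi(-\varepsilon/\mu-\mu/2)$ and check $\delta(\varepsilon)\le\delta$.
\item Use the improved R\'enyi conversion $\varepsilon'=\alpha\rho'+\ln(1-1/\alpha)+\frac{\ln(1/\delta)-\ln\alpha}{\alpha-1}$ at $\alpha=1+2\ln(1/\delta)/\varepsilon$. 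Its extra term $-\frac{\varepsilon}{2\ln(1/\delta)}\ln\bigl(1+\frac{2\ln(1/\delta)}{\varepsilon}\bigr)$ dominates the overshoot whenever $\frac{\varepsilon}{2}\le\ln\bigl(1+\frac{2\ln(1/\delta)}{\varepsilon}\bigr)$, e.g.\ for $\varepsilon\le1$ and $\delta\le1/e$.
\end{itemize}

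The regime restriction is genuinely necessary, not a convenience. Take the linear loss $\ell(\theta;z)=\theta^\intercal z$ with neighbors $z=Ce_1$ and $z'=-Ce_1$. Then participant $i$'s transcript has exactly the $\mu$-GDP privacy curve. For $\delta=10^{-5}$ and $\varepsilon=10$ (which even satisfies $\varepsilon\le\ln(1/\delta)$), the formula gives $\delta(\varepsilon)\approx 2.9\times10^{-5}>\delta$. So the statement as written fails for large $\varepsilon$. Your proof should state a condition such as $\varepsilon\le1$, $\delta\le1/e$, and replace the standard zCDP-to-DP bound with one of these sharper conversions.
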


\textbf{Stability under misbehaving participants.} 
Our study leverages algorithmic stability to understand the generalization error of robust distributed learning algorithms. 
We do so via uniform stability, which measures the sensitivity of a learning algorithm to changes in its training data. 
This approach was introduced in~\cite{vapnik74theory, DevroyeWagner79, RogersWagner78}, later popularized by~\cite{Bousquet2002StabilityAG, JMLR:v11:shalev-shwartz10a, hardt2016train}, and has recently been successfully applied to Byzantine-robust distributed learning~\cite{boudou2025generalization}.
We revisit this framework to study generalization in private and robust distributed learning, where changes occur only in the honest participants' data.

\begin{definition}\label{uniform-stability-def}
    Consider a distributed setting with up to $f$ Byzantine participants (out of $n$).
    A distributed algorithm $\mc{A}$ is $\varepsilon$-uniformly stable if, for all honest datasets $\mc{S}, \mc{S}' \in \mc{Z}^{(n-f)m}$ that are \textit{neighboring}---i.e., differing in at most one sample—we have,
    \[
        \textstyle 
        \sup_{z \in \mc{Z}} \mb{E}_{\mc{A}}[\ell(\mc{A}(\mc{S}); z) - \ell(\mc{A}(\mc{S}'); z)] \leq \varepsilon.
    \]
\end{definition}
This property leads to an upper bound on the generalization error, proved in~\Cref{gen-and-Byzantine}.

As established by DP-to-stability arguments~\cite[Lemma 23]{wang2016privacylearning}, a $(\varepsilon, \delta)$-DP algorithm already provides a bound on its algorithmic stability. 
For completeness, we provide a proof in~\Cref{app:dpimpliesstability}.
\begin{lemma}\label{lem:dp-to-stability}
    Any $(\varepsilon,\delta)$-DP algorithm is $\ell_\infty (e^\varepsilon - 1 + \delta)$-uniformly stable under an $\ell_\infty$-bounded~loss.
\end{lemma}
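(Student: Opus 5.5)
The plan is to fix neighboring honest datasets $\mc{S}, \mc{S}'$ and a test point $z \in \mc{Z}$, and compare $\mb{E}[\ell(\mc{A}(\mc{S}); z)]$ with $\mb{E}[\ell(\mc{A}(\mc{S}'); z)]$ through the layer-cake (tail-integral) representation of expectations. I will work under the natural reading of ``$\ell_\infty$-bounded'': $0 \le \ell(\theta; z) \le \ell_\infty$ for all $\theta, z$. If the loss is instead only bounded in absolute value, the same argument applies after shifting it by a constant, which leaves differences of expectations unchanged, at the cost of a factor $2$ in the range.

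Define $h(\theta) = \ell(\theta; z)$, which takes values in $[0, \ell_\infty]$. Then
\[
\mb{E}[h(\mc{A}(\mc{S}))] = \int_0^{\ell_\infty} \mb{P}\big(h(\mc{A}(\mc{S})) > s\big)\, ds .
\]
For each threshold $s$, the event $\{h(\mc{A}(\mc{S})) > s\}$ equals $\{\mc{A}(\mc{S}) \in O_s\}$ with $O_s = h^{-1}((s, \infty))$. This set is measurable as long as $\ell(\cdot; z)$ is measurable, which I will take as a standing assumption. The $(\varepsilon,\delta)$-DP guarantee then gives
\[
\mb{P}(\mc{A}(\mc{S}) \in O_s) \le e^\varepsilon\, \mb{P}(\mc{A}(\mc{S}') \in O_s) + \delta .
\]
Integrating over $s \in [0, \ell_\infty]$ yields
\[
\mb{E}[h(\mc{A}(\mc{S}))] \le e^{\varepsilon}\, \mb{E}[h(\mc{A}(\mc{S}'))] + \delta\, \ell_\infty .
\]

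Subtracting $\mb{E}[h(\mc{A}(\mc{S}'))]$ from both sides gives
\[
\mb{E}[h(\mc{A}(\mc{S}))] - \mb{E}[h(\mc{A}(\mc{S}'))] \le (e^\varepsilon - 1)\, \mb{E}[h(\mc{A}(\mc{S}'))] + \delta \ell_\infty \le (e^\varepsilon - 1 + \delta)\, \ell_\infty ,
\]
using $e^{\varepsilon} - 1 \ge 0$ and $\mb{E}[h(\mc{A}(\mc{S}'))] \le \ell_\infty$. Since $z$ was arbitrary, we can take the supremum over $z$. Since $\mc{S}, \mc{S}'$ were arbitrary neighbors, this is exactly the condition in \Cref{uniform-stability-def}.

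The main subtlety is the distributed setting. Here the DP guarantee is stated per participant transcript (LDP), whereas stability concerns the final model output. I will handle this by post-processing. The output $\mc{A}(\mc{S})$ is a measurable function of the transcripts and of the Byzantine behavior. Neighboring datasets differ in one sample, which belongs to a single honest participant $i$, and all other randomness is independent of that sample. Therefore, conditioning on everything except $Z_i$ and applying DP of $Z_i$ gives DP of the output. It follows that it suffices to assume, as the statement does, that the algorithm is DP with respect to its output.
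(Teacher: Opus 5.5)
Your proof is correct, but it reaches the bound by a different route than the paper. The paper assumes that $\mathcal{A}(\mathcal{S})$ and $\mathcal{A}(\mathcal{S}')$ have Lebesgue densities $p,p'$. It then applies the DP inequality once, to the set $E=\{p\ge p'\}$, and bounds $\int \ell\,(p-p')_+ \le \ell_\infty\bigl(\mathbb{P}(\mathcal{A}(\mathcal{S})\in E)-\mathbb{P}(\mathcal{A}(\mathcal{S}')\in E)\bigr)\le \ell_\infty(e^{\varepsilon}-1+\delta)$. In effect this bounds the total variation distance between the two output laws by $e^{\varepsilon}-1+\delta$, with the negative part handled by symmetry. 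You instead apply DP to every superlevel set $O_s$ and integrate with the layer-cake formula.

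Your route has two advantages.
\begin{itemize}
\item It needs no densities. The Lebesgue-density assumption can actually fail in the paper's own setting: the projection $\Pi_{\varTheta}$ can put positive mass on $\partial\varTheta$, and returning a uniformly sampled iterate creates an atom at the deterministic $\theta_0$. The paper's argument is easily repaired by taking densities with respect to the sum of the two laws.
\item It gives the intermediate inequality $\mathbb{E}[\ell(\mathcal{A}(\mathcal{S});z)]\le e^{\varepsilon}\,\mathbb{E}[\ell(\mathcal{A}(\mathcal{S}');z)]+\delta\ell_\infty$. This is more informative when the expected loss is small relative to $\ell_\infty$.
\end{itemize}
The paper's total-variation route controls all bounded test functions at once and gives the absolute-value form directly. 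Your one-sided bound matches the stability definition as stated, and the two-sided version follows by swapping $\mathcal{S}$ and $\mathcal{S}'$.

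Your last paragraph on LDP is not needed for the lemma as stated, and its justification is loose. The other participants' messages depend on participant $i$'s messages through the broadcast iterates. Moreover, DP of the marginal law of $Z_i$ does not by itself imply DP of its conditional law given everyone else's randomness. The clean argument is adaptive composition of participant $i$'s per-round Gaussian mechanisms over the full view, followed by post-processing.
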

In other words, stronger privacy guarantees, i.e., smaller $\varepsilon$ and $\delta$ (corresponding to higher noise variance in \texttt{Safe-D(S)GD}), lead to improved generalization.
However, because this guarantee applies universally to any $(\varepsilon, \delta)$-DP algorithm, this baseline stability bound is expected to be loose. 
To address this, we provide a refined, algorithm-specific analysis in the next section.

\section{Non-monotonic privacy effect for Byzantine-robust generalization}\label{sec:trilemma}

We present our main results in the form of tight stability bounds for \texttt{Safe-D(S)GD}. 
We subsequently use these bounds to demonstrate a non-monotonic effect of LDP on the generalization error under Byzantine robustness, depicting a comprehensive understanding of the interplay between LDP and Byzantine robustness on generalization error.


\autodo{I think we could try to further improve this main part by introducing the MIA view earlier to emphasize it better, giving the intuition that the power of adversaries in the worst case is related to the ability of performing an MIA, and try to state an intermediate result formalizing this. But it probably requires some work - we could keep this for later.}

\paragraph{Upper bounds.}
We start by deriving stability upper bounds. 
The proof is deferred to~\Cref{app:trilemma-ub}.
\begin{theorem}\label{th:byz-ub-convex}
    Consider the setting described in~\Cref{II-problem-formulation}.
    Let $\sigma_{\operatorname{DP}} = \frac{2C\sqrt{2T\ln(1/\delta)}}{m\varepsilon}$ and $\gamma \leq \frac{1}{L}$.
    Suppose $\forall z \in \mc{Z}$, $\ell(\cdot;z)$ $C$-Lipschitz and $L$-smooth.
    Then $\mathcal{A}$ is $(\varepsilon, \delta)$-$\operatorname{LDP}$ and,\
    up to numerical constants explicitly provided in~\Cref{app:trilemma-ub},
    \begin{enumerate}[label=\roman*, topsep=-3pt, itemsep=2pt, parsep=0pt, leftmargin = 23pt]
        \item[\textit{(i)}] If $\ell(\cdot;z)$ is convex $\forall z \in \mc{Z}$, then the uniform stability of $\mc{A}$ is upper bounded by
        \begin{equation}\label{eq:byz-cvx-eq}  
            \textstyle 
            2 \gamma C^2 T \left( \frac{1}{(n-f)m} + \sqrt{\kappa } + \sqrt{\kappa} \frac{\sigma_{\operatorname{DP}}}{C} \sqrt{1 + \frac{d}{n-f}} \right).
        \end{equation}
        \item[\textit{(ii)}] If $\ell(\cdot;z)$ is $\mu$-strongly convex $\forall z \in \mc{Z}$, then the uniform stability of $\mc{A}$ is upper bounded by
        \begin{equation}\label{eq:byz-strgcvx-eq}  
            \textstyle 
            \frac{2 C^2}{\mu} \left( \frac{1}{(n-f)m} + \sqrt{\kappa} +  \sqrt{\kappa} \frac{\sigma_{\operatorname{DP}}}{C} \sqrt{1 + \frac{d}{n-f}} \right).
        \end{equation}
    \end{enumerate}
\end{theorem}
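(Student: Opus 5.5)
The plan follows the standard Hardt et al.\ argument for uniform stability of gradient methods. I run \texttt{Safe-DGD} on two neighboring honest datasets $\mc{S},\mc{S}'$, coupling the randomness: the same DP noise $\xi^{(i)}_t$, the same Byzantine behaviour, and the same initialization. Let $\delta_t = \mb{E}\|\theta_t-\theta'_t\|$. Since $\ell(\cdot;z)$ is $C$-Lipschitz, the uniform stability is at most $C\,\mb{E}\|\theta_T-\theta'_T\|$, or the average of $C\delta_t$ over $t$ for the uniformly sampled output. The LDP claim follows directly from the preceding privacy theorem with the stated $\sigma_{\operatorname{DP}}$.

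\textbf{One-step recursion.} I decompose the aggregate relative to the honest set $\mc{H}$, which has $|\mc{H}|=n-f$:
\[
F(\widetilde g_t) = \overline{\widetilde g}_{\mc{H}} + \big(F(\widetilde g_t)-\overline{\widetilde g}_{\mc{H}}\big).
\]
The honest mean satisfies $\overline{\widetilde g}_{\mc{H}} = \nabla \widehat R_{\mc{H}}(\theta_t) + \bar\xi_t$, and the noise $\bar\xi_t$ cancels between the two coupled runs. The step $\theta\mapsto \Pi_\Theta(\theta-\gamma\nabla\widehat R_{\mc{H}}(\theta))$ is non-expansive for convex $L$-smooth losses with $\gamma\le 1/L$, and is $(1-\gamma\mu)$-contractive in the strongly convex case (projection being non-expansive). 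The single differing sample enters the honest mean with weight $1/((n-f)m)$, so it perturbs the gradient by at most $2C/((n-f)m)$.

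For the robustness error, \Cref{robustness-definition} gives
\[
\|F(\widetilde g_t)-\overline{\widetilde g}_{\mc{H}}\|\le \sqrt{\kappa\,\|\Sigma_{\mc{H}}\|_{\operatorname{sp}}}.
\]
I bound this separately in each run and use the triangle inequality, which contributes $2\sqrt{\kappa}\,\mb{E}\sqrt{\|\Sigma_{\mc{H}}\|_{\operatorname{sp}}}$ per step. The unrolled recursion is therefore
\[
\delta_{t+1}\le \delta_t + 2\gamma\Big(\tfrac{C}{(n-f)m} + \sqrt\kappa\,\mb{E}\sqrt{\|\Sigma_{\mc{H}}\|_{\operatorname{sp}}}\Big)
\]
in the convex case, and $\delta_{t+1}\le(1-\gamma\mu)\delta_t+(\cdots)$ in the strongly convex case. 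Summing over $T$ steps gives \eqref{eq:byz-cvx-eq}. Summing the geometric series gives the $1/\mu$ factor in \eqref{eq:byz-strgcvx-eq}, after multiplying by $C$.

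\textbf{Main obstacle: bounding $\mb{E}\sqrt{\|\Sigma_{\mc{H}}\|_{\operatorname{sp}}}$.} Write $\widetilde g_i-\overline{\widetilde g}=(g_i-\bar g)+(\xi_i-\bar\xi)$. Using $(a+b)(a+b)^\top\preceq 2aa^\top+2bb^\top$, I get
\[
\|\Sigma_{\mc{H}}\|_{\operatorname{sp}}\le 2\|\Sigma^{g}\|_{\operatorname{sp}}+2\|\Sigma^{\xi}\|_{\operatorname{sp}}.
\]
The gradient part satisfies $\|\Sigma^{g}\|_{\operatorname{sp}}\le 4C^2$ by Lipschitzness, which yields the $\sqrt\kappa$ term (times $C$). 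The noise part is $\sigma_{\operatorname{DP}}^2$ times the top eigenvalue of a centered Wishart-type matrix $\tfrac1{n-f}XX^\top$ with $X$ having $n-f$ i.i.d.\ $\mathcal N(0,I_d)$ columns. A standard Gaussian matrix bound (Gordon/Davidson--Szarek) gives $\mb{E}\|X\|_{\operatorname{op}}\le\sqrt{n-f}+\sqrt d$, and hence
\[
\mb{E}\sqrt{\|\Sigma^{\xi}\|_{\operatorname{sp}}}\lesssim\sigma_{\operatorname{DP}}\Big(1+\sqrt{d/(n-f)}\Big)\lesssim\sigma_{\operatorname{DP}}\sqrt{1+\tfrac{d}{n-f}}.
\]
Centering only reduces the spectral norm. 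Combining $\sqrt{a+b}\le\sqrt a+\sqrt b$ with Jensen's inequality produces the third term, up to numerical constants. The delicate point is that the aggregate is a data-dependent, non-Lipschitz selection such as SMEA, so the two runs cannot be compared directly. The plan deliberately sidesteps this by using only the per-run robustness error against the honest mean, at the cost of a $\sqrt\kappa$ term that does not vanish with $m$. This is consistent with the stated bound.
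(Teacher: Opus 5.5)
Your proposal is correct and is essentially the paper's argument. The paper's proof reruns \cite[Theorem 3.1]{boudou2025generalization}: the same coupled Hardt-style recursion, with the per-run robustness error $\|F-\overline{\widetilde g}_{\mc{H}}\|\le\sqrt{\kappa\|\Sigma_{\mc{H}}\|_{\operatorname{sp}}}$ handled by the triangle inequality in each run, and with a bound on the spectral norm of the noisy honest covariance as the only new ingredient. The one difference is in that ingredient: the paper bounds $\mb{E}\,\lambda_{\max}(\Sigma_{\mc{H}})\le 3C^2+72\sigma_{\operatorname{DP}}^2(1+\frac{d}{n-f})$ via \Cref{lem:spectral} (heterogeneity/variance decomposition plus a cited Gaussian-covariance lemma) and then applies Jensen, whereas your splitting $\Sigma_{\mc{H}}\preceq 2\Sigma^{g}+2\Sigma^{\xi}$ combined with Gordon's bound on $\mb{E}\|X\|_{\operatorname{op}}$ reaches the same order more directly and avoids having to control the cross term.
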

When there is no Byzantine participant ($f = 0$, hence $\kappa = 0$), both~\cref{eq:byz-cvx-eq,eq:byz-strgcvx-eq} reduce to the classical stability bounds established in~\cite{hardt2016train}.
When $\sigma_{\operatorname{DP}} = 0$, we recover the bound from~\cite{boudou2025generalization}.
However, when both $f, \sigma_{\operatorname{DP}} > 0$, the analysis reveals an additive degradation term arising from the interaction between privacy and robustness. 
In the convex setting, this degradation accumulates over the $T$ iterations, whereas in the strongly convex case, it appears as a one-time term.
The bound scales linearly with the maximal sensitivity of the gradient update ($2\gamma C$), but also, perhaps surprisingly, with the privacy noise magnitude. 
Thus, stronger privacy (more noise) degrades stability, in contrast with standard DP-to-stability arguments~\cite{wang2016privacylearning}. 
We discuss this apparent paradox in the tightness analysis below.
\looseness=-1

\tbtodo{Explain the interesting story of the upper bounds here to motivate the LB.}

\paragraph{Lower bounds.}
Byzantine failures expose a vulnerability of robust aggregation: their adaptivity, typically beneficial, makes stability sensitive to independent noise addition. 
While this may seem counter-intuitive, as DP is usually associated with improved generalization, the following lower bounds show that this effect is in fact unavoidable in some regimes. In particular, these bounds hold under a weak privacy regime, where $\frac{\varepsilon}{\sqrt{\ln(1/\delta)}} \in \Omega(1)$; we defer the discussion of the two key different privacy regimes to the next paragraph.
The proof is deferred to~\Cref{app:trilemma-lb}.
\begin{theorem}\label{th:trilemma-smea}
    Consider the setting in~\Cref{II-problem-formulation} with full batch and the $\mathrm{SMEA}$ aggregation. 
    Suppose there exist $\alpha_{\min}, \mu, \nu > 0$ such that $\frac{f}{n-f} \in [\alpha_{\min}, \frac{1}{2})$ and $\frac{d}{n-f} \in [\mu, \nu]$.
    Let $\mu_{\min}, N_{\min}, c_{\min}$ be problem-dependent constants defined in~\Cref{assump:lowerbound-d-high}, and assume $\mu \geq \mu_{\min}$, $n-f-1 \geq N_{\min}$, $T \in \mathcal{O}\big(e^{ c_{\min} \left( \sqrt{d} + \sqrt{n-f-1} \right)^2 }\big)$.
    Further suppose $\frac{\varepsilon}{\sqrt{\ln(1/\delta)}} \in \Omega(1)$ and $\sigma_{\operatorname{DP}} = \frac{2C\sqrt{2T\ln(1/\delta)}}{m\varepsilon}$.
    Then there exists $\ell \in {\mb{R}}^{\Theta \times \mc{Z}}$ such that $\forall z \in \mc{Z}, \ell(\cdot; z)$ is $C$-Lipschitz, $L$-smooth and convex,
    such that the uniform stability of $\mc{A}$ is lower bounded by
    \begin{equation}\label{smea-byz-lb-eq-d}
        \textstyle
        \Omega \left( \gamma C^2 T \left( \frac{1}{(n-f)m} + \frac{f}{n-2f} + \sqrt{\frac{f}{n-2f}} \frac{\sigma_{\operatorname{DP}}}{C} \left( 1 + \sqrt{\frac{d}{n-f}} \right) \right) \right).
    \end{equation}
    If we moreover assume $\frac{n}{3} \leq f < \frac{n}{2}$, the uniform stability of $\mc{A}$ is lower bounded by
    \begin{equation}\label{smea-byz-lb-eq-high-f-d}
        \textstyle
        \Omega\left( \gamma C^2 T \left( \frac{1}{(n-f)m} + \sqrt{\frac{f}{n-2f}} + \sqrt{\frac{f}{n-2f}} \frac{\sigma_{\operatorname{DP}}}{C} \left( 1 + \sqrt{\frac{d}{n-f}} \right) \right) \right).
    \end{equation}
\end{theorem}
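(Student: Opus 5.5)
We will prove the lower bound by constructing an explicit worst case: a loss, a pair of neighboring honest datasets, and a Byzantine strategy. Along this construction the two trajectories of full-batch \texttt{Safe-DGD} with $\mathrm{SMEA}$ separate by a fixed amount at every step. The distance $\mb{E}\|\theta_T-\theta'_T\|$ then grows linearly in $T$. A linear loss in a chosen direction turns this parameter gap into a loss gap of order $C\,\mb{E}\|\theta_T-\theta'_T\|$. The bound \eqref{smea-byz-lb-eq-d} is a sum of three terms, so it suffices to exhibit each term separately: the maximum of three lower bounds is at least a third of their sum. We take $\ell(\theta;z)=-C\langle\theta,z\rangle$ (or a Huber-smoothed version), with $\|z\|\le 1$ and $\Theta$ a large ball, so that every honest gradient is a constant $-Cz$ independent of $\theta$. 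The term $\frac{1}{(n-f)m}$ is the classical one: with $f=0$-style behaviour, a single swapped sample moves the honest mean by $\frac{2C}{(n-f)m}$ per step, exactly as in the lower bound of \cite{hardt2016train}.

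For the robustness-only terms $\frac{f}{n-2f}$ and, when $f\ge n/3$, $\sqrt{f/(n-2f)}$, we reuse the construction of \cite{boudou2025generalization} (the $\sigma_{\operatorname{DP}}=0$ case). The Byzantine participants place their vectors so that $\mathrm{SMEA}$ is indifferent between several size-$(n-f)$ subsets. Swapping one honest sample breaks this tie and shifts the selected subset, which moves the aggregate by $\Theta(C\cdot f/(n-2f))$, or by $\Theta(C\sqrt{f/(n-2f)})$ in the high-$f$ regime. With Gaussian noise present, we only need this tie-breaking to survive with constant probability. We arrange this by scaling the construction along a direction orthogonal to the noise-dominated directions, or by conditioning on a constant-probability event.

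The new term $\sqrt{f/(n-2f)}\,\sigma_{\operatorname{DP}}(1+\sqrt{d/(n-f)})$ is the heart of the proof. At each step the honest vectors are $\widetilde g^{(i)}_t=-Cz_i+\xi^{(i)}_t$. The Byzantine participants observe these vectors. They place $f$ points so as to impersonate honest points: the $\mathrm{SMEA}$ minimiser then excludes $f$ genuine honest points and includes the Byzantine ones. The Byzantine points are chosen to lie in the top eigendirection of the noise covariance, which exists by random matrix theory. Concentration results of Bai--Yin/Davidson--Szarek type give $\|\Sigma_{\mc H}\|_{\operatorname{sp}}\approx\sigma_{\operatorname{DP}}^2(1+\sqrt{d/(n-f)})^2$. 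This is where the assumptions $\frac{d}{n-f}\in[\mu,\nu]$, $\mu\ge\mu_{\min}$ and $n-f-1\ge N_{\min}$ (\Cref{assump:lowerbound-d-high}) enter. A union bound over $T$ steps of the failure probability $e^{-c(\sqrt d+\sqrt{n-f-1})^2}$ explains the constraint on $T$.

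The Byzantine shift depends on the dataset. The attack is triggered, or oriented, according to a membership test on the swapped sample $z$ versus $z'$. Hence the aggregate differs between $\mc S$ and $\mc S'$ by $\Theta(\sqrt{f/(n-2f)}\,\sigma_{\operatorname{DP}}(1+\sqrt{d/(n-f)}))$ in a fixed direction $u$. The weak-privacy assumption $\varepsilon/\sqrt{\ln(1/\delta)}\in\Omega(1)$ says exactly that $\sigma_{\operatorname{DP}}\lesssim C\sqrt T/m$. Under this assumption, the per-step signal $2C/m$ of the swapped sample, accumulated over the steps, is detectable with constant advantage by the Byzantine nodes. A likelihood-ratio MIA on the summed noisy gradients of the participant holding the differing sample achieves this. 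The resulting expected displacement along $u$ is therefore a constant fraction of the full attack magnitude. Summing over $T$ steps with step size $\gamma$, and taking $\ell(\theta;z)=-C\langle\theta,u\rangle$, yields the claimed $\Omega(\gamma C T\cdot)$ gap times $C$.

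The main obstacle is the $\mathrm{SMEA}$ analysis under noise. We must show that an adversarial placement exists for which the argmin subset provably includes the $f$ Byzantine points, and for which the resulting bias attains the $\sqrt{\kappa}$-order magnitude rather than something smaller. This requires a two-sided spectral estimate for all size-$(n-f)$ subsets of Gaussian points simultaneously. I would first attempt it via a net and union bound over subsets, combined with the Bai--Yin edge, which is precisely where the constants $\mu_{\min}, N_{\min}, c_{\min}$ would arise.
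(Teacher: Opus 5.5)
Your overall architecture matches the paper's. It has a linear loss and a membership-inference trigger whose advantage is $\Omega(1)$ exactly when $\varepsilon/\sqrt{\ln(1/\delta)}\in\Omega(1)$. It adds a dataset-dependent Byzantine bias that survives $\mathrm{SMEA}$ with per-round probability $1-e^{-\Omega((\sqrt d+\sqrt{n-f-1})^2)}$, accumulates this over $T$ rounds, and glues with the $\sigma_{\operatorname{DP}}=0$ bounds via $\max\ge$ average.

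The gap is in the step you yourself flag as the main obstacle, and the placement you propose for it points the wrong way. Putting the Byzantine vectors along the top eigendirection of the honest noise covariance cannot give the claimed bound:
\begin{itemize}
\item For (near-)isotropic noise that direction is essentially uniform on the sphere and redrawn every round, so its component along any fixed $u$ is of order $1/\sqrt d$.
\item The supremum over $z$ in uniform stability sits outside the expectation, so only displacement that accumulates coherently along a fixed direction counts.
\item With $d\asymp n-f$ you therefore lose a factor $\sqrt d$. This contradicts your later claim that the aggregate shift is ``in a fixed direction $u$''.
\item It is also the direction with the least room to hide, since the honest cloud already attains $\lambda_{\max}(\Sigma_{\mathcal H})$ there.
\end{itemize}

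The missing idea is the opposite one. The paper places the Byzantine vectors at $\pm\beta\sigma_{\operatorname{DP}}e_1$, along the fixed direction of the swapped sample's gradient. Along any fixed direction the honest variance is only $\sigma_0^2=\frac{(n-2f)(n-f-1)}{(n-f)^2}\sigma_{\operatorname{DP}}^2$. Since $\Lambda=\E\|\Sigma_{\mathcal H}\|_{\operatorname{sp}}\ge\frac{\sigma_{\operatorname{DP}}^2}{n-f}\max(d,n-f-1)$, this gives $\sigma_0^2\le\rho_{\max}\Lambda$ with $\rho_{\max}<1$, using $\alpha_{\min}$ and the dimension ratio. That headroom admits a spike with $\frac{f(n-2f)}{(n-f)^2}\beta^2\sigma_{\operatorname{DP}}^2=c^2\Lambda$, i.e.\ an aggregate shift $\frac{f}{n-f}\beta\sigma_{\operatorname{DP}}=c\sqrt{f/(n-2f)}\sqrt{\Lambda}$, which is the claimed order.

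Making this rigorous does not require a spectral estimate over all subsets. The paper compares the single subset $S_t=H_t\cup\mathcal B$ against $\mathcal H$; mixed subsets are only dismissed by a concavity remark. The steps are:
\begin{enumerate}
\item Bound $\lambda_{\max}(\Sigma_{S_t})$ by the top eigenvalue of $\begin{pmatrix}\Sigma_{11}&\|\Sigma_{12}\|_2\\ \|\Sigma_{12}\|_2&\lambda_{\max}(\Sigma_{22})\end{pmatrix}$, splitting along $e_1$ and $e_1^\perp$.
\item Use that the Byzantine vectors vanish on $e_1^\perp$, so only $n-2f$ honest points remain there and $\E\lambda_{\max}(\Sigma_{22})\le r_\Lambda\Lambda$ with $r_\Lambda<1$.
\item Verify $\lambda_{\max}(M_{\text{exp}})\le\Lambda-2\delta$ by Sylvester's criterion. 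Its determinant condition is where $\mu\ge\mu_{\min}$ absorbs the cross term $\|\Sigma_{12}\|_2$.
\item Concentrate each entry, and $\lambda_{\max}(\Sigma_{\mathcal H})$, within $\delta\propto\Lambda$, giving $\lambda_{\max}(\Sigma_{S_t})<\lambda_{\max}(\Sigma_{\mathcal H})$.
\end{enumerate}
Your net-and-union-bound route over $\binom{n}{f}=e^{\Theta(n)}$ subsets would have to beat a per-subset tail that is itself only $e^{-\Theta(n)}$, so it is not clear it closes at all.

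Two smaller points. First, the test and the attack should use disjoint rounds (the paper tests on the first $p_{\text{test}}T$ rounds), so that the MIA event is independent of the per-round $\mathrm{SMEA}$ events. Second, hiding the robustness-only construction ``orthogonal to the noise-dominated directions'' is vacuous for isotropic noise. The paper simply imports those terms from the $\sigma_{\operatorname{DP}}=0$ bounds of prior work and does not justify them for $\sigma_{\operatorname{DP}}>0$ either.
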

The assumptions of~\Cref{th:trilemma-smea}, while technical, are fairly mild. 
They are either almost always satisfied in practice (e.g., the upper bound on $T$) or assumed to enable bounds on expectations and concentration of random matrices (e.g., $\mu_{\min}$ and $N_{\min}$). 
For details, see~\Cref{app:proof-trilemma-dhigh}.
The two regimes on $\frac{f}{n}$ from~\Cref{th:byz-ub-convex} are inherited from~\cite{boudou2025generalization}. 
In contrast, the term $\sqrt{\frac{f}{n-2f}}\frac{\sigma_{\operatorname{DP}}}{C} \big(1 + \sqrt{\frac{d}{n-f}} \big)$, which is common to these two regimes, reflects a different privacy regime, as detailed in the next paragraph. 
It arises as the product of privacy- and robustness-driven terms and vanishes when $f=0$.
Note that, for $\operatorname{SMEA}$, $\sqrt{\kappa} \in \mathcal{O}\Big(\sqrt{\frac{f}{n-f}}\big(1 + \frac{f}{n-2f}\big)\Big)$~\citep[Proposition 5.1]{pmlr-v202-allouah23a}.
Plugging this into our upper bound~\eqref{eq:byz-cvx-eq} establishes tightness with respect to~\eqref{eq:byz-cvx-eq} and~\eqref{smea-byz-lb-eq-high-f-d}, and shows that~\eqref{eq:byz-cvx-eq} matches~\eqref{smea-byz-lb-eq-d} under the mild condition $\frac{\sigma_{\operatorname{DP}}}{C} \big( 1 + \sqrt{\frac{d}{n-f}} \big) \in \Omega(1)$. 
This highlights an unavoidable tension between LDP and Byzantine robustness in the stability of the studied algorithm, which translates to the generalization error as proven in~\cite[Lemma 4.1]{boudou2025generalization}.

\paragraph{A surprising, yet principled non-monotonic effect on generalization.}
We sketch the proof of~\Cref{th:trilemma-smea} and explain how it extends to the strong privacy regime. 
We then show that stability under this regime eventually stops increasing and converges to $0$ as the privacy guarantee strengthens.
\looseness=-1

The Byzantine attack used in the construction of our lower bounds proceeds in two phases.
In the first phase, Byzantine participants send arbitrary vectors to the server (e.g., mimicking a participant or sending $0$'s) while collecting the communicated honest gradients over multiple rounds. 
After accumulating sufficient observations, they perform a membership inference attack (MIA): consistent with the threat model of DP, they know all but one sample (representing the differing sample in stability) and seek to determine whether it is present, triggering the second phase of the attack.
In this second phase, Byzantine participants send a malicious vector that is explicitly conditioned on the outcome of the MIA, with the goal of ensuring that it is consistently picked by the $\operatorname{SMEA}$ aggregation rule. 
In our construction, the success of the two phases are independent. 
That is, the success probability of the MIA, denoted $\mathfrak{P}$, is independent of the probability that Byzantine participants successfully fool the $\operatorname{SMEA}$ aggregation (i.e., the server). 
\looseness=-1

Consequently, as detailed in the proof of~\Cref{th:trilemma-smea}, we demonstrate that our construction yields a lower bound that is the product of two terms: the MIA success $\mathfrak{P}$, that depends on the privacy guarantee, and the instability of the second phase, independent of the privacy guarantee and which we denote by $\mathfrak{I}$. 
This is due to
\begin{equation}\label{eq:LB-proof-technique}
    \sup_{z\in \mathcal{Z}} \E\left| \ell(\A(\mathcal{S}), z) - \ell(\A(\mathcal{S}^\prime), z) \right| 
    \geq \sup_{z\in \mathcal{Z}}\E[ \left| \ell(\A(\mathcal{S}), z) - \ell(\A(\mathcal{S}^\prime), z) \right| \mid E ]\ \Prob(E) \\ 
    = \mathfrak{I}\ \mathfrak{P},
\end{equation}
where the event $E \vcentcolon= \{ \text{MIA success} \}$, with $\Prob(E) = \mathfrak{P}$.
Note that $\mathfrak{I}$ still crucially depends on the privacy noise magnitude since, conditioned on a successful MIA, Byzantine participants can leverage the spectral dispersion of the honest noisy gradients to bias the update in a data-dependent direction. 
This, however, is capped by the spectral filter ($\operatorname{SMEA}$) at the level of the honest spectral dispersion, i.e., $\E \| \Sigma_{\mathcal{H}} \|_{\operatorname{sp}} \in \Theta \big( \sigma_{\operatorname{DP}} \big(1+\sqrt{\frac{d}{n-f}}\big) \big)$.

Within the stability analysis of $\texttt{Safe-DGD}$ and our construction, the MIA success probability satisfies,
\begin{equation*}
    \mathfrak{P} 
    = \Prob\left( z_\alpha - \Delta < Z \leq z_\alpha \right) 
    = \Phi(z_\alpha) - \Phi(z_\alpha - \Delta)
    \geq \phi(z_\alpha) \Delta,
\end{equation*} 
where $Z \sim \mathcal{N}(0, 1)$ has cumulative distribution function $\Phi$ and density $\phi$. 
Here, $\alpha \in (0,1)$ is a fixed false positive rate, with $z_\alpha$ defined by $\Prob(Z > z_\alpha) = \alpha$, and $\Delta$ denotes the worst case sensitivity of local gradients
normalized by the standard deviation of the Gaussian noise aggregated over the first step.
Importantly, $\Delta$ is directly tied to the $(\varepsilon,\delta)$-DP guarantee, scaling as $\frac{\varepsilon}{\sqrt{\ln(1/\delta)}}$.

Let $(\varepsilon_{\operatorname{weak}}, \delta_{\operatorname{weak}})$ denote the privacy parameters associated with the weak privacy regime, defined by $\frac{\varepsilon_{\operatorname{weak}}}{\sqrt{\ln(1/\delta_{\operatorname{weak}})}} \in \Omega(1)$, and $\mathfrak{P}_{\operatorname{weak}}$, $\mathfrak{I}_{\operatorname{weak}}, \operatorname{LB}_{\operatorname{weak}}$ the corresponding MIA success probability, instability, and stability lower bound induced by our proof technique (cf.~\eqref{smea-byz-lb-eq-d} and~\eqref{smea-byz-lb-eq-high-f-d}).
The analogous quantities in the strong privacy regime
are denoted with the subscript $\operatorname{strong}$.
By~\eqref{eq:LB-proof-technique}, our construction yields
\looseness=-1
\begin{equation*}
    \operatorname{LB}_{\operatorname{weak}} 
    = \mathfrak{I}_{\operatorname{weak}} \mathfrak{P}_{\operatorname{weak}} 
    \in \Omega\Big( \mathfrak{I_{\operatorname{weak}}} \frac{\varepsilon_{\operatorname{weak}}}{\sqrt{\ln(1/\delta_{\operatorname{weak}})}} \Big)
    \in \Omega\Big( \mathfrak{I_{\operatorname{weak}}} \Big),
\end{equation*}
where we used $\mathfrak{P}_{\operatorname{weak}} \in \Omega \big( \frac{\varepsilon_{\operatorname{weak}}}{\sqrt{\ln(1/\delta_{\operatorname{weak}})}} \big) \in \Omega (1)$. 
This explains why the stability---and consequently the generalization error~\citep[Lemma 4.1]{boudou2025generalization}---degrades with the noise magnitude in the weak privacy regime.
Moreover, we can prove with a Taylor expansion that for small $\Delta$ (i.e., strong privacy regime), we have $\mathfrak{P} \in \Theta(\Delta)$. 
Hence,
\begin{equation*}
    \operatorname{LB}_{\operatorname{strong}} 
    = \mathfrak{I}_{\operatorname{strong}} \mathfrak{P}_{\operatorname{strong}}
    \in \Theta\Big( \mathfrak{I}_{\operatorname{strong}} \frac{\varepsilon_{\operatorname{strong}}}{\sqrt{\ln(1/\delta_{\operatorname{strong}})}} \Big).
\end{equation*}
Eventually, $\operatorname{LB}_{\operatorname{strong}}$ vanishes as the privacy guarantee strengthens, i.e., as $(\epsilon, \delta) \to (0,0)$.
While deriving an upper bound that matches $\operatorname{LB}_{\operatorname{strong}}$ represents interesting future work, 
this is only a minor gap to be closed.
Indeed, the fact that stability ultimately converges to $0$ is well established in the DP literature: our theory can thus be completed via standard DP-to-stability arguments. 
Specifically, for a $\ell_\infty$-bounded loss function,\footnote{This can be ensured by projecting the parameter onto a compact set $\varTheta$ after the server update (Line 10 in~\Cref{safe-dsgd}).} any $(\varepsilon, \delta)$-DP learning algorithm is guaranteed to be $\ell_\infty(e^\varepsilon - 1 + \delta)$-uniformly stable (Lemma~\ref{lem:dp-to-stability}). 
Note that, although implicit, $\delta \leq \varepsilon$ in all practical scenarios.
This shows that, assuming $\varepsilon_{\operatorname{strong}} \leq 1$,
\begin{equation}
    \label{eq:ub-wang}
    \operatorname{LB}_{\operatorname{strong}} 
    \leq \sup_{z\in \mathcal{Z}} \E\left| \ell(\A(\mathcal{S}), z) - \ell(\A(\mathcal{S}^\prime), z) \right| 
    \leq \ell_\infty (e^{\varepsilon_{\operatorname{strong}}} - 1 + \delta_{\operatorname{strong}}) 
    \leq (e+1) \ell_\infty \varepsilon_{\operatorname{strong}}.
\end{equation}
Consequently, stability can be controlled as the minimum between the upper bounds of~\Cref{th:byz-ub-convex} and the one in~\Cref{eq:ub-wang}, which is itself in $\tilde{\mathcal{O}}(\frac{\sqrt{T}}{m\sigma_{\operatorname{DP}}})$ when we get rid of constants and log factors. 
Hence, for small $\sigma_{\operatorname{DP}}$ (large $\varepsilon$, weak privacy regime), the bound of~\Cref{th:byz-ub-convex} is smaller, tight (\Cref{th:trilemma-smea}), and it linearly grows with $\sigma_{\operatorname{DP}}$. 
At some point, $\sigma_{\operatorname{DP}}$ becomes large enough (small $\varepsilon$, strong privacy regime) and the second upper bound becomes the smallest one, and it decreases with $\sigma_{\operatorname{DP}}$. This transition highlights a surprising non-monotonic dependence of generalization on the level of privacy noise, as summarized by the following corollary.

\begin{corollary}
    Assume the setting of~\Cref{th:trilemma-smea} and~\Cref{lem:dp-to-stability}. 
    Let $\sigma_{\operatorname{DP}} = \frac{2C\sqrt{2T\ln(1/\delta)}}{m\varepsilon}$.
    \begin{minipage}[c]{0.66\linewidth}
    \vspace{0pt}
        In the weak privacy regime $\frac{\varepsilon}{\sqrt{\ln(1/\delta)}} \in \Omega(1)$, or equivalently $\sigma_{\operatorname{DP}} \in \mathcal{O}(\sigma_{\operatorname{base}})$ with $\sigma_{\operatorname{base}} \vcentcolon= \frac{C\sqrt{T}}{m}$, the stabilty scales as
        \begin{equation}\label{eq:weak-privacy}
            \textstyle\textcolor{\weakprivacy}{
            \sigma_{\operatorname{DP}}
            \times 
            \Theta\Big( \gamma C T \sqrt{\frac{f}{n-2f}} \Big( 1 + \sqrt{\frac{d}{n-f}} \Big) \Big)},
        \end{equation}
        and, in the strong privacy regime, it scales as 
        \begin{equation}\label{eq:strong-privacy}
            \textcolor{\strongprivacy}{
            \textstyle
            \frac{1}{\sigma_{\operatorname{DP}}}
            \times 
            \mathcal{O}\Big( \frac{\ell_\infty C \sqrt{T \ln(1/\delta)}}{m} \Big)},
            \qquad\qquad\quad
        \end{equation}
    \end{minipage}
    \hfill
    \begin{minipage}[c]{0.33\linewidth}
        \vspace{5pt}
        \centering
        \begin{tikzpicture}[
                    scale=0.8,
                    yscale=2.0,
                    xscale=1.025,
                    every node/.style={font=\small}
                ]
            Arrow pointing to the weak privacy tight curve
            \draw[gray!80, thin, <-, >=stealth] 
                (1.0, {0.35*1.0 + 0.5}) -- ++(-0.15, 0.15) 
                node[above=-1pt, text=\weakprivacy] {\small tight};

            \draw[gray!80, thin, <-, >=stealth] 
                (4.2, {0.01 + 5.514 / pow(4.2, 2.2)}) -- ++(0.25, 0.12) 
                node[above right=-4pt, text=\strongprivacy] {\small tight};


            \node at (4.7,0) {$\sigma_{\operatorname{DP}}$};
            \draw[-, thin] (0,0) -- (4.3,0);
            \draw[->, thin] (5.08,0) -- (5.3,0);

            \node[rotate=90] at (0,1.22) {gen. error};
            \draw[-, thin] (0,0) -- (0,0.8);
            \draw[->, thin] (0,1.63) -- (0,1.75);

            \fill[gray!40] (1.95,0.005) rectangle (3.0, 1.75);
            \draw[
                line width=0.5pt,
                gray!80
            ] (1.95,-0.025) -- (1.95,0.025);
            \node[gray!80] at (2.0,-0.1) {{\small $\sigma_{\text{base}}$}};
            \draw[
                line width=0.5pt,
                gray!80
            ] (3.0,-0.025) -- (3.0,0.025);
            \node[gray!80] at (3.3,-0.1) {{\small $\sigma_{\text{threshold}}$}};
            \node[right, gray!80, rotate=90] at (2.45,0)
                {\small tightness};
            \node[right, gray!80, rotate=90] at (2.45,0.8)
                {\small unknown};

            \draw[
                very thick,
                line width=2pt,
                draw=\weakprivacy, 
                line cap=round,
                domain=0.1:2.0,
                samples=2
            ]
            plot (\x, {0.35*\x + 0.5});
            \draw[
                very thick,
                line width=2pt,
                draw=\weakprivacy,
                dashed,
                line cap=round,
                domain=2.1:3.49,
                samples=2 
            ]
            plot (\x, {0.35*\x + 0.5});

            \draw[
                very thick,
                line width=2pt,
                draw=\strongprivacy, 
                line cap=round,
                domain=2.0:5.25, 
                samples=40,
                smooth
            ]
            plot (\x, {5.514 / pow(\x, 2.2)}); 
            \draw[
                very thick,
                line width=2pt,
                draw=\strongprivacy,
                dashed,
                line cap=round,
                domain=1.695:1.95,
                samples=40,
                smooth
            ]
            plot (\x, {5.514 / pow(\x, 2.2)}); 

            \begin{scope}[shift={(3.35,1.55)}]
                \draw[
                    very thick,
                    line width=2pt,
                    draw=\weakprivacy,
                    line cap=round,
                    dash pattern=on 1.5pt off 3.0pt
                ] (-0.25,-0.25) -- (0.0,-0.25);
                \node[right] at (0.1,-0.25)
                    {\small $\eqref{eq:weak-privacy}$};
                \draw[
                    very thick,
                    line width=2pt,
                    draw=\strongprivacy,
                    line cap=round,
                    dash pattern=on 1.5pt off 3.0pt
                ] (-0.25,-0.5) -- (0.0,-0.5);
                \node[right] at (0.1,-0.5)
                    {\small $\eqref{eq:strong-privacy}$};
                \coordinate (A) at (-0.25,-0.75);
                \coordinate (B) at (0.0,-0.75);
                \coordinate (M) at ($(A)!0.57!(B)$);
                \draw[very thick, line width=2pt, draw=\weakprivacy, line cap=round] (A) -- (M);
                \draw[very thick, line width=2pt, draw=\strongprivacy, line cap=round] (M) -- (B);
                \node[right] at (0.0,-0.75)
                    {\small $\min\{\!\eqref{eq:weak-privacy},\!\eqref{eq:strong-privacy}\!\}$};
            \end{scope}
        \end{tikzpicture}
        \captionsetup{justification=centering, skip=2pt}
        \captionof{figure}{Generalization error \\as a function of $\sdp$.}\label{fig:nonmonotonic}
    \end{minipage}
    which is asymptotically tight in 
    $\sigma_{\operatorname{DP}}$ due to~\eqref{eq:ub-wang}. 
    In other words, there exists $\varepsilon_{\operatorname{threshold}} \in (0,1)$ such that, for $\varepsilon_{\operatorname{strong}} \leq \varepsilon_{\operatorname{threshold}}$, we have $\operatorname{LB}_{\operatorname{strong}} \in \Theta_{\varepsilon_{\operatorname{strong}}}(\varepsilon_{\operatorname{strong}})$ when we consider parameters other than $\varepsilon_{\operatorname{strong}}$ fixed.
    Generalization exhibits a similar behavior, following from~\cite[Lemma 4.1]{boudou2025generalization}.
    \looseness=-1
\end{corollary}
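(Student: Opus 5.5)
The corollary follows by combining the upper bound of \Cref{th:byz-ub-convex}\,(i), the lower bounds of \Cref{th:trilemma-smea}, and \Cref{lem:dp-to-stability}, treated regime by regime.

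\textbf{Reparametrization.} Substitute $\sigma_{\operatorname{DP}} = \frac{2C\sqrt{2T\ln(1/\delta)}}{m\varepsilon}$ to get
\[
\frac{\varepsilon}{\sqrt{\ln(1/\delta)}} = 2\sqrt{2}\,\frac{C\sqrt{T}}{m\,\sigma_{\operatorname{DP}}} = 2\sqrt{2}\,\frac{\sigma_{\operatorname{base}}}{\sigma_{\operatorname{DP}}}.
\]
Hence the condition $\frac{\varepsilon}{\sqrt{\ln(1/\delta)}} \in \Omega(1)$ holds if and only if $\sigma_{\operatorname{DP}} \in \mathcal{O}(\sigma_{\operatorname{base}})$. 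This gives the stated equivalence of the two descriptions of the weak regime.

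\textbf{Weak regime.} For the upper bound, use \eqref{eq:byz-cvx-eq} with the \texttt{SMEA} estimate $\sqrt{\kappa} \in \mathcal{O}\big(\sqrt{\frac{f}{n-f}}\,(1+\frac{f}{n-2f})\big)$. Since $\frac{f}{n-f} \geq \alpha_{\min}$ is bounded below, this is $\mathcal{O}\big(\sqrt{\frac{f}{n-2f}}\big)$ up to constants. Also use $\sqrt{1+\frac{d}{n-f}} \leq 1+\sqrt{\frac{d}{n-f}}$. The $\sigma_{\operatorname{DP}}$-dependent part of the bound is then
\[
\sigma_{\operatorname{DP}} \cdot \mathcal{O}\Big(\gamma C T \sqrt{\tfrac{f}{n-2f}}\,\big(1+\sqrt{\tfrac{d}{n-f}}\big)\Big).
\]
For the lower bound, the weak-regime hypothesis of \Cref{th:trilemma-smea} is exactly our assumption, so \eqref{smea-byz-lb-eq-d} supplies the matching $\Omega$ term with the same coefficient of $\sigma_{\operatorname{DP}}$. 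The $\Theta$ in \eqref{eq:weak-privacy} refers to this $\sigma_{\operatorname{DP}}$-dependent component. The remaining terms do not depend on $\sigma_{\operatorname{DP}}$ and appear identically in both the upper and lower bounds.

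\textbf{Strong regime, upper bound.} Apply \Cref{lem:dp-to-stability} as in \eqref{eq:ub-wang}. For $\varepsilon \leq 1$, convexity of $x \mapsto e^x$ gives $e^\varepsilon - 1 \leq (e-1)\varepsilon$. Together with $\delta \leq \varepsilon$, stability is at most $(e+1)\ell_\infty\varepsilon$. Inverting the calibration, $\varepsilon = \frac{2C\sqrt{2T\ln(1/\delta)}}{m\sigma_{\operatorname{DP}}}$, which yields \eqref{eq:strong-privacy}.

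\textbf{Strong regime, lower bound and threshold.} Reuse the decomposition \eqref{eq:LB-proof-technique}, $\operatorname{LB} \geq \mathfrak{I}\,\mathfrak{P}$. The key step is $\mathfrak{P} = \Phi(z_\alpha)-\Phi(z_\alpha-\Delta)$ with $\Delta \propto \varepsilon/\sqrt{\ln(1/\delta)}$.

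A Taylor (mean value) argument gives $\mathfrak{P} = \phi(\eta)\Delta$ for some $\eta \in [z_\alpha - \Delta, z_\alpha]$. For $\Delta \leq 1$, $\phi(\eta)$ lies between $\min\{\phi(z_\alpha),\phi(z_\alpha-1)\}$ and $\phi(0)$. Hence $\mathfrak{P} \in \Theta(\Delta)$, and this fixes $\varepsilon_{\operatorname{threshold}}$ as the value at which $\Delta = 1$, capped at $1$.

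With all other parameters fixed (in particular $\delta$ and $T$), we get $\operatorname{LB}_{\operatorname{strong}} = \Omega(\mathfrak{I}_{\operatorname{strong}}\,\varepsilon)$. Combined with the $\mathcal{O}(\varepsilon)$ upper bound, this gives $\Theta_\varepsilon(\varepsilon)$.

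\textbf{Main obstacle.} The delicate step is showing that $\mathfrak{I}_{\operatorname{strong}}$ stays bounded below by a positive constant as $\sigma_{\operatorname{DP}}$ grows; otherwise the $\Omega(\varepsilon)$ lower bound could collapse. I would extract from the proof of \Cref{th:trilemma-smea} that the conditional instability scales like $\sigma_{\operatorname{DP}}$ times spectral factors, hence is nondecreasing in $\sigma_{\operatorname{DP}}$. I would then check that the loss construction remains valid: it must respect $\ell_\infty$-boundedness through the projection onto $\varTheta$, and its conditional gap must be bounded below uniformly in $\varepsilon$.

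Finally, the generalization statement follows from the stability-to-generalization transfer \cite[Lemma 4.1]{boudou2025generalization}.
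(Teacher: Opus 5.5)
Your route is the paper's own. You use the reparametrization $\frac{\varepsilon}{\sqrt{\ln(1/\delta)}}=2\sqrt{2}\,\sigma_{\operatorname{base}}/\sigma_{\operatorname{DP}}$, match \Cref{th:byz-ub-convex} against \Cref{th:trilemma-smea} in the weak regime, apply \Cref{lem:dp-to-stability} with $e^{\varepsilon}-1\le(e-1)\varepsilon$ for the $1/\sigma_{\operatorname{DP}}$ rate, and show $\mathfrak{P}\in\Theta(\Delta)$; your mean-value argument is a cleaner version of the paper's Taylor expansion. The gap is the step you flag yourself, and the argument you sketch for it does not survive the projection requirement you add right after it.

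The instability proportional to $\sigma_{\operatorname{DP}}$ that you would extract from \Cref{th:trilemma-smea} belongs to the unprojected linear loss $\ell(\theta;z)=\theta^\intercal z$ on $\R^d$. That loss is not $\ell_\infty$-bounded, so your $\mathcal{O}(\varepsilon)$ side via \Cref{lem:dp-to-stability} does not apply to it. Worse, for that construction the product does not shrink at all. You have $\mathfrak{I}\gtrsim\gamma C T\sigma_{\operatorname{DP}}\sqrt{\frac{f}{n-2f}}\big(1+\sqrt{\frac{d}{n-f}}\big)$ and $\mathfrak{P}\gtrsim\Delta\propto\frac{C\sqrt{T}}{m\sigma_{\operatorname{DP}}}$. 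Hence $\mathfrak{I}\,\mathfrak{P}$ is bounded below by a quantity of order $\gamma C^2T^{3/2}\sqrt{\frac{f}{n-2f}}\big(1+\sqrt{\frac{d}{n-f}}\big)/m$, independent of $\varepsilon$ and $\delta$. The lower bound plateaus instead of being $\Theta(\varepsilon)$.

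Once you project onto a compact $\varTheta$ to bound the loss, $\mathfrak{I}\le\ell_\infty$ and the linear growth in $\sigma_{\operatorname{DP}}$ is gone. Monotonicity in $\sigma_{\operatorname{DP}}$ therefore cannot be read off \Cref{th:trilemma-smea}. What is missing is a separate lower bound, uniform in $\varepsilon$, on the conditional gap of the \emph{projected} dynamics, in a regime where each step's noise eventually dwarfs the diameter of $\varTheta$. This looks provable:
\begin{itemize}
\item With a linear loss, the gradients, and hence the $\mathrm{SMEA}$ events $E_{2,t},E_{3,t}$, do not depend on $\theta_t$. Up to the vanishing shift $\frac{C}{m}e_1$, they are invariant under rescaling $\sigma_{\operatorname{DP}}$.
\item For large $\sigma_{\operatorname{DP}}$, taking $\varTheta$ a centered ball, the last iterate is essentially the radius times $-R_{T-1}/\|R_{T-1}\|_2$. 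Since drift and noise in $R_{T-1}$ both scale linearly in $\sigma_{\operatorname{DP}}$, its $e_1$-component has a mean whose sign is set by the Byzantine drift and whose size does not depend on $\sigma_{\operatorname{DP}}$.
\end{itemize}
This is a new computation, absent from your plan and from the paper. The paper only writes $\operatorname{LB}_{\operatorname{strong}}\in\Theta\big(\mathfrak{I}_{\operatorname{strong}}\,\varepsilon/\sqrt{\ln(1/\delta)}\big)$ and treats $\mathfrak{I}_{\operatorname{strong}}$ as a fixed factor.

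There are also three smaller issues.
\begin{itemize}
\item Holding $\delta$ fixed while invoking $\delta\le\varepsilon$ is inconsistent as $\varepsilon\to0$; there \Cref{lem:dp-to-stability} only gives $\ell_\infty(e^{\varepsilon}-1+\delta)\to\ell_\infty\delta$. For the claim on $\operatorname{LB}_{\operatorname{strong}}$ itself you do not need the lemma. With a bounded loss, $\mathfrak{I}\le\ell_\infty$ and $\mathfrak{P}\le\Prob(E_1)\le\phi(0)\Delta$ give the $\mathcal{O}(\varepsilon)$ side directly. The lemma is needed only for the true stability, whose $1/\sigma_{\operatorname{DP}}$ rate holds as long as $\delta\lesssim\varepsilon$.
\item In the weak regime, $\sqrt{\kappa}\in\mathcal{O}\big(\sqrt{\frac{f}{n-2f}}\big)$ follows from the upper constraint $\frac{f}{n-f}<\frac12$ (so $\frac{f}{n-2f}<1$), not from $\alpha_{\min}$.
\item The $\sigma_{\operatorname{DP}}$-free terms do not coincide. \eqref{eq:byz-cvx-eq} carries $\sqrt{\kappa}$, of order up to $\sqrt{\frac{f}{n-2f}}$, while \eqref{smea-byz-lb-eq-d} carries only $\frac{f}{n-2f}$. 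This is why the paper needs $\frac{\sigma_{\operatorname{DP}}}{C}\big(1+\sqrt{\frac{d}{n-f}}\big)\in\Omega(1)$ for the stability itself, not just its $\sigma_{\operatorname{DP}}$-dependent part, to be of order \eqref{eq:weak-privacy}.
\end{itemize}
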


Overall, our results offer a comprehensive perspective on the interplay between LDP and Byzantine robustness on generalization error, as differential privacy can be characterized by the performance of MIAs~\cite{dong2022gaussian} and stability provides a tight control on the generalization error \cite[Lemma 4.1]{boudou2025generalization}.²

We next corroborate our theoretical results empirically, showing that the predicted non-monotonic behavior is also observed in practice.

\section{Empirical evaluations}\label{sec:numerical}

In this section, we empirically validate the non-monotonic behavior predicted by our theory. 
All simulations 
are implemented via the ByzFL library~\cite{gonzalez2025byzfl}.
More details and evaluations are presented in~\Cref{app:numerical}.

\paragraph{Empirical evaluation settings.}

For each of the three datasets considered, we consider a common evaluation protocol. 
While a single training run is inexpensive, estimating expected generalization requires repetitions over datasets, seeds, and noise multipliers, which is the main bottleneck. 
This motivates the use of a fixed setting with $n=7$ participants\footnote{The computational cost of SMEA scales poorly with $n$, hence the small number of participants.} and $m=200$ local samples per honest participant.  
The model test loss is evaluated on a held-out test set of $N_{\text{test}}=10000$ samples.

We consider three classification tasks: 
(i) synthetic data following~\cite{li2020federated_synthetic_data,pmlr-v151-noble22a} with logistic regression; 
(ii) MNIST~\cite{lecun1998mnist} with logistic regression; 
and (iii) CIFAR-10~\cite{krizhevsky2009learningCIFAR} with a convolutional network. 

All experiments use deterministic initialization and \texttt{Safe-DGD} with learning rate $\gamma$, $T$ iterations, $\operatorname{SMEA}$ aggregation, and Gaussian noise $\sdp = \sigma \times \sigma_{\text{base}}$, calibrated via a multiplier $\sigma$, which serves as a proxy for the privacy level and scales the base noise $\sigma_{\text{base}} \vcentcolon= \frac{2C\sqrt{2T}}{m}$. 
We evaluate $\sigma \in \{0.125, 0.25, 0.5, 1, 2, 4, 8, 16, 32\}$, reporting both individual realizations and Monte Carlo estimates over datasets and seeds.
Byzantine participants ($f=1$ unless otherwise stated) use a state-of-the-art $\operatorname{Opt\text{-}IPM}$ attack~\cite{xie2020FOE}, relying on line search at each iteration to compute the most damaging perturbation bypassing $\operatorname{SMEA}$.

For (i)--(ii), the objective is convex and smooth. 
To be consistent with our theory, we enforce Lipschitzness and smoothness by $L_2$-normalizing features ($\|x\|_2 \leq 2$)~\cite[Proposition F.1]{NEURIPS2022_04b42392}, and bounded loss by projecting parameters onto a ball of radius $100$ after updates, enabling calibrated privacy noise.
For CIFAR-10, gradients are clipped at $3.0$ and parameters layerwise projected onto a ball of radius $10$.
\looseness=-1

Below, we present the results, along with additional experiment-specific details.

\begin{figure*}[t]
    \centering
    \includegraphics[width=0.99\linewidth]{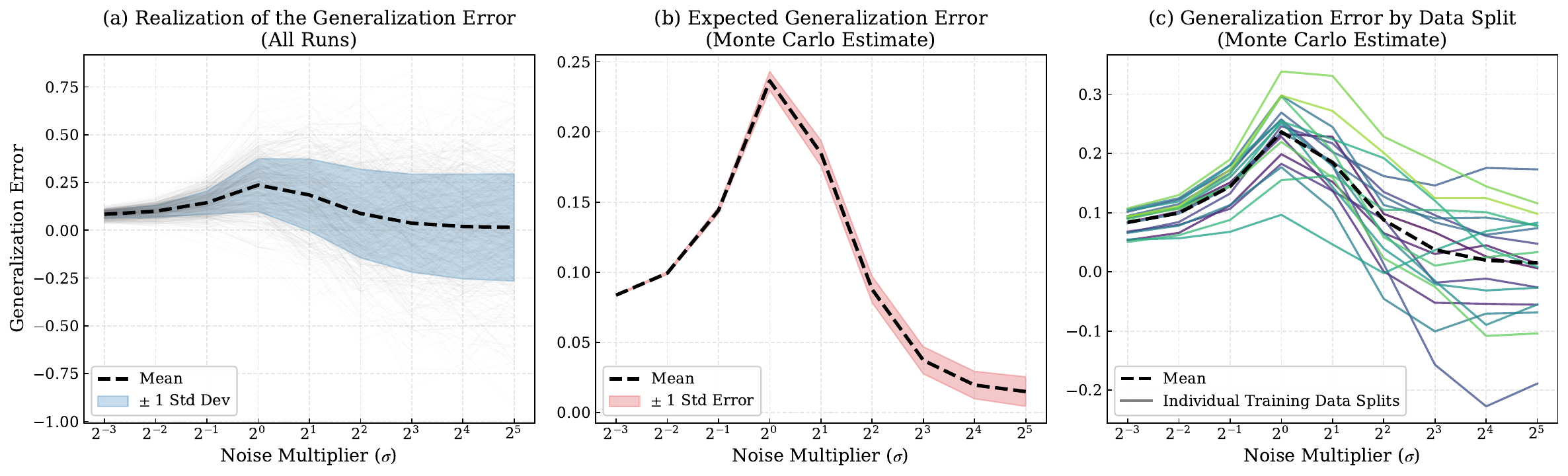}
    \caption{
        \textbf{Synthetic dataset.}
        Impact of LDP on generalization error under Byzantine failures.
        The Byzantine participant ($f=1$) executes the $\operatorname{Opt-IPM}$ attack, the server uses $\operatorname{SMEA}$ as defense, with $\gamma=0.4$, $T=600$, and $d=50$.
        \textbf{(a)} Realizations of the generalization error (as a random variable) demonstrating high variance across all independent runs (variance estimated on all runs). 
        \textbf{(b)} The Monte Carlo estimate of the expected generalization error, where reported variance represents the mean estimation standard error.
        \textbf{(c)} Expected generalization error stratified by individual training dataset, isolating the variance attributable to data samples versus differential privacy noise.
        }\label{fig:mc_gen_error}
\end{figure*}

\begin{figure*}[t]
    \centering
    \includegraphics[width=0.329\linewidth]{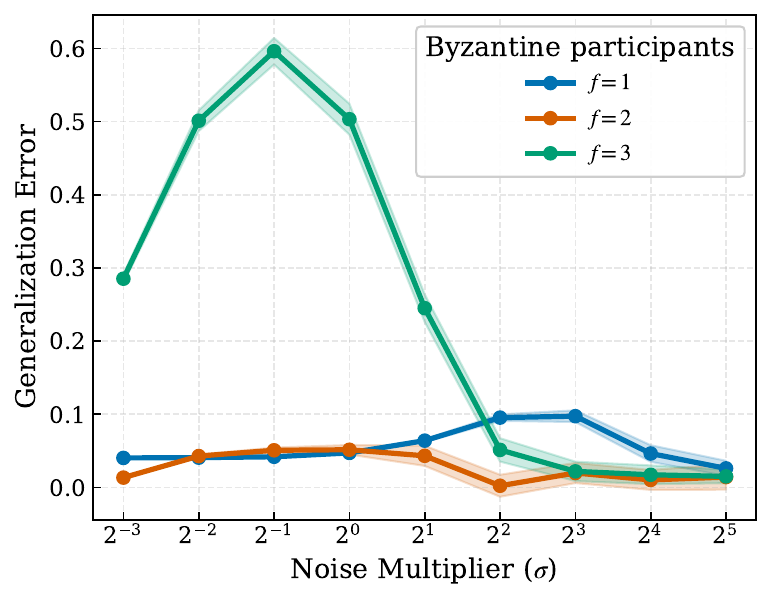}
    \includegraphics[width=0.329\linewidth]{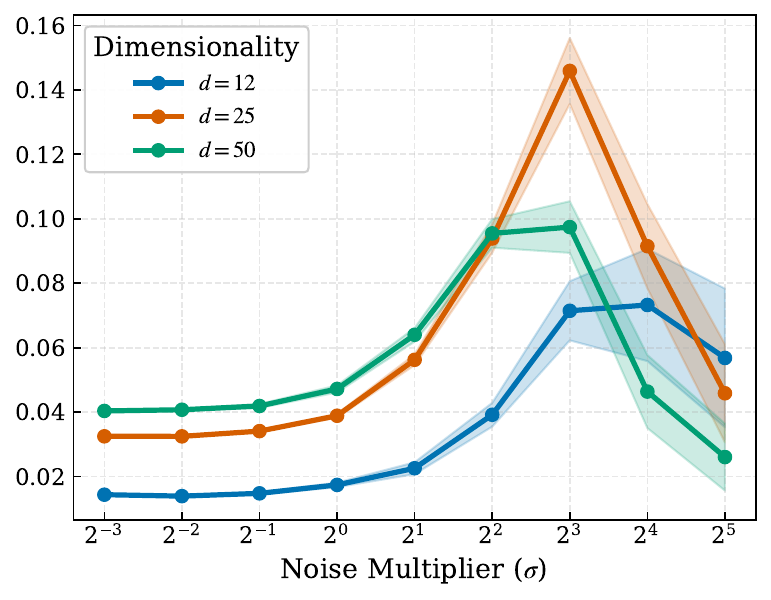}
    \includegraphics[width=0.329\linewidth]{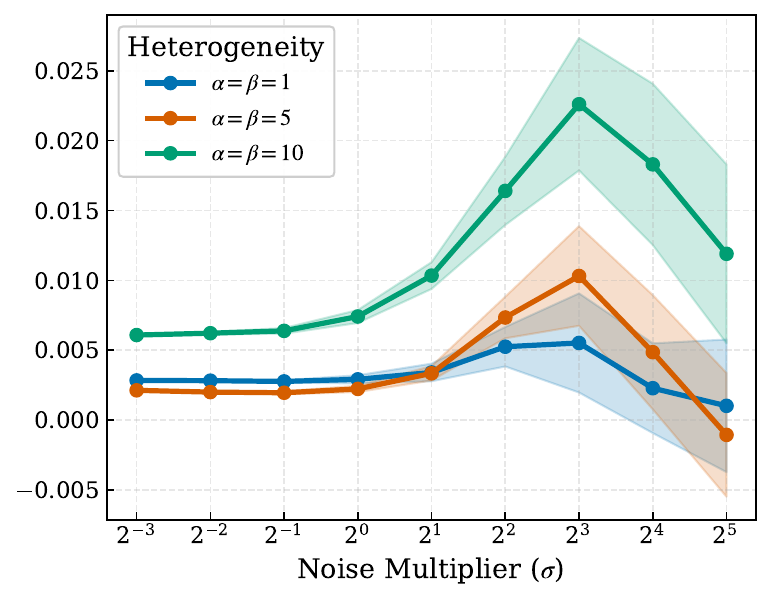}
    \caption{
        \textbf{Synthetic dataset.}
        Absolute expected generalization error under variations of parameters. 
        \textbf{(a)} Left: varying $f \in \{1, 2, 3\}$ for fixed $d=50$. 
        \textbf{(b)} Middle: varying $d \in \{12, 25, 50\}$ for fixed $f=1$. 
        \textbf{(c)} Right: varying heterogeneity for fixed $f=1$, where $\alpha, \beta$ are defined in~\cite{li2020federated_synthetic_data} (the larger $\alpha$ and $\beta$, the more heterogenous).
        All evaluations share identical baseline parameters $T=200$ and $\gamma=0.2$.
        \looseness=-1
    }\label{fig:generalization_error_comparison}
\end{figure*}

\begin{figure*}[t]
    \centering
    \includegraphics[width=0.99\linewidth]{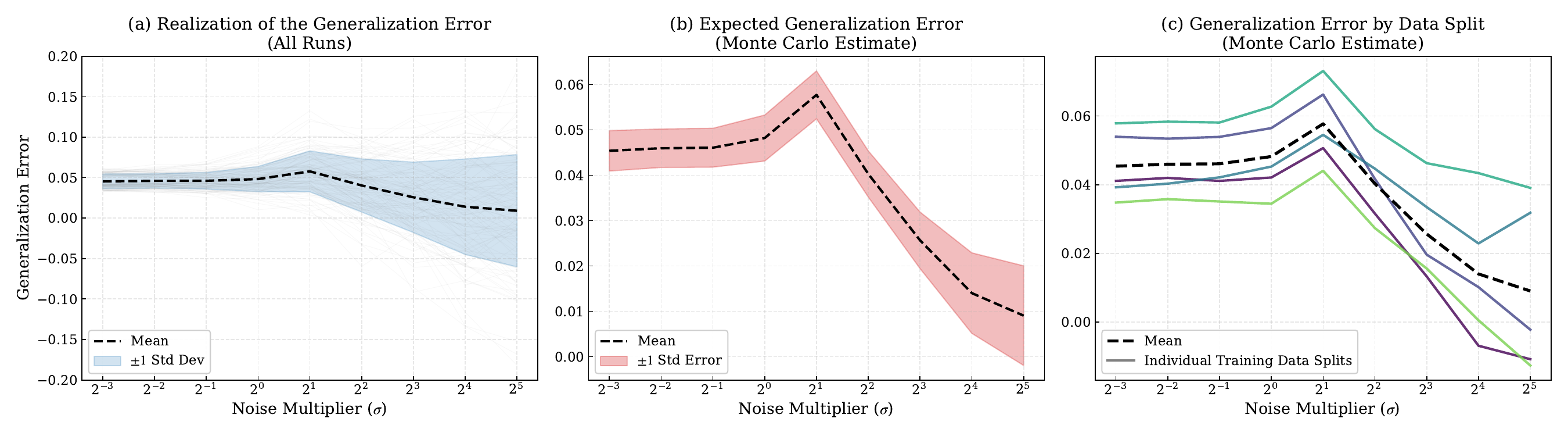}
    \caption{
        \textbf{MNIST}. Analogous to~\Cref{fig:mc_gen_error} with $\gamma = 0.4$, $T=400$, $f=1$.
        }\label{fig:mc_gen_error_mnist-lr}
\end{figure*}

\begin{figure*}[t]
    \centering
    \includegraphics[width=0.99\linewidth]{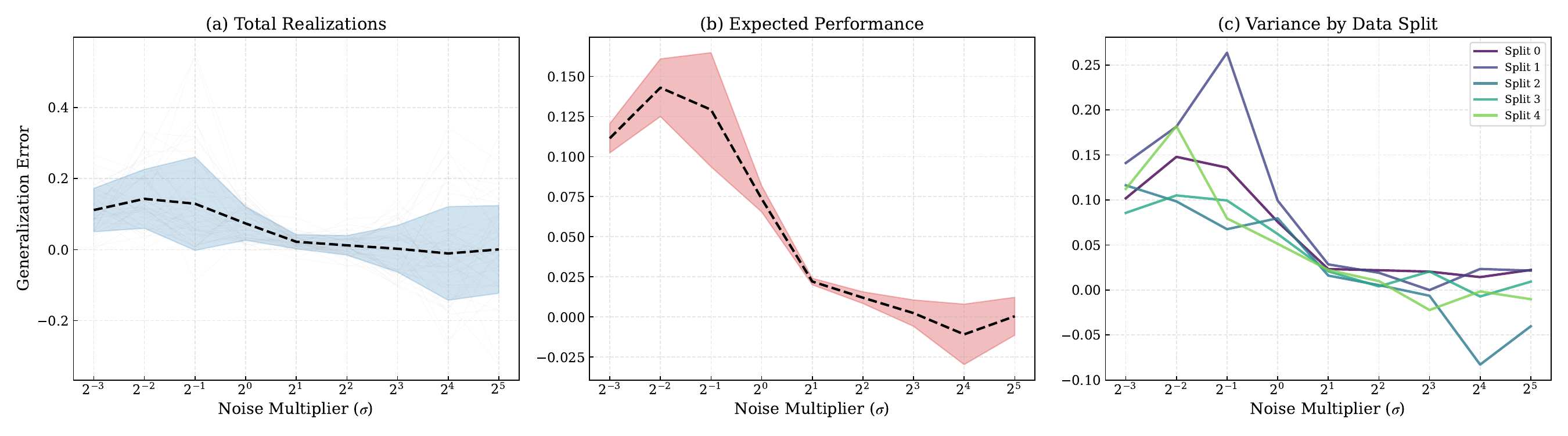}
    \caption{
        \textbf{CIFAR-10}. Analogous to~\Cref{fig:mc_gen_error} with $\gamma = 0.1$, $T=400$, $f=1$.
        The convolutional network architecture consists of three convolutional layers (16, 32, 64 filters), ReLU activations, max-pooling, and a 1024-dim feature map followed by fully connected layers (128, 64, 10). 
        }\label{fig:mc_gen_error_cifar_cnn}
\end{figure*}

\paragraph{Results and discussion.}
Our empirical results in~\Cref{fig:mc_gen_error,fig:generalization_error_comparison,fig:mc_gen_error_mnist-lr,fig:mc_gen_error_cifar_cnn} show consistent and interpretable trends across settings. 
For logistic regression on synthetic data (\Cref{fig:mc_gen_error,fig:generalization_error_comparison}) and MNIST (\Cref{fig:mc_gen_error_mnist-lr}), which satisfy the assumptions of our theory, the observed dynamics closely match our predictions. 
On CIFAR-10 (\Cref{fig:mc_gen_error_cifar_cnn}), where we train a CNN and thus operate outside the convex setting covered by our analysis, the behavior remains qualitatively consistent and exhibits meaningful structure despite more challenging interpretation, suggesting promising directions for further investigation.\looseness=-1

Across~\Cref{fig:mc_gen_error,fig:mc_gen_error_mnist-lr}, the dependence on the noise multiplier aligns with our bounds: it grows linearly in the weak privacy regime.
\Cref{fig:generalization_error_comparison} further highlights other dynamics: in the weak privacy regime, higher dimension consistently leads to worse generalization, in line with the predicted ordering, although the dependence departs from the square-root behavior suggested by the bound.
In the strong privacy regime, the behavior becomes less regular, suggesting the presence of multiple interacting effects.
The dependence on the Byzantine fraction exhibits richer structure.
The results are consistent with the change in regime in our bounds, which distinguish between $\frac{f}{n}\leq\frac{1}{3}$~\eqref{smea-byz-lb-eq-d} ($f=1,2$) and $\frac{1}{3}\leq\frac{f}{n}<\frac{1}{2}$~\eqref{smea-byz-lb-eq-high-f-d} ($f=3$).
For each fixed $f$, the behavior remains non-monotonic, as predicted, while the regime transition depends on $f$.
Varying $\frac{f}{n-f}$, the dynamic is consistent with our bounds (inherited from~\cite{boudou2025generalization}), including a pronounced increase in attack severity at $f=3$.
Finally, the larger the heterogeneity, the less the model generalize.
\looseness=-1 

Overall, these results support the qualitative trends of our analysis while revealing additional structure not captured by the bounds. 
As predicted, we observe an initial increase in generalization error under small noise (weak privacy), followed by a regime switch where the error decreases as noise increases (strong privacy).
We leave several directions future work, including alternative optimizers, varying number of participants, different aggregation rules, and relaxing assumptions beyond convexity. \looseness=-1

\section{Conclusion}\label{sec:conclusion}
Prior optimization-focused theory overlooks the joint impact of LDP and Byzantine robustness on generalization, leaving an incomplete picture. 
We close this gap and uncover a surprising non-monotonic, two-phase behavior. 
Through tight stability bounds, we show that privacy noise actively degrades the generalization error in weak privacy regime, yet drives it to zero as privacy strengthens. 
Our theory offers several predictive insights and provides a qualitative and quantitative characterization of the transition between the two regimes.
We moreover corroborate these findings in empirical evaluations, validating the usefulness of our bounds as predictive of the generalization error dynamics accross multiple critical parameters.
While these findings offer a theoretical counterpoint to the optimization trilemma, practical privacy budgets typically lie in the weak privacy regime. 
Thus, achieving privacy, robustness, and generalization simultaneously in practical distributed learning is fundamentally constrained.
Moving forward, this necessitates the exploration of ways to systematically relax the threat models, whether by integrating cryptographic primitives to bound misbehaviors~\cite{farhadkhani2022equivalence,boudou2025generalization}, 
or by relaxing LDP as in~\cite{allouah2025towards}, where a shared randomness seed among honest clients is assumed, enabling correlated noise mechanisms that yield improved optimization guarantees.


\begin{ack}
The work of Thomas Boudou and Aurélien Bellet is supported by grant ANR 22-PECY-0002 IPOP (Interdisciplinary Project on Privacy) project of the Cybersecurity PEPR.\
\end{ack}

\clearpage
{\small
\bibliography{references.bib}
\bibliographystyle{abbrv}}

\clearpage
\appendix

\section{Stability or DP implies generalization in robust distributed learning}\label{gen-and-Byzantine}
In this section, we focus on the formal link between uniform stability and generalization under Byzantine failures. 
The proof follows classic arguments~\cite{Bousquet2002StabilityAG}.
\begin{proposition}\label{lemmagen1}
    Consider the setting described in~\Cref{II-problem-formulation}. 
    let $\mc{A}$ an $\varepsilon$-\textit{uniformly stable} (randomized) distributed algorithm. Then, 
    \[
        | \mb{E}_{\mc{S}, \mc{A}}[ R_{\mc{H}}(\mc{A}(\mc{S})) - \widehat{R}_{\mc{H}}(\mc{A}(\mc{S})) ] | \leq \varepsilon.  
    \]
\end{proposition}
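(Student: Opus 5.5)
The plan is to follow the classical replace-one-sample (ghost sample) argument of~\cite{Bousquet2002StabilityAG}, adapted to the distributed setting in which each honest participant $i\in\mc{H}$ draws its $m$ samples from its own distribution $p_i$. Throughout, the Byzantine behaviour is part of the (randomized) algorithm $\mc{A}$. Definition~\ref{uniform-stability-def} already quantifies over neighboring honest datasets, whatever the adversary does, so the Byzantine participants need no special treatment.

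First I introduce an independent copy $\mc{S}'=\{z'^{(i,j)}\}$ of the honest dataset, with $z'^{(i,j)}\sim p_i$ independent of everything else. For each index $(i,j)$, let $\mc{S}^{(i,j)}$ be $\mc{S}$ with $z^{(i,j)}$ replaced by $z'^{(i,j)}$. Since $z'^{(i,j)}$ is independent of $\mc{S}$ and of the algorithm's randomness, and has law $p_i$,
\[
\mb{E}_{\mc{S},\mc{A}}\big[R_{\mc{H}}(\mc{A}(\mc{S}))\big]=\frac{1}{n-f}\sum_{i\in\mc{H}}\frac{1}{m}\sum_{j=1}^m \mb{E}\big[\ell(\mc{A}(\mc{S});z'^{(i,j)})\big].
\]
Here the average over $j$ is trivial, because each term equals $\mb{E}_{z\sim p_i}\ell$.

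Next comes the exchangeability step. The pair $(\mc{S}^{(i,j)}, z^{(i,j)})$ has the same joint law as $(\mc{S}, z'^{(i,j)})$, since swapping two i.i.d.\ draws from the same $p_i$ preserves the distribution. Hence $\mb{E}[\ell(\mc{A}(\mc{S});z'^{(i,j)})]=\mb{E}[\ell(\mc{A}(\mc{S}^{(i,j)});z^{(i,j)})]$. The empirical term can be written as $\mb{E}[\widehat{R}_{\mc{H}}(\mc{A}(\mc{S}))]=\frac{1}{(n-f)m}\sum_{i,j}\mb{E}[\ell(\mc{A}(\mc{S});z^{(i,j)})]$, which gives
\[
\mb{E}\big[R_{\mc{H}}-\widehat{R}_{\mc{H}}\big]=\frac{1}{(n-f)m}\sum_{i\in\mc{H}}\sum_{j=1}^m \mb{E}_{\mc{S},\mc{S}'}\Big[\mb{E}_{\mc{A}}\big[\ell(\mc{A}(\mc{S}^{(i,j)});z^{(i,j)})-\ell(\mc{A}(\mc{S});z^{(i,j)})\big]\Big].
\]

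Finally, $\mc{S}$ and $\mc{S}^{(i,j)}$ are neighboring honest datasets. Conditionally on the data, I apply Definition~\ref{uniform-stability-def} with the evaluation point $z=z^{(i,j)}$, bounding the inner expectation by $\sup_z$. Applying it in both orders, $(\mc{S}^{(i,j)},\mc{S})$ and $(\mc{S},\mc{S}^{(i,j)})$, bounds each term in absolute value by $\varepsilon$, and averaging gives the claim.

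The main obstacle is the exchangeability step. It requires that the algorithm's internal randomness (DP noise, minibatch sampling, and the Byzantine strategy) be independent of the data and treat the data only through its values. In particular, the Byzantine vectors must be allowed to depend on the honest data, but only as a measurable function of what is observed, so that the law of $\mc{A}(\mc{S}^{(i,j)})$ given the data is obtained by substitution. I would state this explicitly. The non-identical distributions $p_i$ across participants cause no issue, because each swap is made within a single participant.
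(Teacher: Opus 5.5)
Your proposal is correct and follows essentially the same replace-one (ghost-sample) argument as the paper. The paper swaps on the empirical side and evaluates at $z'^{(i,j)}$, while you swap on the population side and evaluate at $z^{(i,j)}$; the two are equivalent. Applying stability in both orders, $(\mc{S}^{(i,j)},\mc{S})$ and $(\mc{S},\mc{S}^{(i,j)})$, is what gives the absolute-value bound; the paper's proof writes out only $\delta \leq \varepsilon$ before concluding.
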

\begin{proof}
    Recall that $\mc{S}$ denotes the collective dataset of honest participants. 
    Let \( \mc{S}' = \{ z'^{(i,j)}, i \in \mc{H}, j \in \{1, \ldots, m \} \} \) be another independently sampled dataset for honest participants where for all $i \in \mc{H}$, $\mc{D}'_i = \{z'^{(i,1)}, \ldots, z'^{(i,m)} \}$ is composed of $m$ i.i.d.\ data points (or samples) drawn from distribution $p_i$. Let $\mc{S}^{(i,j)}$ be a neighboring dataset of $\mc{S}$ with only a single differing data sample $z'^{(i,j)}$ at index $(i,j) \in \mc{H} \times \{1, \ldots, m\}$. Then, 
    \begin{align*}
        \mb{E}_{\mc{S}, \mc{A}}[\widehat{R}_{\mc{H}}(\mc{A}(\mc{S}))]
        & = \mb{E}_{\mc{S}, \mc{S}', \mc{A}} \left[ 
            \frac{1}{|\mc{H}|} \sum_{i\in\mc{H}} \frac{1}{m} \sum_{j = 1}^m \ell(\mc{A}(\mc{S}^{(i,j)}), z'^{(i,j)}) 
        \right] \\
        & = \mb{E}_{\mc{S}, \mc{S}', \mc{A}} \left[ 
            \frac{1}{|\mc{H}|} \sum_{i\in\mc{H}} \frac{1}{m} \sum_{j = 1}^m \ell(\mc{A}(\mc{S}), z'^{(i,j)}) 
        \right] + \delta \\
        & = \mb{E}_{\mc{S}, \mc{A}}[ R_{\mc{H}}(\mc{A}(\mc{S})) ] + \delta
    \end{align*}
    with
    \begin{multline*}
    \delta 
    = \mb{E}_{\mc{S}, \mc{S}'} \left[ \frac{1}{|\mc{H}|} \sum_{i\in\mc{H}} \frac{1}{m} \sum_{j = 1}^m \mb{E}_{\mc{A}} \left[ \ell(\mc{A}(\mc{S}^{(i,j)}), z'^{(i,j)}) - \ell(\mc{A}(\mc{S}), z'^{(i,j)}) \right] \right] \\ 
    \leq \mb{E}_{\mc{S}}\mb{E}_{\mc{S}'} \left[ \frac{1}{|\mc{H}|} \sum_{i\in\mc{H}} \frac{1}{m} \sum_{j = 1}^m \sup_{z \in \mathcal{Z}}\mb{E}_{\mc{A}} \left[ \ell(\mc{A}(\mc{S}^{(i,j)}), z) - \ell(\mc{A}(\mc{S}), z) \right] \right] 
    \leq \varepsilon 
    \end{multline*}
    Substituting from the above equation proves the lemma.
\end{proof}

\subsection{Differential privacy implies stability}\label{app:dpimpliesstability}

We demonstrate that a \textit{strong} differential privacy constraint is sufficient to establish non-oblivious stability result, even though implementing such \textit{strong} differential privacy may be impractical. 
As a result, each of our bound should be interpreted as $\min\{\ell_\infty, (1), (2)\}$, where $(1)$ represents the bound on uniform algorithmic stability implied by the privacy constraint, and $(2)$ corresponds to the bound on uniform algorithmic stability demonstrated through the sensitivity analysis of a particular iterative optimization algorithm. 
The following result is proven in \cite{wang2016privacylearning}.
\begin{replemma}{lem:dp-to-stability}
    Any $(\varepsilon,\delta)$-DP algorithm is $\ell_\infty (e^\varepsilon - 1 + \delta)$-uniformly stable under an $\ell_\infty$-bounded~loss.
\end{replemma}
\begin{proof}
    For any $S$ and $S'$ two neighboring datasets and any $x\in\mc{X}$. Let the probability density (with respect to the Lebesgue measure) defined by $\mc{A}(S)$ and $\mc{A}(S')$ be $p$ and $p'$ respectively. 
    We define the event $E = \{ \theta \in \Theta; p(\theta) \geq p'(\theta) \}$. Then we can bound the uniform algorithmic stability as follows
    \begin{align*}
        \left| \mb{E}_{\theta \sim \mc{A}(S)} [\ell (\theta, x)] - \mb{E}_{\theta \sim \mc{A}(S')} [\ell (\theta, x)] \right| 
        & = \left| \int_{\theta\in\Theta} \ell (\theta,x)p(\theta)d\theta - \int_{\theta\in\Theta} \ell (\theta,x)p'(\theta)d\theta \right| \\
        & = \left| \int_{\theta\in\Theta} \ell (\theta,x) \left( p(\theta) - p'(\theta) \right) d\theta \right| \\
        & \leq \max\{ \left( \int_{\theta\in\Theta} \ell (\theta,x) \left( p(\theta) - p'(\theta) \right) d\theta \right)_+, \\ 
        & \quad\quad\quad\quad \left( \int_{\theta\in\Theta} \ell (\theta,x) \left( p(\theta) - p'(\theta) \right) d\theta \right)_- \} 
    \end{align*}
    
    The definition of DP is symmetric, so that we have the same bound irrespective of the fact that cumulative positive quantity is greater than cumulative negative quantity or the opposite. 
    We show it here for the positive part.
    \begin{align*}
        \left| \mb{E}_{\theta \sim \mc{A}(S)} [\ell (\theta, x)] - \mb{E}_{\theta \sim \mc{A}(S')} [\ell (\theta, x)] \right|
        & \leq  \left( \int_{\theta\in\Theta} \ell (\theta,x) p(\theta) - p'(\theta) d\theta \right)_+ \\
        & \underset{\ell\geq0}{=} \int_{\theta\in\Theta} \ell (\theta,x) \left( p(\theta) - p'(\theta) \right)_+ d\theta \\
        & \leq \sup_{x\in\mc{X}, \theta \in E}\ell(\theta,x) \int_E \left( p(\theta) - p'(\theta) \right) d\theta \\
        & \leq \ell_\infty \left( \mb{P}(\mc{A}(S) \in E) - \mb{P}(\mc{A}(S') \in E) \right) \\
        &\leq \ell_\infty \left( (e^\varepsilon - 1) \mb{P}(\mc{A}(S') \in E) + \delta \right) \\
        & \underset{\varepsilon > 0}{\leq} \ell_\infty \left( e^\varepsilon - 1 + \delta \right)
    \end{align*}
\end{proof}

\begin{remark}
    For the bound to be meaningful, we need $e^\varepsilon - 1 + \delta \leq 1$, that is $\varepsilon \leq \ln{\left(1+(1-\delta)\right)  }$. This is generally a difficult condition to hold.
\end{remark}

\section{Regularity assumptions and expansivity results}\label{loss-regularities}

To analyze the stability of iterative optimization algorithms, we typically rely on regularity assumptions on the loss function.
We begin by stating standard regularity assumptions~\cite{hardt2016train} used throughout the paper, along with classical results that follow from them.
\begin{definition}[Regularity assumptions]
    Let $\ell: \Theta \to \mb{R}$ differentiable, $\Theta \subset \mb{R}^d$. 
    \begin{itemize}       
        \item[\textit{(i)}] $\ell$ is $C-$Lipschitz continuous (or $C-$Lipschitz) if there exists $C>0$ such that,
        \[
            \forall u, v \in \Theta,\quad |\ell(u)-\ell(v)| \leq C \|u-v\|_2.
        \] 
        This property is equivalent to the norm of the gradient of $\ell$ being uniformly bounded by $C$.
        
        \item[\textit{(ii)}] $\ell$ is $L-$Lipschitz smooth (or $L-$smooth) if its gradient is $L$-Lipschitz. 
        This is equivalent to the following smoothness inequality: 
        \begin{equation}\label{smoothness-ineq2}
            \forall u, v \in \Theta,\quad \ell(u) \leq \ell(v) + \langle \nabla \ell(v), u-v \rangle + \frac{L}{2} \| u-v \|_2^2.
        \end{equation}

        \item[\textit{(iii)}] $\ell$ is convex if and only if, $\forall u, v \in \Theta$,
        $
            \langle \nabla \ell (u) - \nabla \ell (v), u-v\rangle \geq 0.
        $
        
        Moreover it is $\mu-$strongly convex if there exists $\mu > 0$ such that,
        \begin{equation*}
            \forall u, v \in \Theta,\quad \ell(u) \geq \ell(v) + \langle \nabla \ell(v), u-v \rangle + \frac{\mu}{2} \| u-v \|_2^2.
        \end{equation*}
        We note that for a strongly convex function to have bounded gradients, it must be defined on a convex compact set, which then implies boundedness of both the loss and the gradient. 
        To ensure this, we can either penalize the problem or restrict the parameter domain and apply an Euclidean projection at every step. Throughout the paper, we will tacitly assume this when referring to strongly convex functions, 
        which does not limit the applicability of our analysis since the projection does not increase the distance between projected points.
    \end{itemize}
\end{definition}
Notably, convex and $L-$smooth functions satisfy an important property known as co-coercivity (see Proposition 5.4 in~\cite{bach-ltfp}). 
\begin{lemma}
    Let $\ell: \Theta \to \mb{R}$ differentiable, $\Theta \subset \mb{R}^d$. 
    If $\ell$ is a convex and $L-$smooth function, then it satisfies the co-coercivity inequality,
    \begin{equation}\label{co-coercivity2}
        \forall u, v \in \Theta,\quad \langle \nabla \ell (u) - \nabla \ell (v), u-v\rangle \geq \frac{1}{L} \|\nabla \ell (u) - \nabla \ell (v)\|_2^2.
    \end{equation}
\end{lemma}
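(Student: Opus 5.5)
To prove co-coercivity, I would go through an auxiliary function that isolates the behaviour of $\ell$ around a fixed base point. Fix $v \in \Theta$ and set $h_v(w) \vcentcolon= \ell(w) - \langle \nabla \ell(v), w \rangle$. Since it differs from $\ell$ by a linear term, $h_v$ is convex and $L$-smooth. Its gradient is $\nabla h_v(w) = \nabla \ell(w) - \nabla \ell(v)$, which vanishes at $w = v$. By convexity this makes $v$ a global minimizer of $h_v$. I will assume $\Theta = \mb{R}^d$, or at least that the relevant gradient steps stay in $\Theta$; this is the standard setting of~\cite{bach-ltfp}. I would flag this as the main technical caveat, because the descent-step argument below needs to evaluate $h_v$ off the segment $[u,v]$.

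The key step is a lower bound on the suboptimality gap of $h_v$. Apply the smoothness inequality~\eqref{smoothness-ineq2} to $h_v$ between $u$ and the point $u - \frac{1}{L}\nabla h_v(u)$. This gives
\[
h_v(v) \leq h_v\big(u - \tfrac{1}{L}\nabla h_v(u)\big) \leq h_v(u) - \tfrac{1}{2L}\|\nabla h_v(u)\|_2^2 ,
\]
where the first inequality holds because $v$ minimizes $h_v$. Unfolding the definition of $h_v$ yields
\[
\ell(v) \leq \ell(u) - \langle \nabla \ell(v), u - v\rangle - \tfrac{1}{2L}\|\nabla \ell(u) - \nabla \ell(v)\|_2^2 ,
\]
which is a strengthened first-order convexity inequality.

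To finish, I would write the same inequality with the roles of $u$ and $v$ swapped:
\[
\ell(u) \leq \ell(v) - \langle \nabla \ell(u), v - u\rangle - \tfrac{1}{2L}\|\nabla \ell(u) - \nabla \ell(v)\|_2^2 .
\]
Adding the two inequalities makes the function values cancel. What remains is
\[
0 \leq \langle \nabla \ell(u) - \nabla \ell(v), u - v\rangle - \tfrac{1}{L}\|\nabla \ell(u)-\nabla \ell(v)\|_2^2 ,
\]
which is exactly~\eqref{co-coercivity2}. The only delicate point is the domain issue already mentioned. If $\Theta$ is a strict subset, I would extend $\ell$ to all of $\mb{R}^d$ as a convex $L$-smooth function, or simply state the lemma for $\Theta = \mb{R}^d$, which is how it is used in the expansivity analysis.
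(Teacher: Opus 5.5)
Your proof is correct and is the textbook argument: shift by a linear term so that $v$ minimizes $h_v$, take one gradient step, then symmetrize. The paper gives no proof of its own and only cites Proposition 5.4 of \cite{bach-ltfp}, which is stated on $\mathbb{R}^d$, so reading the lemma with $\Theta=\mathbb{R}^d$ is the right call.

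One caution about your alternative fix: a convex $L$-smooth function on a proper convex subset need not admit a convex $L$-smooth extension to $\mathbb{R}^d$ with the same $L$. This can fail even on open convex sets, precisely because your intermediate function-value inequality can fail there, so keep the $\Theta=\mathbb{R}^d$ formulation rather than the extension route.
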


We focus on robust variants of distributed gradient descent and stochastic gradient descent.
Below we recall properties of the one-step update. Let $\mc{A} \in \{\mathrm{GD}, \mathrm{SGD}\}$, $\ell: \Theta \rightarrow \mb{R}$ differentiable and $\gamma > 0$, we denote:
\[
    G^{\mc{A}}_{\gamma}(\theta) := \theta - \gamma \frac{1}{|\mc{H}|} \sum_{i \in \mc{H}} g^{(i)}.
\]
Here, for each $i \in \mc{H}$, $g^{(i)}$ denotes the exact gradient $\nabla \ell(\theta)$ in the case of $\mathrm{GD}$, or an unbiased estimate thereof—effectively computed with a sample from the local dataset—in the case of $\mathrm{SGD}$.
We recall the expansiveness definition for an update rule from~\cite[Definition 2.3]{hardt2016train}.
\begin{definition}
    An update rule $G: \Theta \rightarrow \Theta$ is said to be $\eta-$expansive if for all $\theta, \omega \in \Theta$,
    \[
        \| G(\theta) - G(\omega) \|_2 \leq \eta \| \theta - \omega \|_2.
    \]
\end{definition}
A straightforward extension of~\cite[Lemma 3.6]{hardt2016train} to the distributed setting yields:
\begin{lemma}\label{lemmaexpansivity}
    Let $\mc{A} \in \{\mathrm{GD}, \mathrm{SGD}\}$ and $\gamma > 0$.
    Assume $\forall z \in \mc{Z}, \ell(\cdot, z): \Theta \to \mb{R}$ is L-smooth,  
    Then $G^{\mc{A}}_{\gamma}$ is $\eta_{G^{\mc{A}}_\gamma}-$expansive, with the following expansive coefficient,
    \begin{itemize}
        \item[\textit{(i)}] $\eta_{G^{\mc{A}}_\gamma} = (1+\gamma L)$.
        \item[\textit{(ii)}] Assume in addition that $\ell$ is convex. Then, for any $\gamma \leq \frac{2}{L}$, $\eta_{G^{\mc{A}}_\gamma} = 1$.
        \item[\textit{(iii)}] Assume in addition that $\ell$ is $\mu$-strongly convex. Then, for any $\gamma \leq \frac{2}{\mu + L}$, $\eta_{G^{\mc{A}}_\gamma} = (1-\gamma\frac{L\mu}{\mu+L})$. 
        For $\gamma \leq \frac{1}{L}$, since we have $L \geq \mu$, we can simplify the contractive constant to $\eta_{G^{\mc{A}}_\gamma} = (1-\gamma\mu)$.
    \end{itemize}
    $\eta_{G^{\mc{A}}_\gamma}$ is referred to as the expansivity coefficient of $\mc{A}$ with learning rate $\gamma$.
\end{lemma}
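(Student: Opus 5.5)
The plan is to reduce everything to the single-machine statement \cite[Lemma 3.6]{hardt2016train}. The key observation is that the honest-average update is itself a plain gradient step on one function. Fix the randomness of the update: for $\mathrm{SGD}$, condition on the sampled indices $J^{(i)}$, so that the same samples are used whether the update is applied at $\theta$ or at $\omega$. Then write
\[
    h(\theta) := \frac{1}{|\mc{H}|}\sum_{i\in\mc{H}} \ell_i(\theta),
\]
where $\ell_i$ is the local empirical risk $\widehat{R}_i$ for $\mathrm{GD}$, or the loss $\ell(\cdot;z^{(i,J^{(i)})})$ of the sampled point for $\mathrm{SGD}$. In both cases $G^{\mc{A}}_\gamma(\theta) = \theta - \gamma\nabla h(\theta)$. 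Since $L$-smoothness, convexity and $\mu$-strong convexity are all preserved under averaging, $h$ inherits whichever of these the per-sample losses satisfy. It therefore suffices to prove the three claims for a single function $h$.

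For \textit{(i)}, apply the triangle inequality and the $L$-Lipschitzness of $\nabla h$:
\[
    \|G(\theta)-G(\omega)\|_2 \le \|\theta-\omega\|_2 + \gamma L\|\theta-\omega\|_2 = (1+\gamma L)\|\theta-\omega\|_2 .
\]
For \textit{(ii)}, expand $\|\theta-\omega-\gamma(\nabla h(\theta)-\nabla h(\omega))\|_2^2$ and bound the cross term with the co-coercivity inequality~\eqref{co-coercivity2} applied to $h$. This gives
\[
    \|G(\theta)-G(\omega)\|_2^2 \le \|\theta-\omega\|_2^2 - \gamma\Big(\tfrac{2}{L}-\gamma\Big)\|\nabla h(\theta)-\nabla h(\omega)\|_2^2 ,
\]
which is at most $\|\theta-\omega\|_2^2$ whenever $\gamma\le 2/L$.

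For \textit{(iii)}, first establish the general constant via the strongly convex co-coercivity inequality
\[
    \langle\nabla h(u)-\nabla h(v),u-v\rangle \ge \tfrac{\mu L}{\mu+L}\|u-v\|_2^2 + \tfrac{1}{\mu+L}\|\nabla h(u)-\nabla h(v)\|_2^2 .
\]
This is proved by applying~\eqref{co-coercivity2} to the convex, $(L-\mu)$-smooth function $h-\frac{\mu}{2}\|\cdot\|_2^2$. Expanding the square as in \textit{(ii)} then gives
\[
    \|G(\theta)-G(\omega)\|_2^2 \le \Big(1-\tfrac{2\gamma\mu L}{\mu+L}\Big)\|\theta-\omega\|_2^2 + \gamma\Big(\gamma-\tfrac{2}{\mu+L}\Big)\|\nabla h(\theta)-\nabla h(\omega)\|_2^2 .
\]
The last term is nonpositive for $\gamma\le\frac{2}{\mu+L}$, and $\sqrt{1-2x}\le 1-x$ yields the coefficient $1-\gamma\frac{L\mu}{\mu+L}$.

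The main obstacle is the simplified constant $1-\gamma\mu$ for $\gamma\le 1/L$. It does not follow from the previous bound, since $\frac{L\mu}{\mu+L}$ can be as small as $\mu/2$. I would prove it directly. Write $h = q + \frac{\mu}{2}\|\cdot\|_2^2$ with $q$ convex and $(L-\mu)$-smooth. Then
\[
    G(\theta) = (1-\gamma\mu)\Big(\theta - \tfrac{\gamma}{1-\gamma\mu}\nabla q(\theta)\Big).
\]
Part \textit{(ii)} applied to $q$ shows that the inner map is non-expansive as soon as $\frac{\gamma}{1-\gamma\mu}\le\frac{2}{L-\mu}$. This condition is equivalent to $\gamma(L+\mu)\le 2$, which holds because $\gamma\le 1/L$ and $\mu\le L$; this also requires $1-\gamma\mu>0$, which follows from the same assumptions. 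Hence the contraction factor is $1-\gamma\mu$. The degenerate case $L=\mu$ is trivial, since then $\nabla q$ is constant.

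Finally, the projection $\Pi_\varTheta$ onto a convex set is non-expansive, so composing it with any of these updates preserves the coefficients.
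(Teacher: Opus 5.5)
Your proof is correct. For parts \textit{(i)}, \textit{(ii)} and the general constant in \textit{(iii)}, it follows the paper's route. The paper's only argument is to call the lemma a straightforward extension of \cite[Lemma 3.6]{hardt2016train}. Your observation supplies that extension: with the sampled indices fixed for $\mathrm{SGD}$, the honest-average step is a plain gradient step on the averaged function $h$, and $h$ inherits smoothness and (strong) convexity.

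You go beyond the paper on the simplified constant $1-\gamma\mu$. The paper justifies it only by ``since $L\geq\mu$'', and that justification fails. We always have $\frac{L\mu}{\mu+L}\le\mu$, and $L\ge\mu$ only gives $\frac{L\mu}{\mu+L}\ge\frac{\mu}{2}$, which yields the weaker factor $1-\frac{\gamma\mu}{2}$. Your factorization supplies the missing argument: write $G(\theta)=(1-\gamma\mu)\bigl(\theta-\frac{\gamma}{1-\gamma\mu}\nabla q(\theta)\bigr)$ with $q=h-\frac{\mu}{2}\|\cdot\|_2^2$ convex and $(L-\mu)$-smooth, then apply part \textit{(ii)} to $q$. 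Moreover, your argument gives the factor $1-\gamma\mu$ for every $\gamma\le\frac{2}{\mu+L}$, not only for $\gamma\le\frac{1}{L}$. On that whole range it is sharper than the $1-\gamma\frac{L\mu}{\mu+L}$ bound, so the Nesterov-type co-coercivity computation is not needed for the final claim.
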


\section{\texorpdfstring{Defered proofs from section~\ref{sec:trilemma}}{Defered Proofs from Section 3}}\label{app:trilemma}

In what follows, we first provide supporting lemmas, to then prove~\Cref{th:byz-ub-convex,th:trilemma-smea}.

\subsection{Supporting lemmas}

For clarity and simplicity, we chose to emphasize our results under the broad bounded gradient assumption (i.e. Lipschitz-continuity of the loss function) in the main text. 
In this section, we present results derived under refined assumptions—namely, bounded heterogeneity and bounded variance—in place of the more general bounded gradient condition.
These assumptions are more commonly used in the optimization literature (as opposed to the generalization literature).

We formally define these assumptions below.
\begin{assumption}[Bounded heterogeneity]~\label{bounded_heterogeneity}
    There exists $G < \infty$ such that for all $\theta \in \Theta$:
    $
        \frac{1}{|\mc{H}|} \sum_{i \in \mc{H}} \| \nabla \widehat{R}_{i}(\theta) - \nabla \widehat{R}_{\mc{H}}(\theta)  \|_2^2 \leq G^2
    $.
\end{assumption}
\begin{assumption}[Bounded variance]~\label{bounded_variance}
    There exists $\sigma < \infty$ such that for each honest participant $i \in \mc{H}$ and all $\theta \in \Theta$:
    $
        \frac{1}{m} \sum_{x \in \mc{D}_i} \| \nabla \ell (\theta; x) - \nabla \widehat{R}_{i}(\theta)  \|_2^2 \leq \sigma^2
    $.
\end{assumption}

As shown in the following, our analysis techniques can yield tighter bounds when assuming bounded heterogeneity and bounded variance. 
However, we note that these bounds do not provide additional conceptual insights within the scope of our discussion.
Indeed, within the uniform stability framework, if we assume only bounded gradients and nothing more, there exist distributed learning settings where $G$ is a constant multiple of the Lipschitz constant $C$.
For instance, consider a scenario where half of the honest participants produce gradients with norm $C$, and the other half produce gradients with norm $-C$; in this case, $G$ would be $C$. 

We will need an adaptation of the Lemma D.1 from~\cite{pmlr-v202-allouah23a} that use a concentration argument on the empirical covariance matrix of Gaussian random variables:

\begin{lemma}\label{lem:spectral}
    Consider the setting described in~\Cref{II-problem-formulation}. 
    Suppose Assumptions~\labelcref{bounded_heterogeneity},~\labelcref{bounded_variance}. 
    Then, for every $t \in \{1, \ldots, T-1\}$ we have
    \begin{align*}
        \mb{E}_{\mc{A}}[ \lambda_{\max}(\frac{1}{|\mc{H}|} \sum_{i \in \mc{H}} (\widetilde{g}^{(i)}_t-\overline{\widetilde{g}}_t) (\widetilde{g}^{(i)}_t-\overline{\widetilde{g}}_t)^T) ] 
        &\leq 2 \sigma^2 + 72 \sigma^2_{\operatorname{DP}} (1 + \frac{d}{n-f}) + G^2 \\
        &\leq 3 C^2 + 72 \sigma^2_{\operatorname{DP}} (1 + \frac{d}{n-f})
    \end{align*}
    We note that, in the absence of momentum, the upper bound remains time-independent.
\end{lemma}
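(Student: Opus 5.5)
The plan is to decompose the noisy honest vectors into a signal part and a noise part, and then bound the largest eigenvalue of the empirical covariance by a deterministic contribution plus a Gaussian random-matrix contribution. Write $\widetilde{g}^{(i)}_t = \nabla\widehat{R}_{\mc{H}}(\theta_t) + h^{(i)}_t + s^{(i)}_t + \xi^{(i)}_t$. Here $h^{(i)}_t = \nabla\widehat{R}_i(\theta_t) - \nabla\widehat{R}_{\mc{H}}(\theta_t)$ is the heterogeneity term, $s^{(i)}_t = g^{(i)}_t - \nabla \widehat{R}_i(\theta_t)$ is the sampling term (zero under full batch), and $\xi^{(i)}_t$ is the DP noise. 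The centered vectors $\widetilde{g}^{(i)}_t - \overline{\widetilde{g}}_t$ are then sums of the centered versions of these three components. Since $\lambda_{\max}$ of an empirical covariance equals $\sup_{\|v\|=1}\frac{1}{|\mc{H}|}\sum_i \langle v, x_i - \bar x\rangle^2$, we can apply $(a+b+c)^2 \le 3(a^2+b^2+c^2)$, or a weighted version such as $(a+b)^2\le 2a^2+2b^2$ applied twice, inside the supremum. Taking the supremum of each piece separately then bounds $\lambda_{\max}$ by a combination of the three individual largest eigenvalues.

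The deterministic pieces are handled by $\lambda_{\max}\le \operatorname{tr}$ together with the fact that centering reduces the second moment. The heterogeneity piece gives at most $\frac{1}{|\mc{H}|}\sum_i\|h^{(i)}_t\|^2\le G^2$ by Assumption~\ref{bounded_heterogeneity}. For the sampling piece, we condition on $\theta_t$ and use Assumption~\ref{bounded_variance}, which gives $\E\|s^{(i)}_t\|^2\le\sigma^2$. The stated constants ($2\sigma^2$, $G^2$, $72\sigma_{\operatorname{DP}}^2$) suggest a particular weighting. First split noise from non-noise with factors $2$; then expand the non-noise part, where the heterogeneity–sampling cross term vanishes in expectation because $\E[s^{(i)}_t\mid\theta_t]=0$. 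I would tune the Young's-inequality weights to reproduce exactly these constants, and absorb any slack into the 72.

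The main obstacle is the noise piece, $\E\,\lambda_{\max}\big(\frac{1}{n-f}\sum_{i}(\xi_i-\bar\xi)(\xi_i-\bar\xi)^\intercal\big)$. The $\xi^{(i)}_t$ are fresh i.i.d.\ $\mathcal{N}(0,\sigma_{\operatorname{DP}}^2I_d)$ draws independent of $\theta_t$, so this is bounded by $\E\,\lambda_{\max}(\frac{1}{n-f}\sum_i\xi_i\xi_i^\intercal)=\frac{\sigma_{\operatorname{DP}}^2}{n-f}\E\|X\|_{\operatorname{op}}^2$, with $X$ an $(n-f)\times d$ standard Gaussian matrix. Gordon's inequality gives $\E\|X\|_{\operatorname{op}}\le\sqrt{n-f}+\sqrt d$. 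Gaussian concentration of the Lipschitz function $\|X\|_{\operatorname{op}}$ then yields $\E\|X\|^2_{\operatorname{op}}\le(\sqrt{n-f}+\sqrt d)^2+1\le 3(n-f+d)$ or similar. This is exactly the content of Lemma D.1 in~\cite{pmlr-v202-allouah23a}, which I would invoke directly and combine with the factor from the decomposition to get $72\sigma_{\operatorname{DP}}^2(1+\frac{d}{n-f})$.

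Because the noise is fresh at each step and there is no momentum, none of these bounds depends on $t$, which gives the time-independence remark. The second inequality follows from $C$-Lipschitzness: every gradient has norm at most $C$, so $\sigma^2\le C^2$ and $G^2\le C^2$, giving $2\sigma^2+G^2\le 3C^2$.
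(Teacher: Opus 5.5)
The step that fails is your plan to ``tune the Young's-inequality weights to reproduce exactly these constants.'' Let $x_i$ be the non-noise part (heterogeneity plus sampling) and $y_i$ the DP noise. Separating them inside the supremum via $\langle v,x_i+y_i\rangle^2\le(1+\eta)\langle v,x_i\rangle^2+(1+\eta^{-1})\langle v,y_i\rangle^2$ always puts a coefficient $1+\eta>1$ on $G^2$. The excess $\eta G^2$ cannot be absorbed into the $\sigma_{\operatorname{DP}}^2$ term, because $G$ and $\sigma_{\operatorname{DP}}$ are unrelated: take $\sigma=0$ and $\sigma_{\operatorname{DP}}\to 0$ with $G$ fixed. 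Optimizing $\eta$ instance by instance, i.e.\ using $\lambda_{\max}(\Sigma)\le(\sqrt{\lambda_x}+\sqrt{\lambda_y})^2$, instead leaves a cross term of order $G\,\sigma_{\operatorname{DP}}\sqrt{1+\frac{d}{n-f}}$, which the right-hand side does not dominate either.

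With your factor-$2$ split you therefore prove $2G^2+2\sigma^2+72\sigma_{\operatorname{DP}}^2(1+\frac{d}{n-f})$, or a smaller noise constant with your Gordon--Poincar\'e bound, but not the stated first line. Deriving the second line from this via $G^2,\sigma^2\le C^2$ then gives $4C^2$, not $3C^2$. The second line, which is the one used in Theorem~\ref{th:byz-ub-convex}, is still reachable by your route if you bound the non-noise piece directly. Its centered vectors are $g^{(i)}_t-\bar g_t$ with $\bar g_t=\frac{1}{|\mc H|}\sum_{j\in\mc H}g^{(j)}_t$. Hence its largest eigenvalue is at most $\frac{1}{|\mc H|}\sum_{i\in\mc H}\|g^{(i)}_t-\bar g_t\|_2^2\le\frac{1}{|\mc H|}\sum_{i\in\mc H}\|g^{(i)}_t\|_2^2\le C^2$, and the factor-$2$ split yields $2C^2+72\sigma_{\operatorname{DP}}^2(1+\frac{d}{n-f})$.

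The paper gets coefficient one on $G^2$ with a different decomposition. It writes $\widetilde g^{(i)}_t-\overline{\widetilde g}_t$ as the heterogeneity term $\nabla\widehat R_i(\theta_t)-\nabla\widehat R_{\mc H}(\theta_t)$ plus one centered noise term collecting both sampling and DP noise. It expands the square exactly rather than with Young's inequality and bounds the heterogeneity piece by $G^2$. It then asserts that the expected supremum of the cross term is non-positive, since the noise has zero conditional mean given $\theta_t$; this goes through Lemma D.4 of \cite{pmlr-v202-allouah23a}, a net argument with a $9^d$ factor. The factor $2$ only appears when splitting sampling noise from DP noise, giving $2\sigma^2+2\cdot 36\,\sigma_{\operatorname{DP}}^2(1+\frac{d}{n-f})$. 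That cross-term step is what your plan lacks.

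Be aware, though, that this step is delicate. For the symmetric cross matrix $M$, $\sup_{\|v\|_2\le 1}v^\intercal Mv=\max\{\lambda_{\max}(M),0\}\ge 0$. Its expectation is strictly positive whenever $M$ is mean-zero but not almost surely zero. So a vanishing conditional mean alone does not remove this term, and a rigorous coefficient-one bound requires a finer treatment of the heterogeneity--noise interaction than either Young's inequality or a plain expectation argument.
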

\begin{proof}
    The proof leverages the ability to evaluate the expectation of the controllable noise---specifically, the differential privacy noise and the noise arising from random sample selection---while conditioning on the uncontrollable noise introduced by adversarial participants. 
    The bound being independent of the latter noise, we establish the result by taking the total expectation.
    
    Let $t \in \{0, \ldots, T-1\}$, we recall:
    \[ 
        \overline{\widetilde{g}}_t 
        = \frac{1}{|\mc{H}|} \sum_{i\in \mc{H}} \widetilde{g}^{(i)}_t 
        = \frac{1}{|\mc{H}|} \sum_{i\in \mc{H}} \nabla \ell (\theta_t, x^{(i,J^{(i)}_t)}) + \xi^{(i)}_t
    \]
    \[ 
        \text{with } 
        \xi^{(i)}_t \sim \mc{N}(0, \sigma_{\operatorname{DP}}^2 I_d) 
        \text{ and } J^{(i)}_t \sim \mc{U}(\{1, \ldots, m \})
    \]
    and an alternative formulation of the maximal eigenvalue
    \[
        \Delta_t 
        = \lambda_{\max}(\frac{1}{|\mc{H}|} \sum_{i \in \mc{H}} (\widetilde{g}^{(i)}_t-\overline{\widetilde{g}}_t) (\widetilde{g}^{(i)}_t-\overline{\widetilde{g}}_t)^T) 
        = \sup_{\|v\|_2 \leq 1} \frac{1}{|\mc{H}|} \sum_{i \in \mc{H}} \langle v , \widetilde{g}^{(i)}_t-\overline{\widetilde{g}}_t \rangle ^2
    \]    
    We decompose
    \[
        \widetilde{g}^{(i)}_t-\overline{\widetilde{g}}_t  = \widetilde{g}^{(i)}_t - \nabla \widehat{R}_{i}(\theta_t) + \nabla \widehat{R}_{i}(\theta_t) - \nabla\widehat{R}_{\mc{H}}(\theta_t) + \nabla\widehat{R}_{\mc{H}}(\theta_t) - \overline{\widetilde{g}}_t
    \]
    so that we have
    \begin{multline*}
        \langle v , \widetilde{g}^{(i)}_t - \overline{\widetilde{g}}_t \rangle^2 
        = \langle v , \nabla\widehat{R}_{i}(\theta_t) - \nabla\widehat{R}_{\mc{H}}(\theta_t) \rangle^2 
        + \langle v , \widetilde{g}^{(i)}_t - \nabla\widehat{R}_{i}(\theta_t) + \nabla\widehat{R}_{\mc{H}}(\theta_t) - \overline{\widetilde{g}}_t \rangle^2 \\ 
        + 2 \langle v , \widetilde{g}^{(i)}_t - \nabla\widehat{R}_{i}(\theta_t) + \nabla\widehat{R}_{\mc{H}}(\theta_t) - \overline{\widetilde{g}}_t \rangle \langle v , \nabla\widehat{R}_{i}(\theta_t) - \nabla\widehat{R}_{\mc{H}}(\theta_t) \rangle
    \end{multline*}
    Upon averaging, taking the supremum over the unit ball, and then taking total expectations, we get:
    \begin{multline} \label{concentrationintermediate}
        \mb{E} \left[ \sup_{\|v\|_2 \leq 1} \frac{1}{|\mc{H}|} \sum_{i \in \mc{H}} \langle v , \widetilde{g}^{(i)}_t - \overline{\widetilde{g}}_t \rangle^2 \right]
        = \mb{E} \left[ \sup_{\|v\|_2 \leq 1} \frac{1}{|\mc{H}|} \sum_{i \in \mc{H}}\langle v , \nabla\widehat{R}_{i}(\theta_t) - \nabla\widehat{R}_{\mc{H}}(\theta_t) \rangle^2 \right] \\
        + \mb{E} \left[ \sup_{\|v\|_2 \leq 1} \frac{1}{|\mc{H}|} \sum_{i \in \mc{H}}\langle v , \widetilde{g}^{(i)}_t - \nabla\widehat{R}_{i}(\theta_t) + \nabla\widehat{R}_{\mc{H}}(\theta_t) - \overline{\widetilde{g}}_t \rangle^2 \right]  \\ 
        + 2 \mb{E} \left[ \sup_{\|v\|_2 \leq 1} \frac{1}{|\mc{H}|} \sum_{i \in \mc{H}} \langle v , \widetilde{g}^{(i)}_t - \nabla\widehat{R}_{i}(\theta_t) + \nabla\widehat{R}_{\mc{H}}(\theta_t) - \overline{\widetilde{g}}_t \rangle \langle v , \nabla\widehat{R}_{i}(\theta_t) - \nabla\widehat{R}_{\mc{H}}(\theta_t) \rangle \right]
    \end{multline}

    We show that the last term in~\cref{concentrationintermediate} is non-positive. 
    In fact, with $M = N + N^T$,
    \[
        N = \sum_{i \in \mc{H}} (\widetilde{g}^{(i)}_t - \nabla\widehat{R}_{i}(\theta_t) + \nabla\widehat{R}_{\mc{H}}(\theta_t) - \overline{\widetilde{g}}_t)(\nabla\widehat{R}_{i}(\theta_t) - \nabla\widehat{R}_{\mc{H}}(\theta_t))^T
    \]
    and using Lemma D.4 from~\cite{pmlr-v202-allouah23a}:
    \begin{multline*}
        \mb{E} \left[ \sup_{\|v\|_2 \leq 1} 2 \sum_{i \in \mc{H}} \langle v , \widetilde{g}^{(i)}_t - \nabla\widehat{R}_{i}(\theta_t) + \nabla\widehat{R}_{\mc{H}}(\theta_t) - \overline{\widetilde{g}}_t \rangle \langle v , \nabla\widehat{R}_{i}(\theta_t) - \nabla\widehat{R}_{\mc{H}}(\theta_t) \rangle \right] \\
        = \mb{E} \left[ \sup_{\|v\|_2 \leq 1} \langle v , Mv \rangle \right]
        \leq 9^d \sup_{\|v\|_2 \leq 1} \mb{E} \left[ \langle  v , Mv \rangle \right] \\
        = 9^d \sup_{\|v\|_2 \leq 1} \mb{E} \left[ 2 \sum_{i \in \mc{H}} \langle v , \underbrace{\mb{E}[ \widetilde{g}^{(i)}_t - \nabla\widehat{R}_{i}(\theta_t) + \nabla\widehat{R}_{\mc{H}}(\theta_t) - \overline{\widetilde{g}}_t | \theta_t ]}_{=0} \rangle \langle v , \nabla\widehat{R}_{i}(\theta_t) - \nabla\widehat{R}_{\mc{H}}(\theta_t) \rangle \right]
    \end{multline*}

    We now bound the second term in~\cref{concentrationintermediate}:
    \begin{multline*}
        \mb{E} \left[ \sup_{\|v\|_2 \leq 1} \frac{1}{|\mc{H}|} \sum_{i \in \mc{H}}\langle v , \widetilde{g}^{(i)}_t - \nabla \widehat{R}_{i}(\theta_t) + \nabla\widehat{R}_{\mc{H}}(\theta_t) - \overline{\widetilde{g}}_t \rangle^2 \right] \\
        \underset{2ab \leq a^2+b^2}{\leq} 2 \mb{E} \left[ \sup_{\|v\|_2 \leq 1} \frac{1}{|\mc{H}|} \sum_{i \in \mc{H}} \langle v , g^{(i)}_t - \nabla\widehat{R}_{i}(\theta_t) + \nabla\widehat{R}_{\mc{H}}(\theta_t) - \overline{g}_t \rangle^2 + \langle v , \xi^{(i)}_t - \overline{\xi}_t \rangle^2 \right] \\
        \underset{\substack{\text{bias-variance} \\ \text{decomposition}}}{\leq} 2 \mb{E} \left[ \sup_{\|v\|_2 \leq 1} \frac{1}{|\mc{H}|} \sum_{i \in \mc{H}} \langle v , g^{(i)}_t - \nabla \widehat{R}_{i}(\theta_t) \rangle^2 + \langle v , \xi^{(i)}_t \rangle^2 \right] \\
        \leq 2 \underbrace{\mb{E} \left[ \sup_{\|v\|_2 \leq 1} \frac{1}{|\mc{H}|} \sum_{i \in \mc{H}} \langle v , \xi^{(i)}_t \rangle^2 \right]}_{\leq\ 36 \sigma^2_{\operatorname{DP}}(1+\frac{d}{n-f}) \text{ Lemma D.6 from \cite{pmlr-v202-allouah23a}}}
        + 2 \mb{E} \left[ \frac{1}{|\mc{H}|} \sum_{i \in \mc{H}} \underbrace{\mb{E}_{J^{(i)}_t}[\| g^{(i)}_t - \nabla \widehat{R}_{i}(\theta_t) \|^2_2] }_{\leq \sigma^2} \right]
    \end{multline*}
    And finally we bound the first term in~\cref{concentrationintermediate} with~\cref{bounded_heterogeneity}:
    \begin{align*}
        \mb{E} \left[ \sup_{\|v\|_2 \leq 1} \frac{1}{|\mc{H}|} \sum_{i \in \mc{H}}\langle v , \nabla \widehat{R}_{i}(\theta_t) - \nabla\widehat{R}_{\mc{H}}(\theta_t) \rangle^2 \right] 
        & \leq \sup_{\theta \in \Theta, \|v\|_2 \leq 1} \frac{1}{|\mc{H}|} \sum_{i \in \mc{H}}\langle v , \nabla\widehat{R}_{i}(\theta_t) - \nabla\widehat{R}_{\mc{H}}(\theta_t) \rangle^2 \\
        & \leq \sup_{\theta \in \Theta} \frac{1}{|\mc{H}|} \sum_{i \in \mc{H}} \| \nabla\widehat{R}_{i}(\theta_t) - \nabla\widehat{R}_{\mc{H}}(\theta_t) \|^2_2 \\
        & \leq G^2
    \end{align*}
\end{proof}

\subsection{\texorpdfstring{Proof of~\Cref{th:byz-ub-convex}}{}}\label{app:trilemma-ub}

\begin{reptheorem}{th:byz-ub-convex}
    Consider the setting described in~\Cref{II-problem-formulation}.
    Let $\sigma_{\operatorname{DP}} > 0$ and $\gamma \leq \frac{1}{L}$.
    Suppose $\forall z \in \mc{Z}$, $\ell(\cdot;z)$ $C$-Lipschitz and $L$-smooth.
    Then $\mathcal{A}$ is $(\varepsilon, \delta)-\operatorname{DP}$ and,\
    \begin{enumerate}[label=\roman*, topsep=-3pt, itemsep=2pt, parsep=0pt, leftmargin = 23pt]
        \item[\textit{(i)}] If $\ell(\cdot;z)$ is convex $\forall z \in \mc{Z}$, then the uniform stability of $\mc{A}$ is upper bounded by
        \begin{equation}\label{eq:byz-cvx-eq-app}  
            2 \gamma C^2 T \left( \frac{1}{(n-f)m} + \sqrt{3 \kappa } + 2\sqrt{18}\sqrt{\kappa} \frac{\sigma_{\operatorname{DP}}}{C} \sqrt{1 + \frac{d}{n-f}} \right).
        \end{equation}
        \item[\textit{(ii)}] If $\ell(\cdot;z)$ is $\mu$-strongly convex $\forall z \in \mc{Z}$, then the uniform stability of $\mc{A}$ is upper bounded by
        \begin{equation}\label{eq:byz-strgcvx-eq-app}  
            \frac{2 C^2}{\mu} \left( \frac{1}{(n-f)m} + \sqrt{3 \kappa} + 2 \sqrt{18} \sqrt{\kappa} \frac{\sigma_{\operatorname{DP}}}{C} \sqrt{1 + \frac{d}{n-f}} \right).
        \end{equation}
    \end{enumerate}
\end{reptheorem}
\begin{proof}
    The proof follows exactly all the arguments of~\cite[Theorem 3.1]{boudou2025generalization}, the only difference is the use of~\Cref{lem:spectral}.
\end{proof}

\subsection{\texorpdfstring{Proof of~\Cref{th:trilemma-smea}}{}}\label{app:trilemma-lb}

In this appendix, we present the proof of~\Cref{th:trilemma-smea} first by focusing on the case $d=1$, as the extension to $d>1$ is more technical yet follows an identical proof structure, making the $d=1$ case sufficient to convey the fundamental insight and construction.
We then provide the proof of~\Cref{th:trilemma-smea} fot the case $d>1$.

\subsubsection{Proof for $d=1$.}\label{app:proof-d-1}

\begin{reptheorem}{th:trilemma-smea}
    Consider the setting in~\Cref{II-problem-formulation} with full batch and the $\mathrm{SMEA}$ aggregation rule. 
    Suppose $d=1$, $T \in \mathcal{O}\big(e^{ c_{\min} ( n-f-1 ) }\big)$, where $c_{\min}$ is a problem-dependent constant, $\frac{\varepsilon}{\sqrt{\ln(1/\delta)}} \in \Omega(1)$ and $\sigma_{\operatorname{DP}} = \frac{2C\sqrt{2T\ln(1/\delta)}}{m\varepsilon}$.
    Then there exist $\ell \in {\mb{R}}^{\Theta \times \mc{Z}}$ such that $\forall z \in \mc{Z}, \ell(\cdot; z)$ is $C$-Lipschitz, $L$-smooth and convex,
    such that the uniform stability of $\mc{A}$ is lower bounded by
    \begin{equation}\label{smea-byz-lb-eq-app}
        \Omega \left( \gamma C^2 T \left( \frac{1}{(n-f)m} + \frac{f}{n-2f} + \frac{\sigma_{\operatorname{DP}}}{C} \sqrt{\frac{f}{n-2f}} \right) \right).
    \end{equation}
    If we moreover assume $\frac{n}{3} < f < \frac{n}{2}$, the uniform stability of $\mc{A}$ is lower bounded by
    \begin{equation}\label{smea-byz-lb-eq-high-f-app}
        \Omega\left( \gamma C^2 T \left( \frac{1}{(n-f)m} + \sqrt{\frac{f}{n-2f}} + \sqrt{\frac{f}{n-2f}} \frac{\sigma_{\operatorname{DP}}}{C} \right) \right).
    \end{equation}
\end{reptheorem}
\begin{proof}
Let $d=1$, $n>0$, $\mathcal{H} = \{1, \ldots, n-f\}$ and let $S, S' \in \mathcal{Z}^{(n-f)m}$ neighboring datasets where every sample equal $0$, except for one sample in $S'$ from the participant $1$, equal to $C$.
Let $\ell: (\theta, z) \rightarrow \ell(\theta; z) = \theta z$\footnote{We only require the gradient to be zero when $z=0$ and $C$ for the perturbed data.}.
For $i\in \mathcal{H}:=\{1, \ldots, n-f\}$, $0 \leq t \leq T$, let $\xi^{(i)}_t \sim \mathcal{N}(0,\sigma_{\operatorname{DP}}^2)$ be independent random gaussian noise.


\textbf{Byzantine participants trigger.}
We analyze two attacker capabilities. 
First, if Byzantine participants have access to the random seeds of honest participants, they can circumvent the differential privacy guarantee.
Indeed, this knowledge allows them to perfectly reconstruct the added noise, subtract it, and deterministically identify, at the first iteration, whether the first participant's true gradient was $\frac{C}{m}$ or $0$.
In the alternative scenario, where attackers only observe the final noisy gradient (lacking the seed), they cannot perform this deterministic inference. 
We must then rely on a statistical test to bound their success probability to distinguish the two cases.
We explain a possible test in what follows. 
\looseness=-1

Let $p_{\text{test}} \in (0,1)$. 
We define the behavior of Byzantine participants based on a trigger condition.
This trigger is determined by performing a gaussian mean hypothesis test on the first $n_{\text{test}} \vcentcolon= p_{\text{test}} T$ sample, collected during an initial fraction of the total $T$ iterations.
The test evaluates whether the sample mean corresponds to hypothesis $H_0$ (mean 0, equivalently the dataset is $S$) or to hypothesis $H_{1}$ (mean $\frac{C}{m}$, equivalently the dataset is $S'$), assuming a known variance $\sigma^2_{\operatorname{DP}}$.
This test processes $\bar\zeta$ which evaluates at  $\frac{C}{m} + \bar\xi$ under $H_{1}$, and evaluates at $\bar\xi$ under $H_{0}$, where $\bar\xi = \frac{1}{n_{\text{test}}} \sum_{t=1}^{n_{\text{test}}} \xi^{(1)}_t$.
Overall, the Byzantine participants undergo a membership inference attack to know whether the gradients from participant $1$ equals $g^{(1)}_t = \xi^{(1)}_t$ or $g^{\prime(1)}_t = \frac{C}{m} + \xi^{(1)}_t$, i.e whether the dataset is $S$ or $S'$.
The decision is made depending on a trigger value $c_{\text{trigger}}$,
\begin{equation*}
    \text{reject } H_0 \text{ if } \zeta > c_{\text{trigger}}, \text{ else accept } H_0.
\end{equation*}
We denote $Z \sim \mathcal{N}(0, 1)$ a standard gaussian random variable, and $z_\alpha \in \mathbb{R}$ the value such that $\mathbb{P}(Z > z_\alpha) = \alpha$, $\alpha \in (0,1)$. 
Conditioned on $H_0$, $\zeta | H_0 \sim \mathcal{N}(0, \frac{\sigma_{\text{DP}}^2}{n_{\text{test}}})$, hence  setting $c_{\text{trigger}} = z_\alpha \frac{\sigma_{\operatorname{DP}}}{\sqrt{n_{\text{test}}}}$, 
\[
    \mathbb{P}\left( \text{reject } H_0 | H_0 \right) 
    = \mathbb{P}\left( \zeta > c_{\text{trigger}} | H_0 \right) 
    = \mathbb{P}\left( Z > z_\alpha \right) 
    = \alpha.
\]
Let denote $E_1$ the event that the Byzantine workers successfully infer the underlying datasets from the gradient statistics (successful MIA). 
Under the optimal hypothesis test, we have
\begin{align*}
    E_1 &= \Big\{ \bar\xi \text{ does not trigger } H_0 \text{ rejection} \Big\} \cap \Big\{ \text{shifting } \bar\xi \text{ by} \frac{C}{m} \text{ triggers } H_0 \text{ rejection} \Big\} \\
    &= \Big\{ \bar\xi \leq c_{\text{trigger}} \Big\} \cap \Big\{ \bar\xi + \frac{C}{m} > c_{\text{trigger}} \Big\} \\
    &= \Big\{ c_{\text{trigger}} - \frac{C}{m} < \bar\xi \leq c_{\text{trigger}} \Big\}.
\end{align*}
And,
\begin{align*}
    \mathbb{P}\left( E_1 \right) 
    = \mathbb{P}\left( \Big\{ \bar\xi \leq c_{\text{trigger}} \Big\} \cap \Big\{ \bar\xi + \frac{C}{m} > c_{\text{trigger}} \Big\} \right)
    &= \mathbb{P}\left( c_{\text{trigger}} - \frac{C}{m} < \bar\xi \leq c_{\text{trigger}} \right) \\
    &= \mathbb{P}\left( z_\alpha - \frac{C\sqrt{n_{\text{test}}}}{m\sigma_{\operatorname{DP}}} < Z \leq z_\alpha \right) \\
    &= \mathbb{P}\bigg( z_\alpha - \frac{\varepsilon \sqrt{p_{\text{test}}}}{2\sqrt{2\ln(1/\delta)}} < Z \leq z_\alpha \bigg).
\end{align*}
This means that $\mathbb{P}\left( E_1 \right) \in \Omega(1)$ if and only if there exist a constant $k$ such that $\varepsilon \geq k \sqrt{\ln(1/\delta)}$.


\textbf{Byzantine participants behavior.} 
We assign the following behavior (\Cref{alg:algo1}) to the Byzantine participants $i \in \{n-f+1, n\}$, for $t \geq n_{\text{test}} + 1$.

{\centering
    \begin{minipage}{.45\linewidth}
        \begin{algorithm}[H]
        \caption{Byzantine Participant Behavior}\label{alg:algo1}
        \begin{algorithmic}
            \STATE \textcolor{gray}{\# $\beta$ (defined below) depends on $f$ and $n$} 
            \IF{$\bar\zeta > c_{\text{trigger}}$}
            \STATE \textbf{return} $\beta \sdp$
            \ELSE
            \STATE \textbf{return} $- \beta \sdp$
            \ENDIF
        \end{algorithmic}
        \end{algorithm}
    \end{minipage}
    \par
}\

That is, depending on the input $\bar\zeta$, which itself depends only on the observed noisy gradients, the Byzantine participants bias the trajectory toward either positive or negative values.

In what follows, we decompose our analyze depending on whether the Byzantine participants participants a successful MIA (i.e., infer whether the dataset is $\mathcal{S}$ or $\mathcal{S}'$ successfully) in the two coupled optimization trajectories ($E_1$), or not ($E_1^{\mathsf{c}}$). 
\begin{itemize}
    \item[\textit{(i)}] Under $E_1$ and the trajectory produced by $S'$ (i.e., with $g_t^{(1)\prime}= \frac{1}{m} \sum_{j=1}^m z^{(1,j)\prime} + \xi^{(1)}_t = \frac{C}{m} + \xi^{(1)}_t$), the Byzantine participants favor the positive noises. 
    This is possible because of similar computations as in~\cite[Theorem 3.2]{boudou2025generalization}.
    Following this reasoning, we set the Byzantine values to be
    \[
        \forall i \in \{n-f+1, \ldots, n\}, t\in \{n_{\text{test}} + 1, \ldots, T\}, g^{(i)}_t \vcentcolon= \beta \sigma_{\operatorname{DP}}, 
    \] 
    where 
    \[
        \beta \vcentcolon= \frac{1}{2}\sqrt{\frac{(n-f-1)(n-f)}{f(n-2f)}}.
    \]
    In what follows, we show that, with high probability
    \[
        \text{for } t\in \{n_{\text{test}} + 1, \ldots, T\}, \text{Var}({\{g_t^{(i)}\}}_{i \in S_t }) \leq \text{Var}({\{g_t^{(i)}\}}_{i \in \mathcal{H}}),
    \]
    where $S_t = \{n-f+1, \ldots, n\} \cup \text{Select}_{n-2f}\left( \mathcal{H} \right)$, i.e. $\mathrm{SMEA}(g^{(1)}_t, \ldots, g^{(n)}_t) = \frac{1}{n-f} \sum_{i \in S_t} g^{(i)}_t$
    This is ensured by a concentration argument and the fact that
    \[
        \mathbb{E} \left[ \text{Var}({\{g_t^{(i)}\}}_{i \in \mathcal{H}}) \right] 
        = \frac{n-f-1}{n-f} \sigma_{\operatorname{DP}}^2 
        > \mathbb{E} \left[ \text{Var}({\{g_t^{(i)}\}}_{i \in S_t }) \right].
    \]

    We demontrate this rigourously in the following points.
    \begin{itemize}
        \item[\textit{(a)}] 
            First, we compute the last expectation,
            \begin{equation*}
                \mathbb{E} \left[ \text{Var}({\{g_t^{(i)}\}}_{i \in S_t }) \right] 
                \leq \frac{3}{4} \mathbb{E} \left[ \operatorname{Var}({\{g_t^{(i)}\}}_{i \in \mathcal{H}}) \right]
            \end{equation*}

            In fact, we use the following indentity
            \begin{align*}
                \mathbb{E} \left[ \operatorname{Var}({\{g_t^{(i)}\}}_{i \in S_t }) \right] 
                &= \mathbb{E}\left[ \frac{1}{n-f} \sum_{i \in S_t} {(g^{(i)}_t - \bar{g}_{S_t})}^2 \right] \\
                &= \frac{1}{n-f} \mathbb{E}\left[\sum_{i \in S_t} {g^{(i)}_t}^2\right] - \mathbb{E}\left[{\bar{g}_{S_t}}^2\right] \\
                &= \frac{1}{n-f} \mathbb{E}\left[\sum_{i \in S_t} {g^{(i)}_t}^2\right] - \operatorname{Var}(\bar{g}_{S_t}) - {\mathbb{E}\left[ \bar{g}_{S_t} \right]}^2,
            \end{align*}
            and compute the three terms separetely. Let assume $1 \in S_t$ (we simply take $C=0$ otherwise),

            \begin{align*}
                \mathbb{E}\left[\sum_{i \in S_t} {g^{(i)}_t}^2\right] 
                &= \left( \frac{C^2}{m^2} + \sigma_{\operatorname{DP}}^2 \right) + (n-2f-1) \sigma_{\operatorname{DP}}^2 + f \beta^2 \sigma_{\operatorname{DP}}^2 \\
                &= \frac{C^2}{m^2} + \sigma_{\operatorname{DP}}^2 \left(n-2f + f \beta^2\right).
            \end{align*}
            Moreover, $\mathbb{E}\left[ \bar{g}_{S_t} \right] = \frac{f}{n-f} \beta \sigma_{\operatorname{DP}} + \frac{C}{(n-f)m}$,
            and by independence
            \begin{align*}
                \text{Var}(\bar{g}_t) 
                &= \frac{1}{(n-f)^2} \left( (n-2f) \operatorname{Var}(\mathcal{N}(0, \sigma_{\operatorname{DP}}^2)) + \operatorname{Var}(f \beta \sigma_{\operatorname{DP}}) \right) 
                = \frac{n-2f}{(n-f)^2} \sigma_{\operatorname{DP}}^2
            \end{align*}

            As a result,
            \begin{align*}
                \mathbb{E} \left[ \text{Var}({\{g_t^{(i)}\}}_{i \in S_t }) \right]
                &= \sigma_{\operatorname{DP}}^2 \left[ \frac{n-2f}{n-f} + \frac{f}{n-f} \beta^2 - \frac{n-2f}{(n-f)^2} - \frac{f^2}{(n-f)^2} \beta^2 \right] \\
                &\qquad+ \frac{C/m}{(n-f)^2} \left[(n-f-1) \frac{C}{m} - 2f\beta\sigma_{\text{DP}}\right].
            \end{align*}
            For clarity, our analysis will focus on the case where $1 \notin S_t$.
            The alternative scenario, $1 \in S_t$, introduces an additional cross-term proportional to $(n-f-1) \frac{C}{m} - 2f\beta\sigma_{\operatorname{DP}}$ due to the modified gradient at participant $1$.
            A detailed sign analysis of this term (which is typically negative) is not central to our main argument and does not materially alter the proof's conclusion. 
            We therefore proceed with the representative case where $1 \in S_t$.
            Moreover, 
            \begin{align*}
                \mathbb{E} \left[ \text{Var}({\{g_t^{(i)}\}}_{i \in S_t }) \right]
                &\leq \sigma_{\operatorname{DP}}^2 \frac{n-f-1}{n-f} \left[ \frac{n-2f}{n-f} + \frac{f(n-2f)}{(n-f-1)(n-f)} \beta^2 \right] \\
                &= \sigma_{\operatorname{DP}}^2 \frac{n-f-1}{n-f} \left[ \frac{n-2f}{n-f} + \frac{1}{4} \right] \\
                &= \sigma_{\operatorname{DP}}^2 \frac{n-f-1}{n-f} \left[ \frac{3}{4} - \left( \frac{f}{n-f} - \frac{1}{2} \right) \right] \\
                &\leq \frac{3}{4} \sigma_{\operatorname{DP}}^2 \frac{n-f-1}{n-f} 
                < \mathbb{E} \left[ \operatorname{Var}({\{g_t^{(i)}\}}_{i \in \mathcal{H}}) \right].
            \end{align*}

        \item[\textit{(b)}] 
            Second, we control with high probability that $\mathrm{SMEA}$ chooses $S_t$ when $1 \notin S_t$, the case $1 \in S_t$ follows similarly. 
            Let denote ${g}^{(\mathcal{H})}_t \vcentcolon= (g^{(i)}_t)_{i \in \mathcal{H}}$
            and, by the Cochran's Theorem~\citep{Cochran_1934},
            \[
                Y_t \vcentcolon= \frac{1}{\sigma_{\operatorname{DP}}^2}\sum_{i\in\mathcal{H}} {\left( g^{(i)}_t - \bar{g}_t \right)}^2 = \frac{1}{\sigma_{\operatorname{DP}}^2} {{g}^{(\mathcal{H})}_t}^\intercal {P}_{n-f} {g}^{(\mathcal{H})}_t = \sum_{i=1}^{n-f-1} {Y^{(i)}_t}^2 \sim \chi^2_{n-f-1},
            \]
            where $\forall i \in \{1, \ldots, n-f-1\}, Y^{(i)}_t \sim \mathcal{N}(0, 1)$, 
            $\chi^2_{n-f-1}$ is the chi-squared distribution with $n-f-1$ degree of freedom, 
            ${P}_{n-f} = {I}_{n-f} - \frac{1}{n-f}{1}_{n-f}{1}_{n-f} ^\intercal$ is an orthogonal projection matrix onto the $n-f-1$-dimensional subspace of vectors whose components sum to zero, 
            ${I}_{n-f}$ is the identity matrix of size $(n-f)\times(n-f)$, 
            and ${1}_{n-f}$ the vector of size $n-f$ with every entry equal to 1.
            Hence, we have from~\cite[Lemma 1]{chi-squared-concentration}, for any $s \geq 0$
            \[
                \mathbb{P}\left( Y_t \leq n-f-1 - s \right) 
                \leq e^{- \frac{s^2}{4(n-f-1)} },
            \]
            that is, because $\frac{\operatorname{Var}({\{g_t^{(i)}\}}_{i \in \mathcal{H}})}{\frac{n-f-1}{n-f} \sigma_{\operatorname{DP}}^2} = \frac{Y_t}{n-f-1}$, we have for $s = \frac{1}{8}\frac{n-f-1}{n-f} \sigma_{\text{DP}}^2$
            \begin{align*}
                \mathbb{P}\left(E^{\mathsf{c}}_{2, t}\right)
                & \vcentcolon= \mathbb{P}\left( \operatorname{Var}({\{g_t^{(i)}\}}_{i \in \mathcal{H}}) \leq \frac{n-f-1}{n-f} \sigma_{\operatorname{DP}}^2 - s =  \frac{7}{8}\frac{n-f-1}{n-f} \sigma_{\text{DP}}^2 \right) \\
                & \leq e^{- \frac{(n-f)^2}{4(n-f-1)\sigma_{\operatorname{DP}}^4} s^2} 
                = e^{- \frac{n-f-1}{256}},
            \end{align*}
            and 
            \[
                \mathbb{P}\left( \bigcap_{t\in\{n_{\text{test}} + 1, \ldots, T\}} E_{2, t} \right) 
                \geq {\left( 1 - e^{- \frac{n-f-1}{256}} \right)}^{T-n_{\text{test}}} 
                \in \Omega(1) \text{ if on only if } T \in \mathcal{O}\left( e^{\frac{n-f-1}{256}} \right).
            \]

        \item[\textit{(c)}] 
            Finally, we control the variance which include the Byzantine participant vectors with high probability.        
            Let denote ${g}_t \vcentcolon= \begin{pmatrix} {h}_t \vcentcolon= (g^{(i)}_t)_{i \in \mathcal{H} \cap S_t} \\ {b}_t \vcentcolon= \beta \sigma_{\operatorname{DP}} 1_f \end{pmatrix}$ the vector containing every values from ${\{g^{(i)}_t\}}_{i \in S_t}$.
            Then  
            \begin{align*}
                \operatorname{Var}({\{g_t^{(i)}\}}_{i \in S_t}) 
                &= \frac{1}{n-f} \sum_{i \in S_t} {(g^{(i)}_t - \bar{g}_{S_t})}^2
                = \frac{1}{n-f} {g}_t^\intercal {P}_{n-f} {g}_t\\
                &=\underbrace{\frac{1}{n-f} {h}_t^\intercal \left( {I}_{n-2f} - \frac{1}{n-f}{1}_{n-2f}{1}_{n-2f} ^\intercal\right) {h}_t}_{V_1} \\
                &\qquad\qquad- \underbrace{\frac{2}{(n-f)^2}{h}_t^\intercal {1}_{n-2f}{1}_{f} ^\intercal {b}_t}_{V_2} \\
                &\qquad\qquad+ \underbrace{\frac{1}{n-f}{b}_t^\intercal \left({I}_{f} - \frac{1}{n-f}{1}_{f}{1}_{f} ^\intercal\right) {b}_t}_{V_3}
                \Bigg] \\ 
                &= V_1 
                - \frac{2 f \beta \sigma_{\operatorname{DP}}}{(n-f)^2} \sum_{i \in \mathcal{H} \cap S_t} \xi^{(i)}_t  
                + \frac{f(n-2f)}{(n-f)^2} \beta^2 {\sigma_{\operatorname{DP}}}^2.
            \end{align*}

            \begin{itemize}
                \item[($V_1$)] The concentration property of $V_1$, whether $1 \in S_t$ or note is the same.
                    We use the Hanson-Wright inequality from~\cite[Theorem 1.1]{rudelson2013hanson}, with $M = {I}_{n-2f} - \frac{1}{n-f}{1}_{n-2f}{1}_{n-2f} ^\intercal$
                    \begin{equation*}
                        \mathbb{P}\left( V_1 > \mathbb{E} \left[ V_1 \right] + s \right) 
                        \leq e^{- c_{1} \min \{ \frac{s^2 {(n-f)}^2}{\sigma_{\operatorname{DP}}^4 \| M \|_{\text{F}}^2}, \frac{s (n-f)}{\sigma_{\text{DP}}^2 \| M \|_{\text{sp}} } \}}.
                    \end{equation*}
                    where $c_{1}>0$ is a constant, $\| M \|_{\operatorname{F}}^2 = n-2f-1+\left(\frac{f}{n-f}\right)^2 \leq n-2f$ and $\| M \|_{\text{sp}} = 1$ as the eigenvalue of $M$ are $1$ with multiplicity $n-2f-1$ and $\frac{f}{n-f}$. 
                    That is, $\forall s \geq 0$,
                    \[
                        p_1(s) \vcentcolon=
                        \mathbb{P}\left( V_1 > \mathbb{E} \left[ V_1 \right] + s \right) 
                        \leq e^{- c_{1} \min \{ \frac{s^2 (n-f)^2}{\sigma_{\operatorname{DP}}^4 (n-2f)}, \frac{s (n-f)}{\sigma_{\operatorname{DP}}^2} \}}.
                    \]

                \item[($V_2$)] For $V_2$, $t\geq n_{\operatorname{test}}$ and $i \in \mathcal{H} \cap S_t$, 
                \begin{equation*}
                    \frac{2f\beta \sigma_{\operatorname{DP}}}{(n-f)^2} \sum_{i \in \mathcal{H} \cap S_t} \xi^{(i)}_t \sim \mathcal{N}\left( 0, \left( \frac{2f\beta \sigma_{\operatorname{DP}} }{(n-f)^2}\right)^2 (n-2f) \sigma_{\operatorname{DP}}^2 \right),
                \end{equation*} 
                i.e. $\forall s \geq 0$,
                    \begin{multline*}
                        p_2(s) 
                        \vcentcolon= \mathbb{P}\left( V_2 > \mathbb{E} \left[ V_2 \right] + s = s \right)
                        = \mathbb{P}\left( V_2 < - s \right)  \\
                        \leq e^{- \frac{s^2(n-f)^4}{8 f^2 \beta^2 (n-2f) \sigma_{\operatorname{DP}}^4}}
                        = e^{- \frac{s^2(n-f)^3}{8 f (n-f-1) \sigma_{\operatorname{DP}}^4}}.
                    \end{multline*}
                
                \item[($V_3$)] For $V_3$, $t\geq n_{\operatorname{test}}$, $\frac{f(n-2f)}{(n-f)^2} \beta^2 \sigma_{\operatorname{DP}}^2$ is perfectly concentrated as Byzantine participants send constant values.
            \end{itemize}
            
            Hence, with
            \[
                s = \frac{1}{8} \sigma_{\operatorname{DP}}^2 \frac{n-f-1}{n-f},
            \]
            we have $\mathbb{E} \left[ {g}_t^\intercal \left(\frac{1}{n-f} {P}_{n-f}\right) {g}_t \right] + s \leq \frac{7}{8} \frac{n-f-1}{n-f} \sigma_{\operatorname{DP}}^2$, and using the union bound,
            \begin{align*}
                \mathbb{P}\left(E^{\mathsf{c}}_{3, t}\right) 
                &\vcentcolon= \mathbb{P}\left( {g}_t^\intercal \left(\frac{1}{n-f} {P}_{n-f}\right) {g}_t > \mathbb{E} \left[ {g}_t^\intercal \left(\frac{1}{n-f} {P}_{n-f}\right) {g}_t \right] + s \right) \\
                &= \mathbb{P}\left( V_1 - V_2 > \frac{n-f-1}{n-f} \frac{n-2f}{n-f} \sigma_{\operatorname{DP}}^2 + s \right) \\
                &\leq \mathbb{P}\left( V_1 - V_2 > s = \frac{1}{8}\frac{n-f-1}{n-f} \sigma_{\operatorname{DP}}^2 \right) \\
                &\leq p_1(\frac{s}{2}) + p_2(\frac{s}{2}) \\
                &= 
                e^{- \frac{c_1}{16} (n-f-1) \min \{ \frac{n-f-1}{16(n-2f)}, 1 \} }
                + e^{- \frac{(n-f)(n-f-1)}{2048 f}}.
            \end{align*}
            As a result, 
            \begin{multline*}
                \mathbb{P}\left( \bigcap_{t\in\{n_{\text{test}} + 1, \ldots, T\}} E_{3, t} \right) 
                \geq {\left( 1 - p_1(\frac{s}{2}) - p_2(\frac{s}{2}) \right)}^{T-n_{\text{test}}} \\
                = {\left( 1 - e^{- \frac{c_1}{16} (n-f-1) \min \{ \frac{n-f-1}{16(n-2f)}, 1 \} } - e^{- \frac{(n-f)(n-f-1)}{2048 f}} \right)}^{T-n_{\text{test}}} \\
                \in \Omega(1) \text{ if and only if } T \in \mathcal{O}\left( \min\{ e^{\frac{c_1}{16} (n-f-1) \min \{ \frac{n-f-1}{16(n-2f)}, 1 \} }, e^{\frac{(n-f)(n-f-1)}{2048 f}} \} \right).
            \end{multline*}
    \end{itemize}
    
    \item[\textit{(ii)}] Under $E_1$ and the trajectory produced by $S$ (i.e., with $g_t^{(1)}= \frac{1}{m} \sum_{j=1}^m z^{(1,j)} + \xi^{(1)}_t = \xi^{(1)}_t$), we do an analogous analysis as in \textit{(i)}, in the opposite direction. 
    That is, due to~\Cref{alg:algo1}, the Byzantine participants successfully steer the optimization trajectory in the negative direction. 
    \item[\textit{(iii)}] Otherwise, in the complementary event $E^{\mathsf{c}}_1$, 
    the influence of Byzantine participants is considered negligible, and any potential contribution to instability from this case is disregarded.
\end{itemize}

\textbf{Consequence for the lower bound.}
Recall that we controll the probability of the following event
\[ 
    E = E_1 \cap \bigcap_{t\in\{n_{\text{test}} + 1, \ldots, T\}} \big( E_{2, t} \cap E_{3, t}\big), 
\]
Conditioned on $E$, the parameter divergence is bounded by
\begin{align*}
    \mathbb{E}\left[ \left| \theta_T - \theta^\prime_T \right| | E \right] 
    \geq \mathbb{E}\left[ \left| \theta^\prime_T \right| | E \right]
    &\geq \frac{\gamma}{n-f} \sum_{t=n_{\text{test}} + 1}^T f \beta \sigma_{\text{DP}} \\
    &= \frac{1}{2} \gamma (T-n_{\text{test}}) \sigma_{\operatorname{DP}}  \frac{f}{n-f} \sqrt{\frac{(n-f-1)(n-f)}{f(n-2f)}} \\
    &= \frac{1}{2} (1-p_{\text{test}}) \gamma T \sigma_{\operatorname{DP}} \sqrt{\frac{f}{n-f}\left(1 + \frac{f-1}{n-2f}\right)}.
\end{align*}
Moreover, note that for $t, s \in\{n_{\text{test}} + 1, \ldots, T\}$, $t \neq s$, $E_1$ is independent of $E_{2,t}$ and $E_{3,t}$, $E_{2,t} \cap E_{3,t}$ is independent of $E_{2,s} \cap E_{3,s}$ and 
\begin{align*}
    \mathbb{P}\left( E \right) 
    &= \mathbb{P}\left(E_1\right) \prod_{t\in\{n_{\text{test}} + 1, \ldots, T\}} \mathbb{P}\left(E_{2, t} \cap E_{3, t}\right) \\
    &\geq \mathbb{P}\left(E_1\right) \prod_{t\in\{n_{\text{test}} + 1, \ldots, T\}} \left(1 - \mathbb{P}\left(E_{2, t}^\mathsf{c} \right) - \mathbb{P}\left( E_{3, t}^\mathsf{c}\right) \right) \\
    &\geq \mathbb{P}\left(E_1\right) {\left( 1 - e^{- \frac{n-f-1}{256}} - e^{- \frac{c_1}{16} (n-f-1) \min \{ \frac{n-f-1}{16(n-2f)}, 1 \} } - e^{- \frac{(n-f)(n-f-1)}{2048 f}} \right)}^{T-n_{\text{test}}}
    \in \Omega(1).
\end{align*}
As a result, we have
\[
    \mathbb{E}\left[ \left| \theta_T - \theta^\prime_T \right| \right] 
    \geq \mathbb{E}\left[ \left| \theta_T - \theta^\prime_T \right| | E \right] \mathbb{P}\left( E \right)
    \in \Omega \left( \gamma T \sigma_{\operatorname{DP}} \sqrt{\frac{f}{n-2f}} \right),
\]
and the uniform stability is lower bounded by
\[
    \sup_{z\in[-C,C]} \mathbb{E}\left[ \left| \ell(\theta_T, z) - \ell(\theta^\prime_T, z) \right| \right] 
    \geq C \mathbb{E}\left[ \left| \theta_T - \theta^\prime_T \right| | E \right] \mathbb{P}\left( E \right)
    \in \Omega \left( \gamma T C \sigma_{\operatorname{DP}} \sqrt{\frac{f}{n-2f}} \right).
\]

\textbf{Gluing lower bounds.} Moreover, when $\sigma_{\operatorname{DP}} = 0$, the uniform stability is lower bounded by $\Omega\left(\gamma C^2 T \left( \frac{1}{(n-f)m} + \sqrt{\frac{f}{n-2f}}\right)\right)$ when $\frac{n}{3} \leq f < \frac{n}{2}$, 
and by $\Omega\left(\gamma C^2 T \left( \frac{1}{(n-f)m} + \frac{f}{n-2f} \right)\right)$ when $f < \frac{n}{3}$~\citep[Theorem 3.2]{boudou2025generalization}. 
We conclude the proof with the formula $\max\{a,b\} \geq \frac{a+b}{2} \in \Omega(a+b)$, i.e.\ when $\frac{n}{3} \leq f < \frac{n}{2}$ the uniform stability is lower bounded by
\begin{equation*}
    \Omega\left( \gamma C^2 T \left( \frac{1}{(n-f)m} + \sqrt{\frac{f}{n-2f}} \left( 1 + \frac{\sigma_{\operatorname{DP}}}{C} \right) \right) \right).
\end{equation*}
The case $f < \frac{n}{3}$ is analogous.
Note that $\sqrt{\kappa_{\operatorname{SMEA}}}$ is of the same order as $\sqrt{\frac{f}{n-2f}}$ if we assume there exist a constant $\nu > 0$ such that $f \leq \frac{n}{2+\nu}$, i.e.\ the lower bound matches the upper bound.
\end{proof}

\subsubsection{Proof for $d>1$.}\label{app:proof-trilemma-dhigh}

\begin{assumption}\label{assump:lowerbound-d-high}
    We assume the the following.
    \begin{enumerate}
        \item 
        There exists a constant $\alpha_{\min} > 0$ such that $\alpha \vcentcolon= \frac{f}{n-f} \in [\alpha_{\min}, 1/2)$.
        \item 
        There exist constants $\mu, \nu > 0$ such that $\mu \leq \frac{d}{n-f} \leq \nu$. 
        \item $\mu > \mu_{\min}$ and $n-f-1 \geq N_{\min}$, where $\mu_{\min}$ and $N_{\min}$ are constants that depends on assumption 1. \&\ 2., and the problem spectral and finite-sample structure.
        That is,  
        \begin{equation*}
            \mu_{\min} \vcentcolon= \frac{4}{c_{\max}^2 (1-r_{\max})}, 
        \end{equation*}
        and
        \begin{equation*} 
            N_{\min} \vcentcolon= \max\left\{ \left( \frac{3(\sqrt{\nu} + \sqrt{1-\alpha_{\min}} + \max\{C_1,C_2\})}{1-\sqrt{1-\alpha_{\min}}} \right)^2, 
            \frac{16 \min(1, \nu) }{ c_{\delta} \sqrt{\mu} \sqrt{ 2 + \frac{ \alpha_{\min} \mu c^2}{\min(1, \nu)}}} \right\},
        \end{equation*}
        where $C_1, C_2 > 0$ are universal constants, and $c_{\max}, r_{\max} \in (0,1)$ are defined as  
        \begin{equation*}
            r_{\max} \vcentcolon= \left( \frac{\sqrt{\nu} + \frac{1}{3} + \frac{2}{3}\sqrt{1-\alpha_{\min}}}{\sqrt{\nu} + \frac{2}{3} + \frac{1}{3}\sqrt{1-\alpha_{\min}}} \right)^2 \geq \frac{\E\|X_\perp\|^2_{\operatorname{sp}}}{\E\|X\|^2_{\operatorname{sp}}},
        \end{equation*}
        \begin{equation*}
            c_{\max}^2 \vcentcolon= 1 - \frac{1-\alpha_{\min}}{\left(\sqrt{\nu} + 1 - \frac{1 - \sqrt{1-\alpha_{\min}}}{3} \right)^2} \leq \frac{ \mathbb{E} \left[ (DZ)^\intercal P_{n-f} D Z \right]}{ \E\|X\|^2_{\operatorname{sp}}},
        \end{equation*}
        where $Z \sim \mathcal{N}(0, I_{n-f})$, $X \! \in \! \mathbb{R}^{d \times (n-f-1)}$, $X_\perp \! \in \! \mathbb{R}^{(d-1) \times (n-2f)}$ are random matrices with i.i.d.\ standard normal entries. 
        $D \in \mathbb{R}^{(n-f) \times (n-f)}$ is a diagonal masking matrix with $n-2f$ ones and $f$ zeros, and $P_{n-f} = I_{n-f} - \frac{1}{n-f}1_{n-f}1_{n-f}^\intercal$ is a centering projection matrix.
    \end{enumerate}
\end{assumption}

\ 

\Cref{assump:lowerbound-d-high}, while technical are fairly mild. 
They are either practically always true (e.g., $T$) or assumed to bound expectation and concentration of random matrices (e.g., $\mu_{\min}$ and $N_{\min}$).

The quantities $\frac{\E\|X_\perp\|^2_{\operatorname{sp}}}{\E\|X\|^2_{\operatorname{sp}}}$ and $\frac{ \mathbb{E} \left[ (DZ)^\intercal P_{n-f} D Z \right]}{ \E\|X\|^2_{\operatorname{sp}}}$ have interpretation within the context of our proof. 
Indeed, $\frac{\E\|X_\perp\|^2_{\operatorname{sp}}}{\E\|X\|^2_{\operatorname{sp}}}$ is the ratio between the expected principal variance of the honest noise orthogonal to the attack and the expected principal variance of all honest noise. 
Intuitively, it measures how much of the honest noise aligns with the attack direction (which is arbitrary in our proof), dictating how easily the attacker can hide inside the honest variance.
Finally, $\frac{ \mathbb{E} \left[ (DZ)^\intercal P_{n-f} D Z \right]}{ \E\|X\|^2_{\operatorname{sp}}}$ is the ratio between the expected variance of the honest noise along the attack axis and the expected principal variance of all the honest noise.
Intuitively, this measures how much the orthogonal honest noise tilts the principal eigenvector away from the attack direction.
We now turn to the proof of~\Cref{th:trilemma-smea}.

\ 

\begin{reptheorem}{th:trilemma-smea}
    Consider the setting in~\Cref{II-problem-formulation} with full batch and the $\mathrm{SMEA}$ aggregation rule. 
    Suppose~\Cref{assump:lowerbound-d-high}, $T \in \mathcal{O}\big(e^{c_{\min} \left( \sqrt{d} + \sqrt{n-f-1} \right)^2 }\big)$ where $c_{\min}$ is a problem-dependent constant, $\frac{\varepsilon}{\sqrt{\ln(1/\delta)}} \in \Omega(1)$ and $\sigma_{\operatorname{DP}} = \frac{2C\sqrt{2T\ln(1/\delta)}}{m\varepsilon}$.
    Then there exist $\ell \in {\mb{R}}^{\Theta \times \mc{Z}}$ such that $\forall z \in \mc{Z}, \ell(\cdot; z)$ is $C$-Lipschitz, $L$-smooth and convex,
    such that the uniform stability of $\mc{A}$ is lower bounded by
    \begin{equation}\label{smea-byz-lb-eq-d-app}
        \Omega \left( \gamma C^2 T \left( \frac{1}{(n-f)m} + \frac{f}{n-2f} + \sqrt{\frac{f}{n-2f}} \frac{\sigma_{\operatorname{DP}}}{C} \left( 1 + \sqrt{\frac{d}{n-f}} \right) \right) \right).
    \end{equation}
    If we moreover assume $\frac{n}{3} < f < \frac{n}{2}$, the uniform stability of $\mc{A}$ is lower bounded by
    \begin{equation}\label{smea-byz-lb-eq-high-f-d-app}
        \Omega\left( \gamma C^2 T \left( \frac{1}{(n-f)m} + \sqrt{\frac{f}{n-2f}} + \sqrt{\frac{f}{n-2f}} \frac{\sigma_{\operatorname{DP}}}{C} \left( 1 + \sqrt{\frac{d}{n-f}} \right) \right) \right).
    \end{equation}
\end{reptheorem}
\begin{proof}
Let $d, n \geq 1$, $\mathcal{H} = \{1, \ldots, n-f\}$ and let $S, S' \in \mathcal{Z}^{(n-f)m}$ neighboring datasets where every sample equal $0$, except for one sample in $S'$ from the participant $1$, equal to $-C e_1$.
Let $\ell: \R^d \times \widebar{B}(0, C) \to \R, (\theta, z) \mapsto \ell(\theta; z) = \theta^\intercal z$.
For $i\in \mathcal{H}:=\{1, \ldots, n-f\}$, $0 \leq t \leq T$, let $\xi^{(i)}_t \sim \mathcal{N}(0,\sigma_{\operatorname{DP}}^2 I_d)$ be independent random gaussian noise.

\paragraph{Byzantine participants trigger.}
We analyze two attacker capabilities. 
First, if Byzantine participants have access to the random seeds of honest participants, they can circumvent the differential privacy guarantee.
Indeed, this knowledge allows them to perfectly reconstruct the added noise, subtract it, and thus deterministically identify, at the first iteration, whether the first participant's true gradient was $-\frac{C}{m}e_1$ or $0$.
In the alternative scenario, where attackers only observe the final noisy gradient (lacking the seed), they cannot perform this deterministic inference. 
We must then rely on a statistical test to bound their success probability to distinguish the two cases.
We explain a possible test in what follows.

Let $p_{\text{test}} \in (0,1)$. 
We define the behavior of Byzantine participants based on a trigger condition.
This trigger is determined by performing a gaussian mean hypothesis test on the first $n_{\text{test}} \vcentcolon= p_{\text{test}} T$ sample, collected during an initial fraction of the total $T$ iterations.
The test evaluates whether the sample mean, projected on $e_1$, corresponds to hypothesis $H_0$ (mean 0, equivalently the dataset is $S$) or to hypothesis $H_{1}$ (mean $-\frac{C}{m}$, equivalently the dataset is $S'$), assuming a known variance $\sigma^2_{\operatorname{DP}}$.
This test processes the statistic $\bar\zeta$ which evaluates at  $-\frac{C}{m} + \bar\xi$ under $H_{1}$, and evaluates at $\bar\xi$ under $H_{0}$, where $\bar\xi = \frac{1}{n_{\text{test}}} \sum_{t=1}^{n_{\text{test}}} {\xi^{(1)}_t}^\intercal e_1$.
Overall, the Byzantine participants undergo a membership inference attack to know whether the gradients from participant $1$ equals $g^{(1)}_t = \xi^{(1)}_t$ or $g^{\prime(1)}_t = -\frac{C}{m}e_1 + \xi^{(1)}_t$, i.e, whether the dataset is $S$ or $S'$.
The decision is made depending on a trigger value $c_{\text{trigger}}$,
\begin{equation*}
    \text{reject } H_0 \text{ if } \zeta < - c_{\text{trigger}}, \text{ else accept } H_0.
\end{equation*}
We denote $Z \sim \mathcal{N}(0, 1)$ a standard gaussian random variable, and $z_\alpha \in \mathbb{R}$ the value such that $\mathbb{P}(Z > z_\alpha) = \alpha$, $\alpha \in (0,1)$. 
Conditioned on $H_0$, $\zeta | H_0 \sim \mathcal{N}(0, \frac{\sigma_{\text{DP}}^2}{n_{\text{test}}})$, hence  setting $c_{\text{trigger}} = z_\alpha \frac{\sigma_{\operatorname{DP}}}{\sqrt{n_{\text{test}}}}$, 
\[
    \mathbb{P}\left( \text{reject } H_0 | H_0 \right) 
    = \mathbb{P}\left( \zeta > c_{\text{trigger}} | H_0 \right) 
    = \mathbb{P}\left( Z > z_\alpha \right) 
    = \alpha.
\]
Let denote $E_1$ the event that the Byzantine workers successfully infer the underlying datasets from the gradient statistics (successful MIA). 
Under the optimal hypothesis test, we have
\begin{align*}
    E_1 &= \Big\{ \bar\xi \text{ does not trigger } H_0 \text{ rejection} \Big\} \cap \Big\{ \text{shifting } \bar\xi \text{ by} -\frac{C}{m} \text{ triggers } H_0 \text{ rejection} \Big\} \\
    &= \Big\{ \bar\xi \geq - c_{\text{trigger}} \Big\} \cap \Big\{ \bar\xi - \frac{C}{m} < -c_{\text{trigger}} \Big\} \\
    &= \Big\{ - c_{\text{trigger}} \leq \bar\xi < \frac{C}{m} - c_{\text{trigger}} \Big\}.
\end{align*}
Because $\operatorname{Law}(\bar\xi) = \operatorname{Law}(-\bar\xi)$, we have
\begin{align*}
    \mathbb{P}\left( E_1 \right) 
    = \mathbb{P}\left( c_{\text{trigger}} - \frac{C}{m} < \bar\xi \leq c_{\text{trigger}} \right) 
    &= \mathbb{P}\left( z_\alpha - \frac{C\sqrt{n_{\text{test}}}}{m\sigma_{\text{DP}}} < Z \leq z_\alpha \right) \\
    &= \mathbb{P}\bigg( z_\alpha - \frac{\varepsilon \sqrt{p_{\text{test}}}}{2\sqrt{2\ln(1/\delta)}} < Z \leq z_\alpha \bigg).
\end{align*}
This means that $\mathbb{P}\left( E_1 \right) \in \Omega(1)$ if and only if there exist a constant $c_1$ such that $\varepsilon \geq c_1 \sqrt{\ln(1/\delta)}$.
The event $E_1$ relates to the best false negative rate $\beta_\alpha = \inf_{\Prob( \text{reject } H_0 \mid H_0) \leq \alpha} \Prob( \text{accept } H_0 \mid H_1) = \Prob(Z - \frac{C\sqrt{n_{\text{test}}}}{m\sdp} \leq z_\alpha) = \Phi(z_\alpha + \frac{\varepsilon\sqrt{p_{\text{test}}}}{2\sqrt{2\ln(1/\delta)}})$ (the Byzantine participants infer $S$ when $S'$)
for a fixed false positive rate $\alpha = \Prob( \text{reject } H_0 \mid H_0) = \Prob(Z>z_\alpha) = 1 - \Phi(z_\alpha)$ (the Byzantine participants infer $S'$ when $S$)
through $\Prob(E_1) = 1 - \alpha - \beta_\alpha = \Phi(z_\alpha) - \Phi(z_\alpha + \frac{\varepsilon\sqrt{p_{\text{test}}}}{2\sqrt{2\ln(1/\delta)}})$.
That is, $\Prob(E_1) = 1 - \Prob(\text{reject } H_0 \mid H_0) - \Prob(\text{accept } H_0 \mid H_1)$ quantifies the Byzantine participants maximum inference capability, representing the probability of MIA success beyond random chance (or MIA advantage).

\paragraph{Byzantine participants behavior under the $\operatorname{SMEA}$ aggregation rule.} 
We assign the following behavior (\Cref{alg:algo1-d-high}) to the Byzantine participants $i \in \{n-f+1, n\}$, for $t \geq n_{\text{test}} +1$.

{\centering
    \begin{minipage}{.45\linewidth}
        \begin{algorithm}[H]
        \caption{Byzantine Participant Behavior}\label{alg:algo1-d-high}
        \begin{algorithmic}
            \STATE \textcolor{gray}{\# $\beta$ (defined below) depends on $f$ and $n$} 
            \IF{$\bar\zeta \geq -c_{\text{trigger}}$}
            \STATE \textbf{return} $\beta \sdp e_1$
            \ELSE
            \STATE \textbf{return} $- \beta \sdp e_1$
            \ENDIF
        \end{algorithmic}
        \end{algorithm}
    \end{minipage}
    \par
}\

That is, depending on the input $\bar\zeta$ statistic, which itself depends only on the observed noisy gradients, the Byzantine participants attempt to steer the optimization trajectory toward either positive or negative values (with respect to the hyperplane orthogonal to $e_1$).

In what follows, we decompose our analyze depending on whether the Byzantine participants undergo a successful MIA (i.e., infer whether the dataset is $\mathcal{S}$ or $\mathcal{S}'$ successfully) in the two coupled optimization trajectories ($E_1$), or not ($E_1^{\mathsf{c}}$). 
In the complementary event $E^{\mathsf{c}}_1$, the influence of Byzantine participants is considered negligible, and any potential contribution to instability from this case is disregarded. 
Consequently, we fix their behavior to sending an arbitrary vector, for example, $0$.
Under $E_1$, the attack~\Cref{alg:algo1-d-high} output $\beta \sdp e_1$ for the trajectory produced by $S'$ (i.e., with $g_t^{(1)\prime}= \frac{1}{m} \sum_{j=1}^m z^{(1,j)\prime} + \xi^{(1)}_t = -\frac{C}{m}e_1 + \xi^{(1)}_t$), and output $- \beta \sdp e_1$ for the trajectory produced by $S$ (i.e., with $g_t^{(1)}= \frac{1}{m} \sum_{j=1}^m z^{(1,j)} + \xi^{(1)}_t = \xi^{(1)}_t$). 
That is, under $E_1$, the Byzantine participants attempt to steer the optimization trajectory in different directions depending on the dataset they have successfully inferred.
The analysis is analogous for both trajectories. 
For the remaining of this paragraph, we work under $E_1$ and the trajectory induced by $S$.

Let $\Lambda \vcentcolon= \E \| \Sigma_{\mathcal{H}} \|_{\operatorname{sp}}$. 
Following~\Cref{alg:algo1-d-high} after $n_{\text{test}}$ steps,
all Byzantine participants $i \in \mathcal{B} \vcentcolon= \{n-f+1, \ldots, n\}$ send to the server a constant vectors in the direction $e_1$, 
\begin{equation}
    \text{for all } i \in \mathcal{B}, t\in \{n_{\text{test}} + 1, \ldots, T\}, \quad g^{(i)}_t \vcentcolon= \beta \sigma_{\text{DP}} e_1,
\end{equation}
where 
\begin{equation}\label{eq:beta-def}
    \beta \vcentcolon= c \frac{n-f}{\sqrt{f(n-2f)}} \frac{\sqrt{\Lambda}}{\sigma_{\text{DP}}},
\end{equation}
where $c \in (0, c_{\operatorname{opt}})$ and $c_{\operatorname{opt}}$ are parameter-independent constants defined in step \textit{(iv)} below. 
$\beta$ is chosen so that $\mathrm{SMEA}$ consistently selects the Byzantine vectors, as explained below.

\begin{itemize}
    \item 
    \noindent\textbf{Note on $\Lambda$ and the magnitude of $\frac{\sqrt{\Lambda}}{\sigma_{\text{DP}}}$.}
    By standard derivation, $\Sigma_{\mathcal{H}}$ is identically distributed to $\frac{\sigma_{\text{DP}}^2}{n-f} X X^\intercal$, where $X \in \mathbb{R}^{d \times (n-f-1)}$ is a random matrix with i.i.d.\ standard normal entries $\mathcal{N}(0,1)$\footnote{
            Indeed, denoting $\Xi_t$ the matrix whose column are $\frac{\xi_t^{(i)}}{\sigma_{\text{DP}}}$ for $i\in\mathcal{H}$ and $P_{n-f} = I_{n-f} - \frac{1}{n-f} 1_{n-f}1_{n-f}^\intercal$ the centering matrix which is an orthogonal projection matrix, $\Sigma_t = \frac{\sigma_{\text{DP}}^2}{n-f} \Xi_t P_{n-f} \Xi_t^\intercal$.
            $P_{n-f}$ is diagonalizable, $P_{n-f} = U \Lambda U^\intercal$, where $U$ is an orthogonal matrix and $\Lambda \vcentcolon= \operatorname{diag(1, \ldots, 1, 0)}$ .
            Consequently, $\Sigma_t = \frac{\sigma_{\text{DP}}^2}{n-f} (\Xi_t U) \Lambda (\Xi_t U)^\intercal = \frac{\sigma_{\text{DP}}^2}{n-f} \sum_{j=1}^{n-f-1} (\Xi_t U)_j (\Xi_t U)_j^\intercal \vcentcolon= \frac{\sigma_{\text{DP}}^2}{n-f} X X^\intercal$, with $X_j = (\Xi_t U)_j$.
        }.
    Hence, $\lambda_{\max}(\Sigma_{\mathcal{H}}) = \frac{\sigma_{\text{DP}}^2}{n-f} s_{\max}(X)^2$, where $s_{\max}(X)$ is the maximum singular value of $X$. 
    Taking the expectation yields
    \begin{equation}\label{eq:true-lambda}
        \Lambda = \mathbb{E}[\lambda_{\max}(\Sigma_{\mathcal{H}})] = \frac{\sigma_{\text{DP}}^2}{n-f} \mathbb{E}[s_{\max}(X)^2].
    \end{equation}
    To relate $\Lambda$ to the Marchenko-Pastur baseline $\sqrt{n-f-1} + \sqrt{d}$, we multiply and divide Equation~\eqref{eq:true-lambda} by this baseline, yielding
    \begin{equation}\label{eq:true-lambda-ratio}
        \frac{\sqrt{\Lambda}}{\sigma_{\text{DP}}} = c_\Lambda \sqrt{\frac{n-f-1}{n-f}} \left( 1 + \sqrt{\frac{d}{n-f-1}} \right), \quad \text{where} \quad c_\Lambda \vcentcolon= \frac{\sqrt{\mathbb{E}[s_{\max}(X)^2]}}{\sqrt{n-f-1} + \sqrt{d}}.
    \end{equation}
    To prove that $c_\Lambda$ can be considered as a constant, we rely on results from high-dimensional probability~\citep[Chapter 7]{vershynin_high-dimensional_2018}.
    By~\cite[Theorems 7.3.1]{vershynin_high-dimensional_2018} and standard Gaussian comparison arguments, there exists a constant $C>0$ such that
    \begin{equation*}
        \sqrt{d} + \sqrt{n-f-1} - C \leq \mathbb{E}[s_{\max}(X)] \leq \sqrt{d} + \sqrt{n-f-1}.
    \end{equation*}
    Furthermore, because the map $X \mapsto s_{\max}(X)$ is $1$-Lipschitz\footnote{
        $|s_{\max}(X) - s_{\max}(Y)| = | \|X\|_{\operatorname{sp}} - \|Y\|_{\operatorname{sp}} | \leq \|X-Y\|_{\operatorname{sp}} \leq \|X-Y\|_{\operatorname{F}}$. 
    }, the Gaussian Poincaré inequality guarantees $\text{Var}(s_{\max}(X)) \leq 1$~\cite[Section 3.7]{boucheron2013concentration}.
    Using the identity $\mathbb{E}[s_{\max}(X)^2] = \big(\mathbb{E}[s_{\max}(X)]\big)^2 + \text{Var}(s_{\max}(X))$ yields
    \begin{equation*}
        (\sqrt{d} + \sqrt{n-f-1} - C)^2 \leq \mathbb{E}[s_{\max}(X)^2] \leq (\sqrt{d} + \sqrt{n-f-1})^2 + 1.
    \end{equation*}
    Taking the square root and dividing by the baseline dimension $(\sqrt{d} + \sqrt{n-f-1})$, we obtain 
    \begin{equation}\label{eq:c_lambda-bound}
        \max\{ \frac{1}{2}, 1 - \frac{C}{2} \} \leq 1 - \frac{C}{\sqrt{d} + \sqrt{n-f-1}} \leq c_\Lambda \leq \sqrt{1 + \frac{1}{(\sqrt{d} + \sqrt{n-f-1})^2}} \leq \sqrt{\frac{5}{4}},
    \end{equation}
    where the numerical lower bound is obtained through the following inequalities 
    $\E s_{\max}(X)^2 \geq \frac{\E \|X\|_F^2}{\min(d,n-f-1)} = \frac{d(n-f-1)}{\min(d,n-f-1)} = \max(d,n-f-1)$, and hence $c_\Lambda \geq \frac{\max(d,n-f-1)}{\sqrt{n-f-1}+\sqrt{d}} \geq \frac{1}{2}$.
\end{itemize}

Let $\Sigma_S \vcentcolon= \frac{1}{|S|} \sum_{i \in S} (g_t^{(i)} - \bar{g}_S)(g_t^{(i)} - \bar{g}_S)^\intercal$ denote the sample covariance matrix of a subset $S$. 
To ensure the aggregator selects the subset $S_t = H_t \cup \mathcal{B}$, where $H_t \subset \mathcal{H}$ is composed of $n-2f$ indices from $\mathcal{H}$, over the honest subset $\mathcal{H}$, we must guarantee with high probability that
\begin{equation}
    \text{for } t\in \{n_{\text{test}} + 1, \ldots, T\}, \quad \lambda_{\max}(\Sigma_{S_t}) < \lambda_{\max}(\Sigma_{\mathcal{H}}).
\end{equation}
That is $\operatorname{SMEA}(g_t^{(1)}, \ldots, g_t^{(n)}) = \bar{g}_{S_t} = \frac{1}{n-f} \sum_{i \in S_t} g_t^{(i)} = \frac{f}{n-f}\beta \sdp e_1 + \frac{n-2f}{n-f}\bar{\xi}_{H_t}$.
Indeed, all other subsets than $S_t$ and $\mathcal{H}$ necessaryly have greater spectral norm\footnote{
    For any intermediate subset $S_k$ containing exactly $0 < k < f$ Byzantine vectors, the expected variance along the direction $e_1$ introduces a between-group mixture term proportional to $k(n-f-k)$. 
    Because this forms a strictly concave quadratic function of $k$, its minimum over the closed interval $[0, f]$ is necessarily achieved at the boundaries ($k=0$ or $k=f$). 
    By tuning $\beta$ to equalize these endpoints with a slight margin favoring $k=f$, we guarantee that $\mathrm{SMEA}$ selects either the purely honest subset $\mathcal{H}$ or the subset $S_t$ over any intermediate mixture.
}.
We demonstrate this via a concentration argument, first bounding the expected spectra of both matrices.


\

To tightly bound $\lambda_{\max}(\Sigma_{S_t})$, we leverage the fact that, due to the subset $\mathcal{B}$, the dominant eigenvector of $\Sigma_{S_t}$ consistently aligns with $e_1$, thereby reducing the dimensional dependence\footnote{
    This avoids the worst-case---loose in our setting---application of Weyl's inequality.
    The reasoning is as follows.
    By definition, $S_t$ is partitioned into two disjoint groups: the honest subset $H_t$ of size $n-2f$ (with empirical mean $\bar{\xi}_{H_t}$ and covariance $\Sigma_{H_t}$) and the Byzantine subset $\mathcal{B}$ of size $f$ (with empirical mean $\beta \sigma_{\text{DP}} e_1$ and covariance $0$). 
    Using the standard decomposition formula for the empirical covariance for the partition $S_t =  H_t \cup \mathcal{B}$, we have
    \begin{align*}
        \Sigma_{S_t} &= \frac{|H_t|}{|S_t|} \Sigma_{H_t} + \frac{|\mathcal{B}|}{|S_t|} \Sigma_{\mathcal{B}} + \frac{|H_t| |\mathcal{B}|}{|S_t|^2} (\bar{\xi}_{H_t} - \beta \sigma_{\text{DP}} e_1)(\bar{\xi}_{H_t} - \beta \sigma_{\text{DP}} e_1)^\intercal \\
        &= \frac{n-2f}{n-f} \Sigma_{H_t} + \frac{f(n-2f)}{(n-f)^2} (\bar{\xi}_{H_t} - \beta \sigma_{\text{DP}} e_1)(\bar{\xi}_{H_t} - \beta \sigma_{\text{DP}} e_1)^\intercal.
    \end{align*}
    We have $\lambda_{\max}((\bar{\xi}_{H_t} - \beta \sigma_{\text{DP}} e_1)(\bar{\xi}_{H_t} - \beta \sigma_{\text{DP}} e_1)^\intercal) = \| \bar{\xi}_{H_t} - \beta \sigma_{\text{DP}} e_1 \|_2^2$ and, 
    by Weyl's inequality, the maximum eigenvalue of a sum of symmetric matrices is bounded by the sum of their maximum eigenvalues,
    \begin{equation*}
        \lambda_{\max}(\Sigma_{S_t}) 
        \leq \text{UB}_{S_t}
        \vcentcolon= \frac{n-2f}{n-f} \lambda_{\max}(\Sigma_{H_t}) + \frac{f(n-2f)}{(n-f)^2} \|\bar{\xi}_{H_t} - \beta \sigma_{\text{DP}} e_1\|_2^2.
    \end{equation*}
    $\text{UB}_{S_t}$ does not leverage the fact that, because of the subset $\mathcal{B}$, the dominant eigenvector aligns with $e_1$, which introduces a dimensional dependence and makes the analysis loose.    
    Note that general alternatives to Weyl's inequality for matrix perturbation exist, such as those based on the Davis–Kahan $\sin \Theta$ theorem~\cite[Theorem 4.1.15]{vershynin_high-dimensional_2018}
    (i.e., eigen-space perturbation theory).
    However, our technique directly leverages the problem structure, making the overall reasoning tighter and simpler.
}.
To do so, we decompose $\Sigma_{S_t}$ into a $2 \times 2$ block matrix structure, where one diagonal block accounts for the covariance along the direction $e_1$, and the other for the covariance in the $(d-1)$-dimensional orthogonal complement $e_1^\perp$.
Geometrically, we demonstrate that the principal eigenvector of $\Sigma_{S_t}$ is overwhelmingly aligned with the attack axis $e_1$. 

Any unit vector $v \in \mathbb{R}^d$ can be uniquely decomposed as $v = x e_1 + y w$, where $w \in e_1^\perp$ is a unit vector and $x^2 + y^2 = 1$. 
Hence,
\begin{align}\label{eq:22-decompo}
    v^\intercal \Sigma_{S_t} v 
    &= x^2 (e_1^\intercal \Sigma_{S_t} e_1) + y^2 (w^\intercal \Sigma_{S_t} w) + 2xy (e_1^\intercal \Sigma_{S_t} w) \nonumber \\
    &= x^2 \Sigma_{11} + y^2 (w^\intercal \Sigma_{22} w) + 2xy w^\intercal \Sigma_{12},
\end{align}
where we define the following blocks.
\begin{itemize}
    \item $\Sigma_{11} \vcentcolon= e_1^\intercal \Sigma_{S_t} e_1 \in \mathbb{R}$ is the variance along the attack axis $e_1$.
    \item $\Sigma_{22} \vcentcolon= P_{e_1^\perp} \Sigma_{S_t} P_{e_1^\perp} \in \mathbb{R}^{(d-1) \times (d-1)}$ is the orthogonal covariance block, where $P_{e_1^\perp} \vcentcolon= I_d - e_1 e_1^\intercal$ denote the orthogonal projection matrix zeroing out the first coordinate.
    Because the attack vector spans $e_1$, $\Sigma_{22}$ is entirely independent of the Byzantine vectors.
    \item $\Sigma_{12} \vcentcolon= P_{e_1^\perp} \Sigma_{S_t} e_1 \in \mathbb{R}^{d-1}$ is the cross-covariance vector.
    Note that we identify $\begin{pmatrix} 0 & \Sigma_{2, 1} & \ldots & \Sigma_{d, 1} \end{pmatrix}$ with $\Sigma_{12}^\intercal$, since $w \in e_1^\intercal$. We identify $\Sigma_{22}$ analogously.
\end{itemize}
To bound the maximal eigenvalue, $\lambda_{\max}(\Sigma_{S_t}) = \sup_{v \in \mathbb{S}^{d-1}} v^\intercal \Sigma_{S_t} v$, we aim at upper bounding~\eqref{eq:22-decompo},
\begin{equation*}
    v^\intercal \Sigma_{S_t} v 
    \leq x^2 \Sigma_{11} + y^2 \lambda_{\max}(\Sigma_{22}) + 2|xy| \|\Sigma_{12}\|_2
    = \begin{pmatrix} |x| & |y| \end{pmatrix} \begin{pmatrix} \Sigma_{11} & \|\Sigma_{12}\|_2 \\ \|\Sigma_{12}\|_2 & \lambda_{\max}(\Sigma_{22}) \end{pmatrix} \begin{pmatrix} |x| \\ |y| \end{pmatrix},
\end{equation*}
where $\begin{pmatrix} |x| & |y| \end{pmatrix}$ is a unit vector in $\R^2$.
Consequently, this decomposition yields
\footnote{
    We might use the inequality $\sqrt{x^2 + y^2} \leq \min\{|x| + |y|, |x| + \frac{y^2}{2|x|} \}$ to the square root term to simplify the proof.
    Though, we place our analysis on the edge of what the Byzantine participants can do. 
    That is, in the inequality above, it is difficult to know if $|x|$ and $|y|$ will have the same order or not. 
    As we require to be as tight as possible, we apply Sylvester’s criterion to avoid explicitly handling inequalities.
}
\begin{align}\label{eq:final-ineq-lambdamax}
    \lambda_{\max}(\Sigma_{S_t}) 
    &\leq \lambda_{\max}(\begin{pmatrix} \Sigma_{11} & \|\Sigma_{12}\|_2 \\ \|\Sigma_{12}\|_2 & \lambda_{\max}(\Sigma_{22}) \end{pmatrix}) \nonumber \\
    &= \frac{\Sigma_{11} + \lambda_{\max}(\Sigma_{22})}{2} + \frac{1}{2}\sqrt{\left(\Sigma_{11} - \lambda_{\max}(\Sigma_{22})\right)^2 + 4\|\Sigma_{12}\|_2^2}.
\end{align}
\begin{itemize}
    \item[\textit{(i)}] \textbf{Bounding $\Sigma_{11}$.}
        The variance along $e_1$ decomposes into
        \begin{equation*}
            e_1^\intercal \Sigma_{S_t} e_1 = \underbrace{ \frac{1}{n-f} h_{t,1}^\intercal P_{n-f} h_{t,1} }_{\vcentcolon= V_1} - \underbrace{ \frac{2f\beta \sigma_{\text{DP}}}{(n-f)^2} \sum_{i \in \mathcal{H} \cap S_t} \xi_{t,1}^{(i)} }_{\vcentcolon= V_2} + \underbrace{ \frac{f(n-2f)}{(n-f)^2} \beta^2 \sigma_{\text{DP}}^2 }_{\vcentcolon= V_3},
        \end{equation*}
        where $\xi_{t,1}^{(i)} = {\xi_{t}^{(i)}}^\intercal e_1$, $P_{n-f} = I_{n-f} - \frac{1}{n-f}1_{n-f}1_{n-f}^\intercal$, and $h_t$ is the vector containing the $n-2f$ honest noise from $S_t$, and filled with $0$ for the remaining $f$ dimension.
        By the definition of $\beta$, the deterministic Byzantine contribution simplifies to $V_3 = c^2 \Lambda$. 
        Taking the expectation yields\footnote{
            This expectation is derived using the trace trick for quadratic forms. 
            Because the honest variables are zero-mean, $\mathbb{E}[h_{t,1}^\intercal P_{n-f} h_{t,1}] = \text{Tr}(P_{n-f} \Sigma_h) = \text{Tr}(\Sigma_h) - \frac{1}{n-f}\text{Tr}(1_{n-f}1_{n-f}^\intercal \Sigma_h)$, where $\Sigma_h \vcentcolon= \mathbb{E}[h_{t,1} h_{t,1}^\intercal]$. 
            The vector $h_{t,1} \in \mathbb{R}^{n-f}$ contains $n-2f$ independent honest coordinates $\sim \mathcal{N}(0, \sigma_{\text{DP}}^2)$ and $f$ zeros. 
            Thus, $\Sigma_h$ is diagonal, yielding $\text{Tr}(\Sigma_h) = (n-2f)\sigma_{\text{DP}}^2$. 
            Moreover, $\text{Tr}(1_{n-f}1_{n-f}^\intercal \Sigma_h) = 1_{n-f}^\intercal \Sigma_h 1_{n-f} = (n-2f) \sigma_{\text{DP}}^2$.
            Consequently, $\frac{1}{n-f}\mathbb{E}[h_{t,1}^\intercal P_{n-f} h_{t,1}] = \frac{1}{n-f}\big( n-2f - \frac{n-2f}{n-f} \big)\sigma_{\text{DP}}^2 = \frac{(n-2f)(n-f-1)}{(n-f)^2}\sigma_{\text{DP}}^2$.
        }.
        \begin{equation}\label{eq:UB-11}
            \E[\Sigma_{11}] 
            = \sigma_0^2 + c^2 \Lambda 
            \vcentcolon= U_{11}, \text{ where } \sigma_0^2 \vcentcolon= \frac{(n-2f)(n-f-1)}{(n-f)^2}\sigma_{\text{DP}}^2.
        \end{equation}
        

    \item[\textit{(ii)}] \textbf{Bounding $\E\lambda_{\max}(\Sigma_{22})$ via Loewner partial order.} 
    Let $u_i \vcentcolon= P_{e_1^\perp} g^{(i)}_{t}$ denote the projected gradients and $\mu \vcentcolon= \frac{1}{n-f} \sum_{i \in S_t} u_i$ their subset mean. 
    We have, $\Sigma_{22} = M_{22} - \mu\mu^\intercal$, where $M_{22} \vcentcolon= \frac{1}{n-f} \sum_{i \in S_t} u_i u_i^\intercal$. 
    Because $\mu\mu^\intercal \succeq 0$, we have $\Sigma_{22} \preceq M_{22}$, and hence $\lambda_{\max}(\Sigma_{22}) \leq \lambda_{\max}(M_{22})$. 
    Crucially, because the $f$ Byzantine vectors evaluate to zero in $e_1^\perp$, they contribute zero to the uncentered sum,
    \begin{equation*}
        M_{22} = \frac{1}{n-f} \sum_{i \in H_t} \xi^{(i)}_{t, \perp} {\xi^{(i)}_{t, \perp}}^\intercal 
        \vcentcolon= \frac{\sigma_{\text{DP}}^2}{n-f} X_\perp X_\perp^\intercal,
    \end{equation*}
    where $X_\perp \in \mathbb{R}^{(d-1) \times (n-2f)}$ consists of i.i.d.\ standard normal entries. 
    Because $f \ge 1$, $X_\perp$ is formed by removing columns (discarding $f-1$ excess honest participants) and a row (projecting onto $e_1^\perp$) from the matrix $X \in \mathbb{R}^{d \times (n-f-1)}$ defined in~\eqref{eq:true-lambda}. 
    Thus, $s_{\max}(X_\perp) \leq s_{\max}(X)$ deterministically.
    Furthermore, because $X$ is composed by independent normal random variables, the extra row and columns almost surely project non-trivially onto the dominant singular vectors of $X_\perp$. 
    Consequently, the spectral norm strictly expands with probability 1 (i.e., $\mathbb{P}(s_{\max}(X_\perp) < s_{\max}(X)) = 1$\footnote{
            Let $X_\perp$ be a submatrix obtained by deleting rows or columns from $X$. 
            By definition, $\|X\|_{\text{sp}} = \sup_{\|v\|_2=1} \|Xv\|_2$. 
            For any unit vector $u$ in the domain of $X_\perp$, appending zeros to the deleted coordinates yields a unit vector $\tilde{u}$ in the domain of $X$ such that $\|X\tilde{u}\|_2 = \|X_\perp u\|_2$. 
            Taking the supremum over all such vectors guarantees $\|X_\perp\|_{\text{sp}} \leq \|X\|_{\text{sp}}$ deterministically. Consequently, $\E[s_{\max}(X_\perp)^2] \leq \E[s_{\max}(X)^2]$.
        }). 
    Taking the expectation yields $r_\Lambda \vcentcolon= \frac{\E[s_{\max}(X_\perp)^2]}{\E[s_{\max}(X)^2]} < 1$, and hence
    \begin{equation}\label{eq:UB-22}
        \E\lambda_{\max}(\Sigma_{22}) \leq \E\lambda_{\max}(M_{22}) = r_\Lambda \Lambda \vcentcolon= U_{22}.
    \end{equation}

    \item[\textit{(iii)}] \textbf{Bounding $\E\|\Sigma_{12}\|_2^2$.} 
        Because the Byzantine vectors project to zero in $e_1^\perp$ and evaluate to $\beta \sigma_{\text{DP}}$ in $e_1$, the cross-covariance simplifies to
        \begin{equation*}
            \Sigma_{12} = (1-\alpha) (\Sigma_{H_t})_{12} + \alpha(1-\alpha) (\bar{\xi}_{H_t, 1} - \beta \sigma_{\text{DP}}) \bar{\xi}_{H_t, \perp},
        \end{equation*}
        where $(\Sigma_{H_t})_{12} = \frac{1}{n-2f} \sum_{i \in H_t} (\xi_{i, \perp} - \bar{\xi}_{H_t, \perp})(\xi_{i, 1} - \bar{\xi}_{H_t, 1})$.
        To compute the expected squared norm $\E\|\Sigma_{12}\|_2^2$, we square the sum. 
        By the independence of the Gaussian sample mean and sample covariance, the expected cross-term evaluates to zero (i.e., $\E[(\Sigma_{H_t})_{12}^\intercal \bar{\xi}_{H_t, \perp}] = \E[(\Sigma_{H_t})_{12}]^\intercal \E[\bar{\xi}_{H_t, \perp}] = 0$). 
        This yields the decomposition,
        \begin{equation*}
            \E\|\Sigma_{12}\|_2^2 = (1-\alpha)^2 \E\|(\Sigma_{H_t})_{12}\|_2^2 + \alpha^2(1-\alpha)^2 \E\left[ (\bar{\xi}_{H_t, 1} - \beta \sigma_{\text{DP}})^2 \|\bar{\xi}_{H_t, \perp}\|_2^2 \right].
        \end{equation*}
        Furthermore, $\bar{\xi}_{H_t,1}$ is independent of $\bar{\xi}_{H_t, \perp}$, we have 
        \begin{equation*}
            \E\left[ (\bar{\xi}_{H_t, 1} - \beta \sigma_{\text{DP}})^2 \|\bar{\xi}_{H_t, \perp}\|_2^2 \right] 
            = \left( \frac{\sigma_{\text{DP}}^2}{n-2f} + \beta^2 \sigma_{\text{DP}}^2\right) U_V^2.
        \end{equation*} 
        Additionally\footnote{
            We write $(\Sigma_{H_t})_{12} = \frac{\sigma_{\text{DP}}^2}{n-2f}X_\perp P_{n-2f} X_1$, with the projection matrix $P_{n-2f} \vcentcolon= I_{n-2f} - \frac{1}{n-2f} 1_{n-2f}1_{n-2f}^\intercal$, and $X_\perp \in \mathbb{R}^{(d-1) \times (n-2f)}$, $X_1 \in \mathbb{R}^{n-2f}$ denote an independent standard normal matrix and vector. 
            Evaluating $\E\|X_{\perp} P_{n-2f} X_1 \|_2^2 = \E[\text{Tr}(X P_{n-2f} \E[ X_1 X_1^\intercal \mid X_\perp ] P_{n-2f} X^\intercal) ] = \E[\text{Tr}(X_\perp P_{n-2f}^2 X_\perp^\intercal)] = \text{Tr}(P_{n-2f} \E[X_\perp^\intercal X_\perp]) = \text{Tr}(P_{n-2f} (d-1) I_{n-f}) = (n-2f-1)(d-1)$ yield the result.
        },
        \begin{equation*}
            \E\|(\Sigma_{H_t})_{12}\|_2^2 
            = \frac{n-2f-1}{(n-2f)^2} (d-1) \sigma_{\text{DP}}^4 
            = \frac{n-2f-1}{n-2f} \sigma_{\text{DP}}^2 U_V^2
            \text{ where } U_V^2 \vcentcolon= \frac{d-1}{n-2f} \sigma_{\text{DP}}^2.
        \end{equation*}
        Thus, 
        \begin{align}\label{eq:UB-12}
            \E\|\Sigma_{12}\|_2^2 
            &= \frac{(n-2f)(n-2f-1)}{(n-f)^2} \sigma_{\text{DP}}^2 U_V^2 + \frac{\alpha^2(1-\alpha)^2}{n-2f}\sigma_{\text{DP}}^2 U_V^2 + \alpha (1-\alpha) U_V^2 c^2 \Lambda \nonumber \\
            &= \frac{1-\alpha}{n-f} \left( n - 2f - 1 + \alpha^2 \right) \sigma_{\text{DP}}^2 U_V^2 + \alpha(1-\alpha) c^2 U_V^2 \Lambda \nonumber \\
            &= (1-\alpha) U_V^2 \left[ \frac{n-2f-1+\alpha^2}{n-f} \sigma_{\text{DP}}^2 + \alpha c^2 \Lambda \right] \nonumber \\
            &= (1-\alpha) U_V^2 \left[ (1-\alpha)\frac{n-f-1-\alpha}{n-f}\sigma_{\text{DP}}^2 + \alpha c^2 \Lambda \right] \nonumber \\
            &\leq (1-\alpha) U_V^2 \left[ (1-\alpha) \frac{n-f-1}{n-f}\sigma_{\text{DP}}^2 + \alpha c^2 \Lambda \right] \vcentcolon= U_{12}^2
        \end{align} 

    \item[\textit{(iv)}] \textbf{The margin via Sylvester’s criterion (fixing $c_{\text{opt}}$ and $\delta$).}
        Let define the following matrices 
        \begin{equation*}
            M \vcentcolon= \begin{pmatrix} \Sigma_{11} & \|\Sigma_{12}\|_2 \\ \|\Sigma_{12}\|_2 & \lambda_{\max}(\Sigma_{22}) \end{pmatrix}
        \end{equation*}
        and, where we used~\eqref{eq:UB-11}~\eqref{eq:UB-22}~\eqref{eq:UB-12},
        \begin{equation*}
            M_{\text{exp}} \vcentcolon= \begin{pmatrix} U_{11} & U_{12} \\ U_{12} & U_{22} \end{pmatrix} 
            = \begin{pmatrix} \sigma_0^2 + c^2 \Lambda & U_{12} \\ U_{12} & r_\Lambda \Lambda \end{pmatrix}.
        \end{equation*}
        We have 
        \begin{equation*}
            \lambda_{\max}(\Sigma_{S_t}) \leq \lambda_{\max}(M) = \lambda_{\max}(M_{\text{exp}} + (M-M_{\text{exp}})) \leq \lambda_{\max}(M_{\text{exp}}) + \lambda_{\max}(\Delta M),
        \end{equation*} 
        where we define $\Delta M = M-M_{\text{exp}}$.
        We require $\lambda_{\max}(M_{\text{exp}}) \leq \Lambda - 2\delta$ for a chosen margin $\delta > 0$. 
        This is equivalent to requiring the shifted matrix $(\Lambda - 2\delta)I - M_{\text{exp}}$ to be positive semi-definite. 
        By Sylvester's Criterion, a $2 \times 2$ symmetric matrix is positive semi-definite if and only if its diagonal terms and its determinant are non-negative
        \begin{align}
            \Lambda - U_{11} - 2\delta 
            &\geq 0, \label{eq:sylvester_1} \\
            (\Lambda - U_{11} - 2\delta) (\Lambda - U_{22} - 2\delta) 
            &\geq U_{12}^2. \label{eq:sylvester_2}
        \end{align}
        
        \textit{Satisfying condition~\eqref{eq:sylvester_1}.} 
        Recall that $\Lambda = \frac{\sigma_{\text{DP}}^2}{n-f} \E[s_{\max}(X)^2]$ for a $d \times (n-f-1)$ standard Gaussian matrix $X$, and $\E[s_{\max}(X)^2] \geq \frac{\E\|X\|_F^2}{\min(d, n-f-1)} = \max(d, n-f-1)$.
        Hence, 
        \begin{equation*}
            (1-\alpha) \Lambda 
            \geq \frac{(1-\alpha)\sigma_{\text{DP}}^2}{n-f} \max(d, n-f-1) 
            = \sigma_0^2 \max(\frac{d}{n-f-1}, 1).
        \end{equation*}
        Consequently,~\eqref{eq:sylvester_1} is possible with the following mild assumptions.
        \begin{itemize}
            \item[(A)] \textit{The Byzantine participant number scale with the network scale}. 
            There exists $\iota$ such that $f \geq \iota n$, hence $\alpha \geq \frac{\iota}{1-\iota}$. 
            Consequently, $\frac{\sigma_0^2}{\Lambda} \leq (1-\frac{\iota}{1-\iota}) \vcentcolon= \rho_{\max} < 1$.
            \item[(B)] \textit{Overparametrization regime}. 
            There exists $\eta > 0$ such that $d \geq (1+\eta)(n-f-1)$. 
            Hence $\frac{\sigma_0^2}{\Lambda} \leq \frac{(1-\alpha)(n-f-1)}{d} \leq \frac{1}{1+\eta} \vcentcolon= \rho_{\max} < 1$.
            \item[(C)] 
            We can use a more refined bound on $\frac{\sigma_0^2}{\Lambda} = (1-\alpha) \frac{n-f-1}{\E[s_{\max}(X)^2]}$.
            Assume there exists $\nu > 0$ such that $d \geq \nu (n-f-1)$.
            We prove that, by~\citep[Chapter 7]{vershynin_high-dimensional_2018}, there exists a constant $C>0$ such that $\E[s_{\max}(X)^2] \geq \left(\sqrt{d}+\sqrt{n-f-1} - C\right)^2$. 
            Because of~\Cref{assump:lowerbound-d-high}, $\alpha \geq \alpha_{\min}$ and $\frac{C}{\sqrt{n-f-1}} \leq \bar\epsilon$. 
            Hence,
            \begin{equation*}
                \frac{\sigma_0^2}{\Lambda}
                \leq \frac{1-\alpha}{\left(\sqrt{\frac{d}{n-f-1}} + 1 - \frac{C}{\sqrt{n-f-1}} \right)^2}
                \leq \frac{1-\alpha_{\min}}{\left(\sqrt{\nu} + 1 - \bar\epsilon\right)^2}
                \vcentcolon= \rho_{\max} < 1.
            \end{equation*}
        \end{itemize}
        Assuming $\rho_{\max} < 1$ exists, and define $c_{\max}^2 \vcentcolon= 1 - \rho_{\max} > 0$. 
        Because $c \in (0, c_{\max})$, and to satisfy the condition~\eqref{eq:sylvester_1}, we fix the margin $\delta$ as follows.
        \begin{equation}\label{eq:delta-def}
            \delta \vcentcolon= c_\delta \Lambda = \frac{\min\{c_{\max}^2 - c^2, 1 - r_\Lambda \}}{4} \Lambda.
        \end{equation}
        Substituting this margin into condition~\eqref{eq:sylvester_1} yields
        \begin{equation*}
            \Lambda - U_{11} - 2\delta 
            = (1-c^2)\Lambda - \sigma_0^2  - 2 \delta 
            \geq (1-\rho_{\max}-c^2)\Lambda - 2 \delta
            = \frac{c_{\max}^2-c^2}{2} \Lambda 
            > 0.
        \end{equation*}        

        \textit{Satisfying condition~\eqref{eq:sylvester_2}.} 
        We require 
        \begin{equation*}
            (\Lambda - U_{11} - 2\delta) (\Lambda - U_{22} - 2\delta) 
            \geq U_{12}^2.
        \end{equation*}
        By factoring out $\Lambda$, and substituting~\eqref{eq:UB-11},~\eqref{eq:UB-22}~\eqref{eq:delta-def}, yields
        \begin{equation}\label{eq;sylverster-inter_gogo}
            (\Lambda - U_{11} - 2\delta) (\Lambda - U_{22} - 2\delta)
            \geq \frac{(c_{\max}^2 - c^2)(1 - r_\Lambda)}{4} \Lambda^2.
        \end{equation}
        Conversely, we upper bound $U_{12}^2$ from~\eqref{eq:UB-12}. 
        Because $\Lambda \geq \frac{d}{n-f} \sigma_{\text{DP}}^2$, hence $\sigma_{\text{DP}}^2 \leq \frac{n-f}{d}\Lambda$. 
        Substituting this alongside $U_V^2 \leq \frac{1}{1-\alpha}\Lambda$ yields
        \begin{equation}\label{eq:sylvester_rhs_final}
            U_{12}^2 
            = (1-\alpha) U_V^2 \left[ (1-\alpha) \frac{n-f-1}{n-f}\sigma_{\text{DP}}^2 + \alpha c^2 \Lambda \right]
            \leq \Lambda^2 \left[ \alpha c^2 + (1-\alpha)\frac{n-f}{d} \right].
        \end{equation}
        Combining the bounds from~\eqref{eq;sylverster-inter_gogo} and~\eqref{eq:sylvester_rhs_final} gives the condition
        \begin{equation*}
            \frac{(c_{\max}^2 - c^2)(1 - r_\Lambda)}{4} \geq \alpha c^2 + (1-\alpha)\frac{n-f}{d}.
        \end{equation*}
        Rearranging,
        \begin{equation}\label{eq:c_bound}
            c^2 \left( \alpha + \frac{1-r_\Lambda}{4} \right) \leq \frac{c_{\max}^2 (1-r_\Lambda)}{4} - (1-\alpha)\frac{n-f}{d}.
        \end{equation}
        Consequently, by~\citep[Chapter 7]{vershynin_high-dimensional_2018}, combined with standard Gaussian concentration, there exist absolute constants $C_1, C_2 > 0$ such that 
        \begin{align*}
            r_\Lambda 
            = \frac{\E[s_{\max}(X_\perp)^2]}{\E[s_{\max}(X)^2]}
            &\leq \left( \frac{\sqrt{d-1} + \sqrt{n-2f} + C_1}{\sqrt{d} + \sqrt{n-f-1} - C_2} \right)^2  \\
            &= \left( \frac{\sqrt{\frac{d-1}{n-f-1}} + \sqrt{\frac{n-2f}{n-f-1}} + \frac{C_1}{\sqrt{n-f-1}}}{\sqrt{\frac{d}{n-f-1}} + 1 - \frac{C_2}{\sqrt{n-f-1}}} \right)^2.
        \end{align*}
        By~\Cref{assump:lowerbound-d-high}, $\sqrt{\frac{d}{n-f-1}} = \sqrt{\frac{d}{n-f}} \sqrt{1+ \frac{1}{n-f-1}} \leq \sqrt{\nu} (1+ \sqrt{\frac{1}{n-f-1}})$,
        and $\sqrt{\frac{n-2f}{n-f-1}} = \sqrt{\frac{n-2f}{n-f}} \sqrt{1+ \frac{1}{n-f-1}} \leq \sqrt{1-\alpha} (1+ \sqrt{\frac{1}{n-f-1}})$.
        We aggregate these terms and define the residual $\epsilon_n \vcentcolon= \frac{\sqrt{\nu} + \sqrt{1-\alpha} + \max(C_1, C_2)}{\sqrt{n-f-1}}$.
        Again, by~\Cref{assump:lowerbound-d-high}, because we assume, $n-f-1 \geq \left( \frac{3(\sqrt{\nu} + \sqrt{1-\alpha_{\min}} + \max\{C_1,C_2\})}{1-\sqrt{1-\alpha_{\min}}} \right)^2$ 
        we have $\epsilon_n \leq \bar{\epsilon} \vcentcolon= \frac{1 - \sqrt{1-\alpha_{\min}}}{3}$. 
        Consequently, because $r_\Lambda$ decreases with respect to $\alpha$ and increases with respect to the residual $\epsilon_n$, substituting their bound yield
        \begin{multline*}
            r_\Lambda 
            = \frac{\E[s_{\max}(X_\perp)^2]}{\E[s_{\max}(X)^2]}
            < \left( \frac{\sqrt{\nu} + \sqrt{1-\alpha} + \epsilon_n}{\sqrt{\nu} + 1 - \epsilon_n} \right)^2 \\
            \leq \left( \frac{\sqrt{\nu} + \frac{1}{3} + \frac{2}{3}\sqrt{1-\alpha_{\min}}}{\sqrt{\nu} + \frac{2}{3} + \frac{1}{3}\sqrt{1-\alpha_{\min}}} \right)^2 
            \vcentcolon= r_{\max} \in (0,1).
        \end{multline*}
        Concurrently, satisfying Sylvester's criterion requires
        \begin{equation*}
            c^2 \left( \alpha + \frac{1-r_\Lambda}{4} \right) 
            \leq \frac{c_{\max}^2 (1-r_\Lambda)}{4} - (1-\alpha)\frac{n-f}{d}.
        \end{equation*}
        This condition is implied when, defining $c_{\text{opt}}^2 \vcentcolon= \frac{c_{\max}^2 (1-r_{\max}) - 4\mu^{-1}}{2 + (1-r_{\max})}$,
        \begin{equation*}
            c^2 
            \leq c_{\text{opt}}^2
            \leq \frac{c_{\max}^2 (1-r_\Lambda) - 4(1-\alpha)\frac{n-f}{d}}{4\alpha + 1-r_\Lambda}.
        \end{equation*}
        Note that, by~\Cref{assump:lowerbound-d-high}, $\mu > \frac{4}{c_{\max}^2 (1-r_{\max})}$ and hence $c_{\text{opt}}^2 \in (0, 1)$.

    \item[\textit{(v)}] \textbf{Probabilistic consequence.}
        Recall that
        \begin{equation*}
            \lambda_{\max}(\Sigma_{S_t}) \leq \lambda_{\max}(M_{\text{exp}}) + \lambda_{\max}(\Delta M),
            \text{ with } \Delta M = M - M_{\text{exp}} \vcentcolon= \begin{pmatrix} \Delta_{11} & \Delta_{12} \\ \Delta_{12} & \Delta_{22} \end{pmatrix}.
        \end{equation*} 
        We prove $\lambda_{\max}(M_{\text{exp}}) \leq \Lambda - 2\delta$ for $\delta = \frac{\min\{c_{\max}^2 - c^2, 1 - r_\Lambda \}}{4} \Lambda$.
        We now control the following tail failure events for $t \in \{n_{\text{test}}+1, \ldots, T\}$ with high probability
        \begin{align*}
            E^{\mathsf{c}}_{2, t} &\vcentcolon= \left\{ \lambda_{\max}(\Sigma_{\mathcal{H}}) \leq \Lambda - \delta \right\}, \\
            E^{\mathsf{c}}_{3, t} &\vcentcolon= \left\{ \lambda_{\max}(\Delta M) \geq \delta \right\}
            \subset \left\{ |\Delta_{11}| \geq \frac{\delta}{4} \right\} \cup \left\{ |\Delta_{12}| \geq \frac{\delta}{4} \right\} \cup \left\{ |\Delta_{22}| \geq \frac{\delta}{4} \right\},
        \end{align*}
        where we used $\lambda_{\max}(\Delta M) \leq \|\Delta M\|_{\operatorname{F}} \leq \sqrt{\Delta_{11}^2 + 2 \Delta_{12}^2 + \Delta_{22}^2} \leq |\Delta_{11}| + 2 |\Delta_{12}| + |\Delta_{22}|$.

        \

        \textit{High probability bound for $E^{\mathsf{c}}_{2, t}$.} 
        Recall that $\Sigma_{\mathcal{H}}$ is identically distributed to $\frac{\sigma_{\text{DP}}^2}{n-f} X X^\intercal$, where $X \in \mathbb{R}^{d \times (n-f-1)}$ is a random matrix with i.i.d.\ standard normal entries $\mathcal{N}(0,1)$.
        Hence, $\lambda_{\max}(\Sigma_{\mathcal{H}})$ behaves as the squared spectral norm of a Gaussian matrix, scaled by $\frac{\sigma_{\text{DP}}^2}{n-f}$. 
        Because the spectral norm map $X \mapsto s_{\max}(X)$ is $1$-Lipschitz continuous, Gaussian Lipschitz concentration~\citep[Theorem 5.6]{boucheron2013concentration} guarantees sub-Gaussian tails for $s_{\max}(X)$. 
        By definition, $\Lambda = \frac{\sigma_{\text{DP}}^2}{n-f} \E[s_{\max}(X)^2]$, hence $E^{\mathsf{c}}_{2, t}$ is equivalent to $s_{\max}(X)^2 \leq \E[s_{\max}(X)^2] - \delta \frac{n-f}{\sigma_{\text{DP}}^2}$. 
        Moreover, by the Gaussian Poincaré inequality~\cite[Section 3.7]{boucheron2013concentration}, the variance of the 1-Lipschitz function is bounded, meaning $\E[s_{\max}(X)^2] \leq (\E[s_{\max}(X)])^2 + 1$. 
        Substituting this yields the implication
        \begin{equation*}
            E^{\mathsf{c}}_{2, t}
            \implies
            \left\{ 
            s_{\max}(X) 
            \leq \E[s_{\max}(X)] - \frac{\delta \frac{n-f}{\sigma_{\text{DP}}^2} - 1}{2\E[s_{\max}(X)]}
            \right\},
        \end{equation*}
        where we used the inequality $\sqrt{x - y} \leq \sqrt{x} - \frac{y}{2\sqrt{x}}$ for $0 < y < x$.
        Because the margin $\delta$ scales proportionally with $\Lambda$, 
        there exists a constant $\tilde c_2 \in (0, 1)$ such that $\delta \frac{n-f}{\sigma_{\text{DP}}^2} - 1 \geq \tilde c_2 \delta \frac{n-f}{\sigma_{\text{DP}}^2}$.
        Furthermore, applying Jensen's inequality, we have $\E[s_{\max}(X)] \leq \sqrt{\E[s_{\max}(X)^2]} = \sqrt{\frac{n-f}{\sigma_{\text{DP}}^2} \Lambda}$. 
        Consequently, the required deviation $s$ from the mean is bounded by
        \begin{equation*}
            s \geq \frac{\tilde c_2 \delta \frac{n-f}{\sigma_{\text{DP}}^2}}{2 \sqrt{\frac{n-f}{\sigma_{\text{DP}}^2} \Lambda}} 
            = \frac{\tilde c_2}{2} \delta \sqrt{\frac{n-f}{\sigma_{\text{DP}}^2 \Lambda}}.
        \end{equation*}
        By Gaussian Lipschitz concentration~\cite[Theorem 5.6]{boucheron2013concentration}, for any $s>0$ we have $\mathbb{P}(s_{\max}(X) \leq \E[s_{\max}(X)] - s) \leq e^{-s^2/2}$. 
        That is, there exists a constant $c_2 > 0$ such that
        \begin{equation*}
            \mathbb{P}\left(E^{\mathsf{c}}_{2, t}\right) 
            \leq \mathbb{P}(s_{\max}(X) \leq \E[s_{\max}(X)] - s)
            \leq \exp\left( - c_2 \frac{\delta^2 (n-f)}{\sigma_{\text{DP}}^2 \Lambda} \right).
        \end{equation*}

        \

        \textit{High probability bound for $E^{\mathsf{c}}_{3, t}$.}
        Concurrently, we bound the entries of $\Delta M$ using concentration argument. 

        \textbf{Bounding $\Prob(|\Delta_{11}| \geq \frac{\delta}{4})$.} Recall that $\Delta_{11} = (V_1 - \E[V_1]) - V_2$. 
        By the Hanson-Wright inequality~\citep[Theorem 1.1]{rudelson2013hanson} applied to $V_1$, and standard Gaussian tail bounds for the term $V_2$, we have for universal constants $\tilde c_1, \tilde c_3 > 0$
        \begin{align*}
            \mathbb{P}\left( |V_1 - \E[V_1]| \geq \frac{\delta}{8} \right) 
            &\leq 2\exp\left( - \tilde c_1 \min \left\{ \frac{\delta^2 (n-f)^2}{64 \sigma_{\text{DP}}^4 (n-2f)}, \frac{\delta (n-f)}{8 \sigma_{\text{DP}}^2} \right\} \right), \\
            \mathbb{P}\left( |V_2| \geq \frac{\delta}{8} \right) 
            &\leq 2\exp\left( - \tilde c_3 \frac{\delta^2 (n-f)^3}{128 f (n-f-1) \beta^2 \sigma_{\text{DP}}^4} \right).
        \end{align*}
        Hence, by the union bound there exisits $\tilde c_{4}>0$ such that
        \begin{align*}
            \Prob\left( |\Delta_{11}| \geq \frac{\delta}{4} \right) 
            &\leq \mathbb{P}\left( |V_1 - \E[V_1]| \geq \frac{\delta}{8} \right) + \mathbb{P}\left( |V_2| \geq \frac{\delta}{8} \right)  \\
            &\leq 4e^{ - \tilde c_{4} \min \left\{ 
                    \frac{\delta^2 (n-f)^2}{64 \sigma_{\text{DP}}^4 (n-2f)}, \frac{\delta (n-f)}{8 \sigma_{\text{DP}}^2},  
                \frac{\delta^2 (n-f)^3}{128 f (n-f-1) \beta^2 \sigma_{\text{DP}}^4}
            \right\} }.
        \end{align*}

        \textbf{Bounding $\Prob(|\Delta_{22}| \geq \frac{\delta}{4})$.} 
        We bound the absolute deviation of $\Delta_{22} \vcentcolon= \lambda_{\max}(\Sigma_{22}) - U_{22}$. 
        Recall that $\Sigma_{22} = M_{22} - \mu\mu^\intercal$, where $\mu \vcentcolon= \frac{1}{n-f} \sum_{i \in H_t} \xi^{(i)}_{t,\perp} = \frac{\sigma_{\text{DP}}}{n-f} X_\perp 1_{n-2f}$, $M_{22} = \frac{\sigma_{\text{DP}}^2}{n-f} X_\perp X_\perp^\intercal$, and by definition, $U_{22} = \E[\lambda_{\max}(M_{22})]$.
        We recall that $X_\perp \in \mathbb{R}^{(d-1) \times (n-2f)}$ is the matrix with standard normal entries and that $1_{n-2f} \in \mathbb{R}^{n-2f}$ is the all-ones vector. 

        For the upper tail, because $\mu\mu^\intercal \succeq 0$, we have $\Sigma_{22} \preceq M_{22}$, which implies $\lambda_{\max}(\Sigma_{22}) \leq \lambda_{\max}(M_{22})$. 
        Consequently, $\Delta_{22} \leq \lambda_{\max}(M_{22}) - U_{22}$.
        By a analogous derivation done for bounding $E^{\mathsf{c}}_{3, t}$ in high probability, there exists an absolute constant $\tilde c_{5} > 0$ such that
        \begin{equation*}
            \mathbb{P}\left(\Delta_{22} \geq \frac{\delta}{4}\right) 
            \leq \mathbb{P}\left( \lambda_{\max}(M_{22}) - U_{22} \geq \frac{\delta}{4} \right) 
            \leq \exp\left( - \tilde c_{5} \frac{\delta^2 (n-f)}{\sigma_{\text{DP}}^2 \Lambda} \right).
        \end{equation*}

        For the lower tail, because $n-f > n-2f$, we have $\frac{1}{(n-f)^2} < \frac{1}{(n-f)(n-2f)}$ and we define a surrogate matrix $\Sigma_0$ with the larger penalty to obtain the positive-semidefinite ordering
        \begin{align*}
            \Sigma_{22} 
            &\succeq M_{22} - \frac{\sigma_{\text{DP}}^2}{(n-f)(n-2f)} X_\perp 1_{n-2f} 1_{n-2f}^\intercal X_\perp^\intercal \\
            &= \frac{\sigma_{\text{DP}}^2}{n-f} X_\perp \left( I_{n-2f} - \frac{1}{n-2f} 1_{n-2f} 1_{n-2f}^\intercal \right) X_\perp^\intercal 
            \vcentcolon= \Sigma_0.
        \end{align*}
        The matrix $I_{n-2f} - \frac{1}{n-2f} 1_{n-2f} 1_{n-2f}^\intercal$ is a projection matrix of rank $n-2f-1$. 
        Therefore, by the rotational invariance of the Gaussian matrix $X_\perp$, the surrogate matrix $\Sigma_0$ is identically distributed to $\frac{\sigma_{\text{DP}}^2}{n-f} \tilde{X}_\perp \tilde{X}_\perp^\intercal$, where $\tilde{X}_\perp \in \mathbb{R}^{(d-1) \times (n-2f-1)}$ consists of i.i.d.\ standard normal entries.
        Because $\Sigma_{22} \succeq \Sigma_0$, we have $\lambda_{\max}(\Sigma_{22}) \geq \lambda_{\max}(\Sigma_0)$. 
        Thus, the lower deviation of $\Sigma_{22}$ is contained within the lower tail of $\Sigma_0$. 
        Again, we apply the Gaussian Lipschitz concentration~\citep[Theorem 5.6]{boucheron2013concentration} and analogous derivation as done for $E^{\mathsf{c}}_{2, t}$.

        Consequently, combining the upper tail of $M_{22}$ and the lower tail of $\Sigma_0$, there exists a constant $\tilde c_6 > 0$ such that
        \begin{equation*}
            \mathbb{P}\left( |\Delta_{22}| \geq \frac{\delta}{4} \right) 
            \leq 2 \exp\left( - \tilde c_6 \frac{\delta^2 (n-f)}{\sigma_{\text{DP}}^2 \Lambda} \right).
        \end{equation*}

        \textbf{Bounding $\mathbb{P}(|\Delta_{12}| \geq \frac{\delta}{4})$.} 
        We bound the absolute deviation $|\Delta_{12}| = |\|\Sigma_{12}\|_2 - U_{12}|$. 
        To map this to concentration inequalities, we transition to the squared norm and prove that it is sufficient to control the deviation $|\|\Sigma_{12}\|_2^2 - \E\|\Sigma_{12}\|_2^2|$. 
        From the expansion in~\eqref{eq:UB-12}, we have $U_{12}^2 = \E\|\Sigma_{12}\|_2^2 + \frac{\alpha(1-\alpha)^2}{n-f}\sigma_{\text{DP}}^2 U_V^2$.
        For readibility, define $G \vcentcolon= \frac{\alpha(1-\alpha)^2}{n-f}\sigma_{\text{DP}}^2 U_V^2 = \frac{\alpha(1-\alpha)^2}{n-f} \frac{d-1}{n-2f} \sigma_{\text{DP}}^4$. 
        Squaring the bounds of the absolute deviation event $\{|\|\Sigma_{12}\|_2 - U_{12}| \geq \delta/4\}$, we have the following.
        \begin{itemize}
            \item \textit{The upper tail $\|\Sigma_{12}\|_2 \geq U_{12} + \frac{\delta}{4}$.} 
            The squared upper tail, requires $\|\Sigma_{12}\|_2^2 \geq U_{12}^2 + \frac{\delta}{2}U_{12} + \frac{\delta^2}{16}$. 
            Define $\tau \vcentcolon= \frac{\delta U_{12}}{8}$, we have
            \begin{align*}
                \Prob \left( \|\Sigma_{12}\|_2 \geq U_{12} + \frac{\delta}{4} \right) 
                &= \Prob \left( \|\Sigma_{12}\|_2^2 \geq U_{12}^2 + \frac{\delta}{2}U_{12} + \frac{\delta^2}{16} \right) \\
                &= \Prob \left( \|\Sigma_{12}\|_2^2 \geq \E\|\Sigma_{12}\|_2^2 + \tau + \left( G + \frac{3 \delta U_{12}}{8} + \frac{\delta^2}{16}\right) \right) \\
                &\geq \Prob \left( \|\Sigma_{12}\|_2^2 \geq \E\|\Sigma_{12}\|_2^2 + \tau \right).
            \end{align*}
            \item \textit{The lower tail $\|\Sigma_{12}\|_2 \leq U_{12} - \frac{\delta}{4}$.} 
            First, note that if $U_{12} < \frac{\delta}{4}$ the probability of the event is 0. 
            We only need to control the regime where $U_{12} > \frac{\delta}{4}$, which is equivalent to $2\tau = \frac{\delta U_{12}}{4} \geq \frac{\delta^2}{16}$.
            Under this condition, the squared lower tail event write 
            \begin{equation*}
                \left\{
                \|\Sigma_{12}\|_2^2 
                \leq U_{12}^2 - \frac{\delta}{2}U_{12} + \frac{\delta^2}{16} 
                = \E\|\Sigma_{12}\|_2^2 - \tau 
                - (2 \tau - \frac{\delta^2}{16})
                - (\tau - G)
                \right\}
            \end{equation*} 
            Hence, if $G \leq \tau$, which we prove below, we have
            \begin{align*}
                \Prob \left( \|\Sigma_{12}\|_2 \leq U_{12} - \frac{\delta}{4} \right) 
                &= \Prob \left( \|\Sigma_{12}\|_2^2 \leq U_{12}^2 - \frac{\delta}{2}U_{12} + \frac{\delta^2}{16} \right) \\
                &= \Prob \left( \|\Sigma_{12}\|_2^2 \leq \E\|\Sigma_{12}\|_2^2 - \tau - (2 \tau - \frac{\delta^2}{16}) - (\tau - G) \right) \\
                &\geq \Prob \left( \|\Sigma_{12}\|_2^2 \leq \E\|\Sigma_{12}\|_2^2 - \tau \right).
            \end{align*}

            \item \textit{Ensuring $G \leq \tau$.}
                Recall that $G^2 = \frac{\alpha^2(1-\alpha)^4}{(n-f)^2}\sigma_{\text{DP}}^4 U_V^4$, where $U_V^2 = \frac{d-1}{n-2f} \sigma_{\text{DP}}^2$, and
                \begin{equation*}
                    U_{12}^2 = (1-\alpha) U_V^2 \left[ (1-\alpha) \frac{n-f-1}{n-f}\sigma_{\text{DP}}^2 + \alpha c^2 \Lambda \right].
                \end{equation*}
                Hence, the requirement $G \leq \tau = \frac{c_{\delta} \Lambda U_{12}}{8}$ is equivalent to
                \begin{align*}
                    \frac{\alpha^2(1-\alpha)^3}{(n-f)^2}\sigma_{\text{DP}}^4 U_V^2 
                    \leq \left(\frac{c_{\delta}}{8}\right)^2 \Lambda^2 \left[ (1-\alpha) \frac{n-f-1}{n-f}\sigma_{\text{DP}}^2 + \alpha c^2 \Lambda \right].
                \end{align*}
                Applying~\eqref{eq:true-lambda-ratio} and Assumption~\ref{assump:lowerbound-d-high}, we have
                \begin{equation*}
                    \Lambda \geq c_\Lambda^2 \frac{d}{n-f} \sigma_{\text{DP}}^2 \left( 1 + \sqrt{\frac{n-f-1}{d}} \right)^2 
                    \geq \frac{c_\Lambda^2 \mu}{\min(1, \nu)} \sigma_{\text{DP}}^2.
                \end{equation*}
                Substituting this lower bound and $U_V^2$, the condition $G \leq \tau$ is implied by
                \begin{align*}
                    \frac{\alpha^2(1-\alpha)^2}{n-f} 
                    \leq d \left( \frac{c_{\delta} c_\Lambda^2}{8 \min(1, \nu)} \right)^2 \left[ (1-\alpha) \frac{n-f-1}{n-f} + \alpha \frac{\mu c^2 c_\Lambda^2}{\min(1, \nu)} \right].
                \end{align*}
                The following is stronger inequality, as $d \geq \mu (n-f)$,
                \begin{align*}
                    \mu \left[ (1-\alpha) (n-f-1)(n-f) + \alpha \frac{\mu c^2 c_\Lambda^2}{\min(1, \nu)} (n-f)^2 \right]
                    \geq
                    \left( \frac{8 \min(1, \nu)}{c_{\delta} c_\Lambda^2} \right)^2 \alpha^2(1-\alpha)^2. 
                \end{align*}
                Because $(n-f)^2 \geq (n-f)(n-f-1) \geq (n-f-1)^2$, we also have the condition
                \begin{align*}
                    \mu (n-f-1)^2 \left[ 1-\alpha + \alpha \frac{\mu c^2 c_\Lambda^2}{\min(1, \nu)} \right]
                    \geq
                    \left( \frac{8 \min(1, \nu)}{c_{\delta} c_\Lambda^2} \right)^2 \alpha^2(1-\alpha)^2. 
                \end{align*}
                The following is stronger inequality
                \begin{align*}
                    (n-f-1)^2 \left[ \frac{1}{2} + \frac{ \alpha_{\min} \mu c^2 c_\Lambda^2}{\min(1, \nu)} \right]
                    \geq \left( \frac{2 \min(1, \nu) }{ \sqrt{\mu} c_{\delta} c_\Lambda^2} \right)^{2}.
                \end{align*}
                Hence
                \begin{align*}
                    n-f-1
                    \geq \frac{2 \min(1, \nu) }{ c_{\delta} c_\Lambda^2 \sqrt{\mu} \sqrt{\frac{1}{2} + \frac{ \alpha_{\min} \mu c^2 c_\Lambda^2}{\min(1, \nu)}}}.
                \end{align*}
                Using the fact that $\frac{1}{2} \leq c_\Lambda \leq \sqrt{\frac{5}{4}}$, we have 
                \begin{align*}
                    n-f-1
                    \geq \frac{16 \min(1, \nu) }{ c_{\delta} \sqrt{\mu} \sqrt{ 2 + \frac{ \alpha_{\min} \mu c^2}{\min(1, \nu)}}},
                \end{align*}
                which is ensured by~\Cref{assump:lowerbound-d-high}.
        \end{itemize}
        Consequently, we prove $\Prob(|\Delta_{12}| \geq \frac{\delta}{4}) \geq \Prob(|\|\Sigma_{12}\|_2^2 - \E\|\Sigma_{12}\|_2^2| \geq \tau)$. We next control the right hand side.

        \textit{Bounding $\mathbb{P}(|\|\Sigma_{12}\|_2^2 - \E\|\Sigma_{12}\|_2^2| \geq \tau)$.} 
        Recall the cross-covariance decomposition
        \begin{equation*}
            \Sigma_{12} = (1-\alpha) (\Sigma_{H_t})_{12} + \alpha(1-\alpha) (\bar{\xi}_{H_t, 1} - \beta \sigma_{\text{DP}}) \bar{\xi}_{H_t, \perp}.
        \end{equation*}
        Both terms are sums of scaled products of independent standard Gaussian vectors and matrices. 
        Indeed, for the first term, we use the centering projection $P_{n-2f} \vcentcolon= I_{n-2f} - \frac{1}{n-2f} 1_{n-2f}1_{n-2f}^\intercal$. 
        Let $X_\perp \in \mathbb{R}^{(d-1) \times (n-2f)}$ and $X_1 \in \mathbb{R}^{n-2f}$ denote an independent standard normal matrix and vector. 
        We can write
        \begin{equation*}
            (\Sigma_{H_t})_{12} 
            = \frac{\sigma_{\text{DP}}^2}{n-2f} X_\perp \left(I_{n-2f} - \frac{1}{n-2f} 1_{n-2f} 1_{n-2f}^\intercal\right) X_1
            = \frac{\sigma_{\text{DP}}^2}{n-2f} X_\perp P_{n-2f} X_1.
        \end{equation*}
        For the second term, we have $\bar{\xi}_{H_t, 1} \sim \mathcal{N}(0, \frac{\sigma_{\text{DP}}^2}{n-2f})$ and $\bar{\xi}_{H_t, \perp} \sim \mathcal{N}(0, \frac{\sigma_{\text{DP}}^2}{n-2f} I_{d-1})$. 
        Reparameterizing with independent standard normal variable $z_1 \sim \mathcal{N}(0, 1)$ and $\bar{\xi}_{H_t, \perp} = \frac{\sigma_{\text{DP}}}{n-2f} X_\perp 1_{n-2f}$, where $\frac{X_\perp 1_{n-2f}}{\sqrt{n-2f}} \sim \mathcal{N}(0, I_{d-1})$, we have
        \begin{equation*}
            ( \bar{\xi}_{H_t, 1} - \beta \sigma_{\text{DP}} ) 
            = \frac{\sigma_{\text{DP}}^2}{(n-2f)^{\frac{3}{2}}} (z_1 - \beta \sqrt{n-2f})  X_\perp 1_{n-2f}.
        \end{equation*}
        Consequently, we have
        \begin{align*}
            \Sigma_{12} 
            &= \frac{(1-\alpha)\sigma_{\text{DP}}^2}{n-2f} X_\perp P_{n-2f} X_1 
            + \frac{\alpha(1-\alpha)\sigma_{\text{DP}}^2}{(n-2f)^{\frac{3}{2}}} \left( z_1 + \beta \sqrt{n-2f} \right) X_\perp 1_{n-2f}, \\
            &= X_\perp \underbrace{\left[ \frac{(1-\alpha)\sigma_{\text{DP}}^2}{n-2f} P_{n-2f} X_1 + \frac{\alpha(1-\alpha)\sigma_{\text{DP}}^2}{(n-2f)^{\frac{3}{2}}} (z_1 - \beta \sqrt{n-2f}) 1_{n-2f} \right]}_{\vcentcolon= v_{\text{cond}}}.
        \end{align*}



        We condition on the $\sigma$-algebra $\mathcal{F}_1 \vcentcolon= \sigma(X_1, z_1)$. 
        Given $\mathcal{F}_1$, the vector $v_{\text{cond}} \in \mathbb{R}^{n-2f}$ is deterministic. 
        Because $X_\perp$ consists of independent standard Gaussians, right-multiplying by $v_{\text{cond}}$ guarantees $\Sigma_{12} \mid \mathcal{F}_1 \sim \mathcal{N}(0, \|v_{\text{cond}}\|_2^2 I_{d-1})$. 
        The squared norm of such combination is $\|a (P_{n-2f} X_1) + b (1_{n-2f})\|_2^2 = a^2 \|P_{n-2f} X_1\|_2^2 + b^2 (n-2f) + 2ab X_1^\intercal P_{n-2f}^\intercal 1_{n-2f} = a^2 \|P_{n-2f} X_1\|_2^2 + b^2 (n-2f)$\footnote{
            $P_{n-2f}^\intercal 1_{n-2f} = 1_{n-2f} -  \frac{1}{n-2f} 1_{n-2f} 1_{n-2f}^\intercal 1_{n-2f} = 0$.
        }.
        Consequently, the conditional variance simplifies to
        \begin{equation*}
            \sigma_{\text{cond}}^2 \vcentcolon= \|v_{\text{cond}}\|_2^2 = \left( \frac{(1-\alpha)\sigma_{\text{DP}}^2}{n-2f} \right)^2 \|P_{n-2f} X_1\|_2^2 + \left( \frac{\alpha(1-\alpha)\sigma_{\text{DP}}^2}{n-2f} \right)^2 (z_1 - \beta \sigma_{\text{DP}})^2.
        \end{equation*}
        As a result, we have the following distributional equality $\|\Sigma_{12}\|_2^2 \mid \mathcal{F}_1 \stackrel{d}{=} \sigma_{\text{cond}}^2  W$, where $W \sim \chi^2_{d-1}$. 
        Crucially, the standard $\chi^2_{d-1}$ distribution of $W$ originates from $X_\perp$ and is hence independent of $\mathcal{F}_1$. 
        By the law of total probability, integrating out $\mathcal{F}_1$ proves that the unconditional distribution of $\|\Sigma_{12}\|_2^2$ is equivalent to the product of the two independent random variables, $\|\Sigma_{12}\|_2^2 \overset{d}{=} \sigma_{\text{cond}}^2 W$. 
        Thus, $\E\|\Sigma_{12}\|_2^2 = \E[\sigma_{\text{cond}}^2 W] = \E[\sigma_{\text{cond}}^2 ]\E[W]$.
        We bound the product deviation using the identity
        \begin{align*}
            \|\Sigma_{12}\|_2^2 - \E\|\Sigma_{12}\|_2^2 
            &= \sigma_{\text{cond}}^2 W - \E[\sigma_{\text{cond}}^2 ] \E[W] \\
            &= (\sigma_{\text{cond}}^2-\E[\sigma_{\text{cond}}^2])(W-\E[W]) \\
            &\qquad+ (\sigma_{\text{cond}}^2-\E[\sigma_{\text{cond}}^2])\E[W] + \E[\sigma_{\text{cond}}^2](W-\E[W]).
        \end{align*}
        We define the intersection of independent high-probability events 
        \begin{equation*}
            \mathcal{E} \vcentcolon= \{ |\sigma_{\text{cond}}^2-\E[\sigma_{\text{cond}}^2]| \leq t_A \} \cap \{ |W - \E[W]| \leq t_B \}.
        \end{equation*} 
        Define $t_A = \min\{ \frac{\tau}{3\E[W]}, \sqrt{\frac{\tau}{3}} \}$ and $t_B = \min\{ \frac{\tau}{3\E[\sigma_{\text{cond}}^2]}, \sqrt{\frac{\tau}{3}} \}$.
        By the triangle inequality, conditioned on $\mathcal{E}$, we have 
        \begin{equation*}
            |\|\Sigma_{12}\|_2^2 - \E\|\Sigma_{12}\|_2^2| \leq t_A t_B + t_A \E[W] + t_B \E[\sigma_{\text{cond}}^2] \leq \tau.
        \end{equation*}
        Consequently,
        \begin{equation*}
            \Prob(|\Delta_{12}| \geq \frac{\delta}{4}) 
            \leq \Prob(|\|\Sigma_{12}\|_2^2 - \E\|\Sigma_{12}\|_2^2| \geq \tau)  
            \leq \Prob(\mathcal{E}).
        \end{equation*}

        \textit{Bounding $\Prob(|W - \E[W]| \geq t_B)$.}
        We evaluate the concentration of the standard $\chi^2_{d-1}$ variable $W$. 
        We apply the Hanson-Wright inequality~\cite[Theorem 6.2.2]{vershynin_high-dimensional_2018}\footnote{
            With the theorem notation, we have $A=I_{d-1}$, hence $\|I_{d-1}\|_{\text{F}}^2 = d-1$ and $\|I_{d-1}\|_{\operatorname{sp}} = 1$.
            Moreover the sub-Gaussian norm $K$ is a constant for standard normals, hence we absorb it in the constants.
        }, there exists $C_B >0$ such that
        \begin{equation}\label{eq:HS-W}
            \Prob(|W - \E[W]| \leq t_B) \leq 2 \exp \left( -C_B \min\left\{ \frac{t_B^2}{(d-1)}, t_B \right\} \right).
        \end{equation}
        Substituting $t_B = \min\{ \frac{\tau}{3\E[\sigma_{\text{cond}}^2]}, \sqrt{\frac{\tau}{3}} \}$ yields two possible regimes for the tail. 
        Because $\E[\sigma_{\text{cond}}^2]\E[W] = \E \|\Sigma_{12}\|_2^2 \leq U_{12}^2$ and $\E[W] = d-1$, we have $\frac{\tau}{3\E[\sigma_{\text{cond}}^2]} \geq \frac{\delta U_{12} / 8}{3 U_{12}^2 / (d-1)} = \frac{\delta (d-1)}{24 U_{12}}$. 
        Consequently, the exponent in~\eqref{eq:HS-W} is lower-bounded by
        \begin{equation*}
            E_B \vcentcolon= C_B \min \left\{ \frac{\delta^2 (d-1)}{576 U_{12}^2}, \frac{\delta U_{12}}{24 (d-1)}, \frac{\delta (d-1)}{24 U_{12}}, \sqrt{\frac{\delta U_{12}}{24}} \right\}.
        \end{equation*}

        \ 

        \textit{Bounding $\Prob(|\sigma_{\text{cond}}^2-\E[\sigma_{\text{cond}}^2]| \geq t_A)$.}
        We expand $\sigma_{\text{cond}}^2$ as the sum of a centered quadratic form and a linear Gaussian term. 
        Let $v_1 \vcentcolon= \frac{(1-\alpha)\sigma_{\text{DP}}^2}{n-2f}$ and $v_2 \vcentcolon= \frac{\alpha(1-\alpha)\sigma_{\text{DP}}^2}{n-2f}$. 
        We define $V \vcentcolon= [X_1; z_1] \in \mathbb{R}^{n-2f+1}$ and $M_A \vcentcolon= \operatorname{diag}(v_1^2 P_{n-2f}, v_2^2)$. 
        The non-centrality stems from $(z_1 - \beta \sqrt{n-2f})^2 = z_1^2 - 2\beta \sqrt{n-2f} z_1 + \beta^2(n-2f)$. 
        Thus, 
        \begin{equation*}
            \sigma_{\text{cond}}^2-\E[\sigma_{\text{cond}}^2] = \underbrace{V^\top M_A V - \E[V^\top M_A V]}_{\vcentcolon= Q} \underbrace{- 2 c_2^2 \beta \sqrt{n-2f} z_1}_{\vcentcolon= L}.
        \end{equation*}
        To bound $\Prob(|Q + L| \geq t_A)$, we apply the union bound $\Prob(|Q| \geq t_A/2) + \Prob(|L| \geq t_A/2)$. 
        For the quadratic form $Q$, we apply the Hanson-Wright inequality, with $\|M_A\|_{\text{F}}^2 = v_1^4 (n-2f-1) + v_2^4$ and $\|M_A\|_{\operatorname{sp}} = v_1^2$. 
        Substituting $t_A = \min\{ \frac{\tau}{3(d-1)}, \sqrt{\frac{\tau}{3}} \} = \min\{ \frac{\delta U_{12}}{24(d-1)}, \sqrt{\frac{\delta U_{12}}{24}} \}$, the exponent for $\Prob(|Q| \geq t_A/2)$ is
        \begin{equation*}
            E_Q \vcentcolon= C_A \min \left\{ 
                    \frac{\delta^2 U_{12}^2}{2304 \|M_A\|_F^2 (d-1)^2}, 
                    \frac{\delta U_{12}}{96 \|M_A\|_F^2}, 
                    \frac{\delta U_{12}}{48 v_1^2 (d-1)}, 
                    \frac{\sqrt{\delta U_{12}}}{2\sqrt{24} v_1^2} 
                \right\}.
        \end{equation*}
        For $L \sim \mathcal{N}(0, 4 v_2^4 \beta^2 (n-2f))$, we apply the Gaussian tail bound $\exp(-\frac{(t_A/2)^2}{2\sigma_L^2})$, 
        \begin{equation*}
            E_L \vcentcolon= \min \left\{ 
                    \frac{\delta^2 U_{12}^2}{18432 v_2^4 \beta^2 (n-2f) (d-1)^2}, 
                    \frac{\delta U_{12}}{768 v_2^4 \beta^2 (n-2f)} 
                \right\}.
        \end{equation*}

        \ 

        \textit{Wrapping-up.}
        Applying the union bound, we have 
        \begin{align*}
            \Prob\left( |\Delta_{12}| \geq \frac{\delta}{4} \right) 
            &\leq \Prob(|W - \E[W]| \geq t_B) + \Prob(|Q| \geq t_A/2) + \Prob(|L| \geq t_A/2) \\
            &\leq 2e^{-E_B} + 2e^{-E_Q} + 2e^{-E_L} \\
            &\leq 6\exp\left( -\min\{E_B, E_Q, E_L\} \right).
        \end{align*}

        \textbf{Wrapping the high probability bounds.} 
        Crucially, conditioned on the intersection of the success events ($E_{2,t} \cap E_{3,t}$), we have
        \begin{equation}
            \lambda_{\max}(\Sigma_{S_t}) \leq \lambda_{\max}(M_{\text{exp}}) + \delta \leq (\Lambda - 2\delta) + \delta = \Lambda - \delta < \lambda_{\max}(\Sigma_{\mathcal{H}}).
        \end{equation}
        This proves that the attack systematically lures $\operatorname{SMEA}$ with high probability, masking their attack beneath the honest spectral norm.
        Finally, we search for the bottleneck exponent
        \begin{equation*}
            E_{\text{SMEA}} \vcentcolon= \min \{ E_2, E_{11}, E_{22}, E_{12} \},
        \end{equation*}
        where 
        \begin{align*}
            E_2 \ &\vcentcolon= c_2 \frac{\delta^2 (n-f)}{\sigma_{\text{DP}}^2 \Lambda}, \\
            E_{11} &\vcentcolon= \tilde{c}_4 \min\left\{ \frac{\delta^2 (n-f)^2}{64 \sigma_{\text{DP}}^4 (n-2f)}, \frac{\delta (n-f)}{8 \sigma_{\text{DP}}^2}, \frac{\delta^2 (n-f)^3}{128 f (n-f-1) \beta^2 \sigma_{\text{DP}}^4} \right\}, \\
            E_{22} &\vcentcolon= \tilde{c}_5 \frac{\delta^2 (n-f)}{\sigma_{\text{DP}}^2 \Lambda}, \\
            E_{12} &\vcentcolon= \min\{E_B, E_Q, E_L\},
        \end{align*}
        Recall that 
        \begin{equation*}
            \mathbb{P}(E_{2,t}^\mathsf{c}) \leq e^{-E_{2}},
        \end{equation*}
        and
        \begin{equation*}
            E_{3,t}^\mathsf{c} \subset \left\{ |\Delta_{11}| \geq \frac{\delta}{4} \right\} \cup \left\{ |\Delta_{22}| \geq \frac{\delta}{4} \right\} \cup \left\{ |\Delta_{12}| \geq \frac{\delta}{4} \right\}.
        \end{equation*}
        Applying the union bound yields
        \begin{equation*}
            \mathbb{P}(E_{3,t}^\mathsf{c}) \leq 4e^{-E_{11}} + 2e^{-E_{22}} + 6e^{-E_{12}}.
        \end{equation*}
        Consequently, we finally have
        \begin{equation*}
            \mathbb{P}(E_{2,t}^\mathsf{c} \cap E_{2,t}^\mathsf{c} ) 
            \geq 1 - \mathbb{P}(E_{2,t}^\mathsf{c}) - \mathbb{P}(E_{3,t}^\mathsf{c})
            \geq 1 - 13 e^{-E_{\operatorname{SMEA}}}.
        \end{equation*}

        \textbf{The scaling of $E_{\text{SMEA}}$.}
        Observe that, substituting $\delta = c_\delta \Lambda$, all exponents are proportional to  
        \begin{equation*}
            R \vcentcolon= \frac{\Lambda (n-f)}{\sigma_{\text{DP}}^2},
        \end{equation*}
        where, using~\eqref{eq:true-lambda-ratio} and Assumption~\ref{assump:lowerbound-d-high} gives 
        \begin{equation}\label{eq:R-scaling}
            R = c_\Lambda^2 \left( \sqrt{d} + \sqrt{n-f-1} \right)^2 \geq c_\Lambda^2 \max\{ (1+\sqrt{\mu})^2 (n-f-1) , (1 + \sqrt{\max\{\nu^{-1} - 1, 0\}})^2 d\}.
        \end{equation} 
        Indeed, $E_2 = c_2 c_\delta^2 R$, $E_{22} = \tilde{c}_5 c_\delta^2 R$. 
        Moreover, using~\eqref{eq:beta-def} the definition of $\beta$,
        \begin{equation*}
            E_{11} = \tilde{c}_4 \min\left\{ \frac{c_\delta^2}{64} \frac{1}{(1-\alpha)} \frac{R}{n-f}, \; \frac{c_\delta}{8}, \; \frac{c_\delta^2}{128 c^2} (1-\alpha) \frac{n-f}{n-f-1} \right\} R.
        \end{equation*}
        Note that $1-\alpha > \frac{1}{2}$, $\frac{1}{1-\alpha} \geq \frac{1}{1-\alpha_{\min}}$, $\frac{R}{n-f} = \frac{\Lambda}{\sigma_{\text{DP}}^2} \geq c_\Lambda^2 \frac{d}{n-f} \geq c_\Lambda^2 \mu$, hence 
        \begin{equation*}
            E_{11} \geq \frac{\tilde{c}_4 c_\delta}{8} \min\left\{ \frac{c_\delta c_\Lambda^2 \mu}{8 (1-\alpha_{\min})}, 1, \frac{c_\delta}{32 c^2 } \right\} R.
        \end{equation*}

        For $E_{12} = \min\{E_B, E_Q, E_L\}$, 
        to recover $E_{12} \in \Omega(R)$, we actually require to revisit how we fix $t_A$ and $t_B$ as we treat them symmetrically but the yield different concentration.
        Indeed, fixing $\frac{t_B^2}{d-1} = \frac{t_A^2}{\|M_A\|_{\operatorname{F}}}$ and $t_A t_B = \frac{\tau}{3}$ yields $E_{12} \in \Omega(R)$.


\end{itemize}

\ 


Finally, as explained in the beginning of the paragraph, under $E_1$ and the trajectory produced by $S'$ (i.e., with $g_t^{(1)\prime} = \frac{1}{m} \sum_{j=1}^m z^{(1,j)\prime} + \xi^{(1)}_t = -\frac{C}{m}e_1 + \xi^{(1)}_t$), we do an analogous analysis as above, in the opposite direction (i.e., $-e_1$). 
That is, due to~\Cref{alg:algo1-d-high}, the Byzantine participants successfully steer the optimization trajectory in the negative direction. 
Otherwise, in the complementary event $E^{\mathsf{c}}_1$, the influence of Byzantine participants is considered negligible, and any potential contribution to instability from this case is disregarded.

\paragraph{Consequence for the lower bound.} 
Recall that we control the probability of the event $E$, defined as the intersection between the successful hypothesis test $E_1$ and the high-probability event under which $\mathrm{SMEA}$ continuously selects $S_t$,
\[ 
    E = E_1 \cap \bigcap_{t\in\{n_{\text{test}} + 1, \ldots, T\}} \big( E_{2, t} \cap E_{3, t}\big).
\]
For $t, s \in\{n_{\text{test}} + 1, \ldots, T\}$, $t \neq s$, $E_1$ is independent of $E_{2,t}$ and $E_{3,t}$, $E_{2,t} \cap E_{3,t}$ is independent of $E_{2,s} \cap E_{3,s}$.
Consequently, we have
\begin{align}\label{eq:lb-dynamic-hp}
    \mathbb{P}\left( E \right) 
    &= \mathbb{P}\left(E_1\right) \prod_{t\in\{n_{\text{test}} + 1, \ldots, T\}} \mathbb{P}\left(E_{2, t} \cap E_{3, t}\right) \nonumber \\
    &\geq \mathbb{P}\left(E_1\right) \prod_{t\in\{n_{\text{test}} + 1, \ldots, T\}} \left(1 - \mathbb{P}\left(E_{2, t}^\mathsf{c} \right) - \mathbb{P}\left( E_{3, t}^\mathsf{c}\right) \right) \nonumber \\
    &\geq \mathbb{P}\left(E_1\right) {\left( 1 - 13 e^{-E_{\text{SMEA}}} \right)}^{T-n_{\text{test}}} 
    \in \Omega(1)  \text{ if on only if } T \in \mathcal{O}\left( e^{E_{\text{SMEA}}} \right),
\end{align}
which is guaranteed by~\Cref{assump:lowerbound-d-high}.

Conditioned on $E$, the accumulated parameter divergence occurs along the $e_1$ coordinate,
\begin{align*}
    \mathbb{E}\left[ \| \theta_T - \theta^\prime_T \|_2 | E \right] 
    &\geq \mathbb{E}\left[ | (\theta_T - \theta^\prime_T)^\intercal e_1 | | E \right] \\
    &\geq \frac{\gamma}{n-f} \sum_{t=n_{\text{test}} + 1}^T f \beta \sigma_{\text{DP}} \\
    &= c (1-p_{\text{test}}) \gamma T \sigma_{\text{DP}} \frac{f}{n-f} \frac{n-f}{\sqrt{f(n-2f)}} \frac{\sqrt{\Lambda}}{\sigma_{\text{DP}}} \\
    &= c c_\Lambda (1-p_{\text{test}}) \gamma T \sigma_{\text{DP}} \sqrt{\frac{f}{n-2f}} \sqrt{\frac{n-f-1}{n-f}} \left( 1 + \sqrt{\frac{d}{n-f-1}} \right) \\
    &\geq c c_\Lambda (1-p_{\text{test}}) \gamma T \sigma_{\text{DP}} \sqrt{\frac{f}{n-2f}} \left( 1 + \sqrt{\frac{d}{n-f}} \right).
\end{align*}
Because~\eqref{eq:lb-dynamic-hp}, we have $\mathbb{P}\left( E \right) \in \Omega(1)$, which yields 
\begin{align*}
    \sup_{z\in \widebar B(0, C)} \mathbb{E}\left[ \left| \ell(\theta_T, z) - \ell(\theta^\prime_T, z) \right| \right] 
    & \geq C \mathbb{E}\left[ \left\| \theta_T - \theta^\prime_T \right\| | E \right] \mathbb{P}\left( E \right) \\
    & \in \Omega \left( \gamma T C \sigma_{\text{DP}} \sqrt{\frac{f}{n-2f}} \left( 1 + \sqrt{\frac{d}{n-f}} \right) \right).
\end{align*}

\paragraph{Gluing lower bounds.} Moreover, when $\sigma_{\operatorname{DP}} = 0$, the uniform stability is lower bounded by $\Omega\left(\gamma C^2 T \left( \frac{1}{(n-f)m} + \sqrt{\frac{f}{n-2f}}\right)\right)$ when $\frac{n}{3} \leq f < \frac{n}{2}$, 
and by $\Omega\left(\gamma C^2 T \left( \frac{1}{(n-f)m} + \frac{f}{n-2f} \right)\right)$ when $f < \frac{n}{3}$~\citep[Theorem 3.2]{boudou2025generalization}. 
We conclude the proof with the formula $\max\{a,b\} \geq \frac{a+b}{2} \in \Omega(a+b)$, i.e.\ when $\frac{n}{3} \leq f < \frac{n}{2}$ the uniform stability is lower bounded by
\begin{equation*}
    \Omega\left( \gamma C^2 T \left( \frac{1}{(n-f)m} + \sqrt{\frac{f}{n-2f}} \left( 1 + \frac{\sigma_{\operatorname{DP}}}{C} \left( 1 + \sqrt{\frac{d}{n-f}} \right) \right) \right) \right).
\end{equation*}
The case $f < \frac{n}{3}$ is analogous.
Note that $\sqrt{\kappa_{\operatorname{SMEA}}}$ is of the same order as $\sqrt{\frac{f}{n-2f}}$ if we assume there exist a constant $\nu > 0$ such that $f \leq \frac{n}{2+\nu}$, i.e.\ the lower bound matches the upper bound.
\end{proof}

\section{\texorpdfstring{Experimental evaluation details of section~\ref{sec:numerical}}{Experimental evaluations details of section 5}}\label{app:numerical}

We report table of the expected train and test accuracies under varying noise multipliers ($\sigma$) and Byzantine participants ($f$) shown in the main text. 
They are reported as mean $\pm$ standard error.



\begin{table}[!ht]
    \centering
    \caption{\Cref{fig:generalization_error_comparison} (a) expected train and test accuracies under varying noise multipliers ($\sigma$) and Byzantine participants ($f$). 
    Reported as mean $\pm$ standard error.}
    \label{tab:accuracy_results}
    \begin{tabular}{lcccccc}
    \toprule
    \multirow{2}{*}{Noise ($\sigma$)} & \multicolumn{2}{c}{$f=1$} & \multicolumn{2}{c}{$f=2$} & \multicolumn{2}{c}{$f=3$} \\
    \cmidrule(lr){2-3} \cmidrule(lr){4-5} \cmidrule(lr){6-7}
    & Train Acc. & Test Acc. & Train Acc. & Test Acc. & Train Acc. & Test Acc. \\
    \midrule
    32 & $10.41_{\pm 0.18}$ & $10.44_{\pm 0.17}$ & $9.74_{\pm 0.17}$ & $9.72_{\pm 0.17}$ & $9.60_{\pm 0.22}$ & $9.67_{\pm 0.20}$ \\
    16 & $12.30_{\pm 0.23}$ & $12.24_{\pm 0.21}$ & $9.60_{\pm 0.21}$ & $9.61_{\pm 0.19}$ & $9.27_{\pm 0.17}$ & $9.34_{\pm 0.18}$ \\
    8 & $16.22_{\pm 0.22}$ & $16.04_{\pm 0.21}$ & $9.43_{\pm 0.27}$ & $9.47_{\pm 0.26}$ & $8.88_{\pm 0.21}$ & $8.93_{\pm 0.20}$ \\
    4 & $22.37_{\pm 0.19}$ & $21.93_{\pm 0.19}$ & $8.10_{\pm 0.22}$ & $8.05_{\pm 0.22}$ & $6.56_{\pm 0.16}$ & $6.68_{\pm 0.17}$ \\
    2 & $31.83_{\pm 0.22}$ & $31.02_{\pm 0.20}$ & $6.67_{\pm 0.19}$ & $6.73_{\pm 0.18}$ & $2.69_{\pm 0.11}$ & $2.85_{\pm 0.11}$ \\
    1 & $44.43_{\pm 0.19}$ & $43.00_{\pm 0.17}$ & $5.01_{\pm 0.17}$ & $5.12_{\pm 0.18}$ & $2.14_{\pm 0.08}$ & $2.24_{\pm 0.09}$ \\
    0.5 & $54.13_{\pm 0.10}$ & $52.13_{\pm 0.10}$ & $4.26_{\pm 0.25}$ & $4.34_{\pm 0.24}$ & $1.44_{\pm 0.09}$ & $1.52_{\pm 0.09}$ \\
    0.25 & $59.23_{\pm 0.09}$ & $56.86_{\pm 0.07}$ & $4.35_{\pm 0.28}$ & $4.41_{\pm 0.28}$ & $0.79_{\pm 0.04}$ & $0.90_{\pm 0.05}$ \\
    0.125 & $61.22_{\pm 0.06}$ & $58.74_{\pm 0.06}$ & $11.30_{\pm 0.57}$ & $11.12_{\pm 0.55}$ & $0.47_{\pm 0.03}$ & $0.53_{\pm 0.02}$ \\
    \bottomrule
    \end{tabular}
\end{table}

\begin{table}[!ht]
    \centering
    \caption{\Cref{fig:generalization_error_comparison} (b) expected train and test accuracy under varying noise multipliers and heterogeneity $\alpha=\beta$~\cite{li2020federated_synthetic_data}.
    Reported as mean $\pm$ standard srror.}
    \label{tab:accuracy_results_alpha}
    \begin{tabular}{lcccccc}
    \toprule
    \multirow{2}{*}{Noise ($\sigma$)} & \multicolumn{2}{c}{$\alpha=1$} & \multicolumn{2}{c}{$\alpha=5$} & \multicolumn{2}{c}{$\alpha=10$} \\
    \cmidrule(lr){2-3} \cmidrule(lr){4-5} \cmidrule(lr){6-7}
    & Train Acc. & Test Acc. & Train Acc. & Test Acc. & Train Acc. & Test Acc. \\
    \midrule
    32 & $12.31_{\pm 0.46}$ & $12.34_{\pm 0.46}$ & $12.70_{\pm 0.67}$ & $12.69_{\pm 0.67}$ & $17.97_{\pm 1.19}$ & $17.94_{\pm 1.19}$ \\
    16 & $19.49_{\pm 0.61}$ & $19.52_{\pm 0.62}$ & $19.26_{\pm 0.70}$ & $19.21_{\pm 0.70}$ & $41.41_{\pm 1.18}$ & $41.38_{\pm 1.17}$ \\
    8 & $38.66_{\pm 0.71}$ & $38.61_{\pm 0.71}$ & $33.04_{\pm 0.70}$ & $32.95_{\pm 0.70}$ & $73.72_{\pm 1.02}$ & $73.62_{\pm 1.01}$ \\
    4 & $62.93_{\pm 0.52}$ & $62.87_{\pm 0.53}$ & $49.58_{\pm 0.65}$ & $49.51_{\pm 0.65}$ & $84.97_{\pm 0.44}$ & $84.87_{\pm 0.43}$ \\
    2 & $83.49_{\pm 0.37}$ & $83.43_{\pm 0.37}$ & $66.02_{\pm 0.65}$ & $65.92_{\pm 0.66}$ & $89.12_{\pm 0.18}$ & $88.99_{\pm 0.17}$ \\
    1 & $90.16_{\pm 0.08}$ & $90.07_{\pm 0.07}$ & $74.34_{\pm 0.44}$ & $74.19_{\pm 0.43}$ & $90.83_{\pm 0.03}$ & $90.69_{\pm 0.03}$ \\
    0.5 & $90.99_{\pm 0.04}$ & $90.83_{\pm 0.04}$ & $76.22_{\pm 0.28}$ & $75.97_{\pm 0.28}$ & $90.83_{\pm 0.01}$ & $90.70_{\pm 0.01}$ \\
    0.25 & $91.40_{\pm 0.05}$ & $91.11_{\pm 0.04}$ & $75.49_{\pm 0.17}$ & $75.26_{\pm 0.17}$ & $90.73_{\pm 0.00}$ & $90.61_{\pm 0.00}$ \\
    0.125 & $91.61_{\pm 0.03}$ & $91.25_{\pm 0.03}$ & $73.84_{\pm 0.23}$ & $73.66_{\pm 0.22}$ & $90.71_{\pm 0.00}$ & $90.59 _{\pm 0.00}$ \\
    \bottomrule
    \end{tabular}
\end{table}

\begin{table}[!ht]
\centering
\small{
    \caption{
        \Cref{fig:generalization_error_comparison} (c) expected train and test accuracies ($\%$) under varying noise multipliers ($\sigma$) and dimensionality ($d$). 
        These values, reported as mean $\pm$ standard error, are from~\Cref{fig:generalization_error_comparison}, right panel.
    }\label{tab:accuracy_results_d}
    \begin{tabular}{lcccccc}
    \toprule
    \multirow{2}{*}{Noise ($\sigma$)} & \multicolumn{2}{c}{$d=12$} & \multicolumn{2}{c}{$d=25$} & \multicolumn{2}{c}{$d=50$} \\
    \cmidrule(lr){2-3} \cmidrule(lr){4-5} \cmidrule(lr){6-7}
    & Train Acc. & Test Acc. & Train Acc. & Test Acc. & Train Acc. & Test Acc. \\
    \midrule
    32 & $10.77_{\pm 0.21}$ & $10.73_{\pm 0.21}$ & $12.41_{\pm 0.32}$ & $12.32_{\pm 0.32}$ & $10.41_{\pm 0.18}$ & $10.44_{\pm 0.17}$ \\
    16 & $12.98_{\pm 0.22}$ & $12.95_{\pm 0.22}$ & $18.43_{\pm 0.39}$ & $18.32_{\pm 0.40}$ & $12.30_{\pm 0.23}$ & $12.24_{\pm 0.21}$ \\
    8 & $16.44_{\pm 0.29}$ & $16.27_{\pm 0.27}$ & $29.03_{\pm 0.44}$ & $28.67_{\pm 0.45}$ & $16.22_{\pm 0.22}$ & $16.04_{\pm 0.21}$ \\
    4 & $23.51_{\pm 0.32}$ & $23.28_{\pm 0.31}$ & $43.21_{\pm 0.44}$ & $42.52_{\pm 0.45}$ & $22.37_{\pm 0.19}$ & $21.93_{\pm 0.19}$ \\
    2 & $34.67_{\pm 0.33}$ & $34.22_{\pm 0.32}$ & $59.17_{\pm 0.27}$ & $58.13_{\pm 0.28}$ & $31.83_{\pm 0.22}$ & $31.02_{\pm 0.20}$ \\
    1 & $48.85_{\pm 0.23}$ & $48.16_{\pm 0.23}$ & $71.27_{\pm 0.10}$ & $70.01_{\pm 0.11}$ & $44.43_{\pm 0.19}$ & $43.00_{\pm 0.17}$ \\
    0.5 & $57.68_{\pm 0.14}$ & $56.93_{\pm 0.14}$ & $75.20_{\pm 0.06}$ & $73.90_{\pm 0.06}$ & $54.13_{\pm 0.10}$ & $52.13_{\pm 0.10}$ \\
    0.25 & $60.80_{\pm 0.10}$ & $60.11_{\pm 0.09}$ & $76.14_{\pm 0.04}$ & $74.82_{\pm 0.03}$ & $59.23_{\pm 0.09}$ & $56.86_{\pm 0.07}$ \\
    0.125 & $62.25_{\pm 0.07}$ & $61.51_{\pm 0.07}$ & $76.60_{\pm 0.03}$ & $75.25_{\pm 0.03}$ & $61.22_{\pm 0.06}$ & $58.74_{\pm 0.06}$ \\
    \bottomrule
    \end{tabular}
}
\end{table}

\begin{table}[!ht]
\centering
\caption{\Cref{fig:mc_gen_error_mnist-lr} expected train accuracy, test accuracy, and generalization error under varying noise multipliers ($\sigma$). 
Reported as mean $\pm$ standard error across data splits.}
\label{tab:overall_results}
\begin{tabular}{lccc}
\toprule
Noise ($\sigma$) & Train Acc. (\%) & Test Acc. (\%) \\
\midrule
32 & $10.15_{\pm 0.10}$ & $10.12_{\pm 0.13}$ \\
16 & $10.37_{\pm 0.20}$ & $10.30_{\pm 0.19}$  \\
8 & $14.52_{\pm 0.30}$ & $14.10_{\pm 0.22}$  \\
4 & $31.15_{\pm 0.56}$ & $29.95_{\pm 0.38}$  \\
2 & $54.88_{\pm 0.51}$ & $52.65_{\pm 0.38}$  \\
1 & $72.59_{\pm 0.32}$ & $70.00_{\pm 0.14}$  \\
0.5 & $79.88_{\pm 0.38}$ & $77.12_{\pm 0.16}$ \\
0.25 & $82.41_{\pm 0.28}$ & $79.64_{\pm 0.12}$  \\
0.125 & $82.79_{\pm 0.43}$ & $80.06_{\pm 0.19}$ \\
\bottomrule
\end{tabular}
\end{table}

\begin{table}[!ht]
    \centering
    \caption{~\Cref{fig:mc_gen_error_cifar_cnn} expected train and test accuracy under varying noise multipliers. 
    Reported as mean $\pm$ standard error across data splits.}
    \label{tab:overall_results}
    \begin{tabular}{lccc}
    \toprule
    Noise ($\sigma$) & Train Acc. (\%) & Test Acc. (\%) \\
    \midrule
    32 & $10.10_{\pm 0.13}$ & $10.08_{\pm 0.10}$  \\
    16 & $9.90_{\pm 0.03}$ & $10.02_{\pm 0.07}$  \\
    8 & $10.10_{\pm 0.15}$ & $10.13_{\pm 0.15}$  \\
    4 & $11.12_{\pm 0.23}$ & $10.66_{\pm 0.22}$  \\
    2 & $18.87_{\pm 0.50}$ & $17.83_{\pm 0.46}$  \\
    1 & $30.10_{\pm 0.85}$ & $26.74_{\pm 0.71}$ \\
    0.5 & $32.21_{\pm 1.53}$ & $27.31_{\pm 0.71}$ \\
    0.25 & $36.41_{\pm 1.75}$ & $30.67_{\pm 1.08}$ \\
    0.125 & $34.45_{\pm 0.87}$ & $29.74_{\pm 0.75}$ \\
    \bottomrule
    \end{tabular}
\end{table}

\clearpage

\end{document}